\DeclareMathOperator*{\argmin}{arg\,min}
\DeclareMathOperator{\Div}{div}
\DeclareMathOperator{\Tr}{Tr}
\newcommand{\E}{\mathbb{E}}
\newcommand{\CF}{\mathcal{F}}
\newcommand{\CM}{\mathcal{M}}
\newcommand{\CN}{\mathcal{N}}
\newcommand{\CP}{\mathcal{P}}
\newcommand{\CJ}{\mathcal{J}}
\newcommand{\W}{\mathcal{W}}
\newcommand{\R}{\mathbb{R}}
\newcommand{\norm}[1]{\left\lVert#1\right\rVert}
\newcommand{\abs}[1]{\left |#1\right |}
\newcommand\inner[2]{\langle #1, #2 \rangle}
\newtheorem{theorem}{Theorem}
\newtheorem{corollary}{Corollary}
\newtheorem{assumption}{Assumption}
\newtheorem{definition}{Definition}
\newtheorem{lemma}{Lemma}
\newtheorem{proposition}{Proposition}
\newtheorem{remark}{Remark}
\newcommand*\diff{\mathop{}\!\mathrm{d}}
\title{Robustness and Structure Preservation in Flow-Based Generative Models via Wasserstein Path-Space Divergences}
\begin{document}

\author[1]{Ziyu Chen\thanks{Email: ziyuchen@unc.edu}}
\author[2]{Markos A. Katsoulakis\thanks{Email: markos@umass.edu}}
\author[3]{Benjamin J. Zhang\thanks{Email: bjz@unc.edu}}
\affil[1]{Department of Mathematics and School of Data and Information Sciences, University of North Carolina at Chapel Hill}
\affil[2]{Department of Mathematics and Statistics, University of Massachusetts Amherst}
\affil[3]{Department of Mathematics, Rutgers University–New Brunswick}

\date{}

\maketitle

\begin{abstract}
    We introduce a novel Wasserstein-1 ($\mathcal{W}_1$) path-space divergence for stochastic and deterministic dynamics and establish a Wasserstein Uncertainty Propagation (WUP) theorem that bounds the $\mathcal{W}_1$ distance between terminal distributions by the proposed divergence, equivalently characterized by a weighted $L^2$ discrepancy between the underlying drifts and the $\mathcal{W}_1$ distance between their initial measures. A key ingredient is a probabilistic framework combining adjoint Feynman–Kac representations with synchronous coupling (and reflection coupling on bounded domains), yielding Wasserstein stability estimates beyond existing PDE- and Girsanov-based approaches. The framework accommodates time-varying and possibly degenerate diffusion coefficients, empirical and singular measures, and remains valid in the deterministic limit of flow matching. Unlike KL-based uncertainty quantification bounds, it does not require absolute continuity of path measures and therefore remains well-defined in singular settings. As consequences of the WUP theorem, we derive $\mathcal{W}_1$ robustness and generalization bounds for score-based generative models and flow matching at both population and finite-sample levels. We further specialize the framework to group-symmetric targets, providing the first error analysis of equivariant flow-based models and the first quantitative comparison between data augmentation and equivariant inductive bias. Our analysis identifies a symmetry-aware Wasserstein path-space divergence that quantifies the model-form error induced by non-equivariant parametrizations. We prove that this error cannot be removed by additional data or training and vanishes only under equivariant architectures, establishing a precise theoretical advantage of equivariant inductive bias over data augmentation. Numerical experiments on group-symmetric Gaussian mixtures corroborate the theory.
\end{abstract}

\section{Introduction}
Flow-based generative models, including score-based diffusion models (SGMs) \cite{song2019generative,ho2020denoising,song2020score,songdenoising} and flow matching \cite{lipman2022flow}, are now the dominant paradigm for high-dimensional generative modeling. By learning a drift or velocity field that transports a simple reference distribution to a complex target, these models achieve state-of-the-art performance across image, audio, molecular, and scientific data generation. Their training reduces to a tractable regression problem, score matching for SGMs and velocity matching for flow matching, which is one of the principal reasons for their practical success. Despite this empirical progress, a rigorous understanding of when and how well they generalize, and how their generalization error depends on the discrepancies between the learned drift or velocity field and its target, remains incomplete. This question becomes especially important in settings that lie beyond the scope of many existing analyses, including singular target distributions, low-dimensional data manifolds, degenerate or vanishing diffusion coefficients, finite-sample training, and targets possessing additional structure such as symmetries or geometric constraints.

A key observation motivating this work is that flow-based generative models are fundamentally dynamical objects. The generated distribution at terminal time is the endpoint of an entire stochastic or deterministic trajectory, and therefore robustness should naturally be studied at the level of path measures rather than solely through terminal-time marginals. This perspective has a long history in uncertainty quantification and sensitivity analysis, where path-space relative entropy and relative entropy rate provide powerful tools for comparing stochastic processes and controlling errors in quantities of interest \cite{dupuis2016path,harmandaris2016path,katsoulakis2017scalable,zou2026goal}. However, these information-theoretic divergences rely on absolute continuity of path measures and may become infinite or undefined when the underlying processes are singular. Such situations arise naturally in modern generative modeling, including deterministic flow matching, degenerate diffusions, empirical target distributions, and dynamics constrained to lower-dimensional manifolds.

To address these limitations, we introduce a Wasserstein path-space divergence that provides a geometric measure of discrepancy between trajectory laws and remains meaningful even when the corresponding path measures are singular. Building on this notion, we establish a Wasserstein Uncertainty Propagation (WUP) theorem that propagates trajectory-level discrepancies to terminal-time distributional errors. Specifically, the theorem bounds the Wasserstein-1 ($\W_1$) distance between generated and target distributions in terms of the proposed path-space divergence that includes discrepancies in the underlying dynamics. The resulting framework unifies deterministic and stochastic generative models, accommodates unbounded domains and degenerate noise, and naturally incorporates structured targets.

Our approach differs fundamentally from existing theoretical analyses of flow-based generative models. Most current results proceed through Kullback–Leibler (KL) divergence and Girsanov-type arguments, deriving bounds in total variation, $\chi^2$, or Wasserstein distance as downstream consequences \cite{chensampling,lee2022convergence,chen2023improved,conforti2023score,oko2023diffusion}. While powerful, these methods require absolute continuity with respect to a reference process and typically assume nondegenerate diffusion, excluding precisely the singular and deterministic regimes that motivate flow matching and many structure-preserving generative models. A separate line of work studies convergence directly in Wasserstein distance, but currently relies on restrictive assumptions such as compact domains, prescribed noise schedules, or particular discretization schemes \cite{de2022convergence,mimikos2024score}. In contrast, our path-space Wasserstein framework yields robustness and generalization guarantees under substantially weaker assumptions and applies uniformly across both diffusion-based and deterministic flow-based settings.

This paper develops a unified $\W_1$ robustness and generalization theory. The central technical contribution is the introduction of a Wasserstein path-space divergence together with the Wasserstein Uncertainty Propagation theorem on $\R^d$. As applications, we derive $\W_1$ generalization bounds for score-based diffusion models and flow matching, establish robustness guarantees that also apply for singular and structured targets, and develop a quantitative theory of symmetry-informed generative modeling. These results provide a common framework for analyzing robustness, generalization, and structure preservation across a broad class of modern generative models. We summarize our main contributions below.

\paragraph{1. Wasserstein path-space divergence and Uncertainty Propagation (\Cref{sec:Wdivergence} and \Cref{sec:wup}).}
We introduce a Wasserstein path-space UQ bound and establish a corresponding Wasserstein Uncertainty Propagation (WUP) theorem for probability flows on $\R^d$. The starting point is a $\W_1$ stability estimate for two probability flows $\rho_1, \rho_2$ driven by different (differentiable) drift terms $\mathbf{b}_1,\mathbf{b}_2$ and starting from different initial measures $m_1, m_2$, evolving under a common but possibly time-varying and degenerate diffusion. The bound has the form
\begin{equation}\label{eq:UQ_bound}
    \W_1(\rho_1(T), \rho_2(T)) \le e^{cT}\!\left(\W_1(m_1, m_2) + \sqrt{T}\, \norm{\mathbf{b}_1 - \mathbf{b}_2}_{L^2(\rho_2)} \right),
\end{equation}
with a strengthened version when $b_1$ is the gradient of a strongly convex potential. The right-hand side naturally induces a Wasserstein path-space divergence between probability flows, and \eqref{eq:UQ_bound} shows that this trajectory-level discrepancy propagates to a quantitative bound on the terminal-time distributions. The proof is fully probabilistic: we represent the test functions appearing in the Kantorovich--Rubinstein dual representation of the $\W_1$ via the Feynman--Kac formula and control their Lipschitz constants through pathwise coupling estimates. This approach offers four advantages over the PDE-based methodology of \cite{mimikos2024score}:
\begin{enumerate}
    \item it applies to unbounded $\mathbb{R}^d$, not just on the torus or other compact domains;
    \item it accommodates time-varying diffusion coefficients $\sigma(t)$;
    \item it remains valid when $\sigma(t) \equiv 0$, covering the noiseless setting of flow matching, where the Feynman--Kac formula degenerates to the method of characteristics; and
    \item it imposes no regularity on the initial measures $m_1$ and $m_2$, which may be empirical, singular, or supported on a low-dimensional manifold.
\end{enumerate}
On bounded domains, a reflection coupling refinement (\Cref{appendix:torus}) yields a Lipschitz bound that decays exponentially in time, sharpening the corresponding result in \cite{mimikos2024score}. Moreover, the quantity appearing on the right-hand side of \eqref{eq:UQ_bound} satisfies the defining properties of a divergence on path space (\Cref{prop:pathspace}). The estimate therefore provides a direct mechanism for propagating path-space uncertainty to terminal-time distributional error. Unlike classical path-space UQ bounds based on relative entropy and the Csiszár–Kullback–Pinsker inequality \cite{dupuis2016path,zou2026goal}, the resulting Wasserstein divergence remains finite without any absolute continuity assumptions, making it applicable to singular measures such as empirical distributions and measures supported on low-dimensional manifolds.

\paragraph{2. $\W_1$ generalization bounds for SGMs and flow matching (\Cref{thm:generalization_error} and \Cref{thm:generalization_error_FM}).}
Applying the WUP theorem to the standard Ornstein--Uhlenbeck SGM yields a $\W_1$ bound between the target distribution and the generated distribution, decomposed into an early-stopping contribution, a term involving the initial mismatch with the Gaussian prior, the empirical denoising score matching loss, and the empirical $\W_1$ sample complexity rate (see \Cref{thm:generalization_error}). The same framework applied to the deterministic flow-matching dynamics yields an analogous $\W_1$ bound in a setting where Girsanov-based arguments do not apply (see \Cref{thm:generalization_error_FM}). While direct Wasserstein bounds for flow matching have been established before \cite{bentonerror,fukumizu2025flow,zhou2025error,kunkel2026distribution}, our bound is obtained as a direct corollary of the WUP, without requiring bounded support, adding a small amount of noise, or a particular construction of the velocity field. Together, \Cref{thm:generalization_error} and \Cref{thm:generalization_error_FM} show that the WUP provides a single, unified vehicle for the generalization analysis of stochastic and deterministic flow-based generative models.

\paragraph{3. Equivariant generative modeling (\Cref{sec:equivariant_SGM}).}
When the target distribution is invariant under the action of a group $G$, our framework specializes to give the first rigorous error analysis for SGMs and flow matching with symmetry, and the first quantitative comparison between data augmentation and equivariant score approximations. Adopting a model-form uncertainty quantification perspective, we decompose the $\W_1$ generalization error into four interpretable sources: (i) the deviation from equivariance (DFE) of the learned vector field, (ii) the score matching error against the symmetrized target, (iii) the $\W_1$ sample complexity under group symmetry \cite{chen2023sample,tahmasebisample,chen2025robust}, and (iv) an early-stopping and finite-horizon contribution. The DFE is, to our knowledge, the first quantitative measure of the model-form error introduced by a non-equivariant score parametrization, and an additive decomposition of the score matching objective (\Cref{prop:equidecomposition}) shows that it sits on top of the irreducible score matching error and cannot be reduced by augmentation. Combining this with structural results on equivariant score matching and the improved $\W_1$ sample complexity rates yields a strict inequality: equivariant parametrization captures the same statistical efficiency gain as data augmentation while additionally eliminating the model-form error from non-equivariance. Experiments on a $G$-invariant mixture of four Gaussians in $\mathbb{R}^2$ confirm the theory.

\subsection{Related work}
\label{sec:related-work}
\paragraph{Path-space UQ bounds for stochastic dynamics.}
A substantial body of work has developed uncertainty quantification and sensitivity analysis directly at the level of path measures. In particular, \cite{dupuis2016path,katsoulakis2017scalable,zou2026goal} introduce goal-oriented uncertainty quantification bounds based on relative entropy or the KL divergence, establishing quantitative control of observable discrepancies between stochastic processes and scalable sensitivity estimates for high-dimensional systems. A central theme of this literature is that divergences defined on trajectory space provide a natural mechanism for propagating model perturbations to errors in quantities of interest. Our work adopts a similar path-space viewpoint but replaces the information-theoretic framework with a geometric one. We introduce a Wasserstein path-space divergence that remains meaningful when path measures are mutually singular, a situation that commonly arises in deterministic flow models and generative modeling with singular targets such as empirical observations and those with low-dimensional supports. Moreover, our Wasserstein UQ bound also applies to deterministic dynamics without noise when the KL bound can fail.

\paragraph{Convergence and generalization of SGMs and flow matching.}
The quality of a generated distribution is typically measured by a probability divergence or distance. For SGMs, a first line of work bounds the KL divergence between the target and the generated distribution via Girsanov's theorem and then derives weaker bounds in TV, $\chi^2$, or $\W_1$ as downstream consequences \cite{chensampling,lee2022convergence,chen2023improved,conforti2023score,oko2023diffusion}. These bounds, however, require the target to be absolutely continuous with respect to a reference measure, which excludes singular targets supported on low-dimensional manifolds and the noiseless flow matching regime. A second line works directly in $\W_1$: \cite{de2022convergence} establishes convergence under the manifold hypothesis but relies on a specific discretization, and \cite{mimikos2024score} introduces an uncertainty quantification perspective using Bernstein-type PDE estimates that are restricted to compact domains and require the presence of a Laplacian term. For flow matching models, \cite{bentonerror,zhou2025error} provide $\W_2$ error bounds when the target has a bounded support, and \cite{fukumizu2025flow} establishes $W_p$ bounds for $1\leq p\leq2$ on $[-1,1]^d$ under smoothness assumptions. Closer to our setting, \cite{kunkel2026distribution} derives a bound similar to our \Cref{thm:generalization_error_FM}, but under the additional assumptions that the target is sufficiently smooth and that its covariance has controlled decay; these assumptions exclude singular measures and ensure the velocity field is uniformly Lipschitz, making their result a special case of our framework. Proposition 3 in \cite{albergobuilding} establishes a similar bound for flow matching to our \Cref{thm:generalization_error_FM}, albeit under the Wasserstein-2 metric. However, their argument does not appear to extend directly to a Wasserstein-2 bound for SGMs. Our $\W_1$ bounds for SGMs build on the UQ perspective of \cite{mimikos2024score} but remove the compact-domain and noise-nondegeneracy assumptions, and additionally cover deterministic flow matching models. The stochastic-interpolant framework of \cite{albergo2025stochastic} provides a related Girsanov-type KL discrepancy estimate using properties of Fokker-Planck equations, which require a nondegenerate diffusion coefficient; by contrast, our WUP theorem accommodates both the noiseless setting and singular initial measures.

\paragraph{Symmetry-preserving generative models.}
A growing body of empirical work has demonstrated the benefits of incorporating group symmetry directly into generative models, including structure-preserving GANs \cite{birrell2022structure}, equivariant normalizing flows \cite{kohler2020equivariant,garcia2021n}, equivariant flow matching \cite{klein2024equivariant}, and equivariant diffusion models \cite{lu2024diffusion}. Theoretical guarantees, however, remain limited. Existing theory has primarily focused on group-invariant GANs \cite{chen2023statistical}. For SGMs and flow matching, \cite{lu2024diffusion} and \cite{klein2024equivariant} show that the generated distribution is $G$-invariant whenever the score parametrization or velocity field is $G$-equivariant. However, these works do not quantify the resulting gains in statistical efficiency or compare equivariant parametrization against data augmentation. Meanwhile, \cite{hoogeboom2022equivariant} provides empirical evidence on molecular generation tasks showing that diffusion models with equivariant architectures outperform those relying solely on data augmentation. Our work bridges this gap by providing the first rigorous error analysis for symmetry-aware SGMs and flow matching, and by analytically demonstrating that equivariant parametrization strictly outperforms data augmentation.

\subsection{Organization}\label{sec:organization}

\Cref{sec:background} briefly introduces the background and setup. We propose a novel Wasserstein path-space UQ bound in \Cref{sec:Wdivergence} and show that it defines a divergence in the path-space. \Cref{sec:wup} establishes the Wasserstein Uncertainty Propagation (WUP) theorem on $\mathbb{R}^d$ that involves the Wasserstein path-space UQ bound. Based on the WUP, \Cref{sec:SGM} establishes population-level $\W_1$ error bounds for SGMs and flow matching, while \Cref{sec:flow matching} develops corresponding finite-sample $\W_1$ error bounds. \Cref{sec:equivariant_SGM} specializes to $G$-invariant targets based on \Cref{sec:flow matching}: \Cref{sec:equivariant_bound} gives the bound under data augmentation, \Cref{sec:HJBproperty} develops the structural properties of equivariant score matching, and \Cref{sec:wrap} carries out the quantitative comparison between equivariant parametrization and data augmentation and \Cref{sec:equivariantFM} discusses equivariant flow matching. \Cref{sec:experiments} reports numerical corroboration. All proofs are collected in \Cref{sec:proofs} and the appendices, which also contain the sharper torus version of WUP via reflection coupling (\Cref{appendix:torus}). \Cref{sec:conclusion} concludes the paper with a discussion of related future directions and open questions.

\section{Background}\label{sec:background}
In this paper, we assume the domain is $\R^d$ and denote by $\CP(\R^d)$ the space of probability measures on $\R^d$ with respect to the Borel algebra. The Wasserstein-1 metric, denoted by $\mathcal{W}_1$, in its Kantorovich–Rubinstein duality, is defined as:
\begin{equation}
    \mathcal{W}_1(\pi_1,\pi_2) = \sup_{\gamma\in\text{Lip}_1(\R^d)}\left\{\E_{\pi_1}[\gamma]-\E_{\pi_2}[\gamma]\right\}
\end{equation}
for any $\pi_1$, $\pi_2\in\CP(\R^d)$, where $\text{Lip}_1(\R^d)$ is the class of 1-Lipschitz functions on $\R^d$; that is,
\begin{equation}
    \text{Lip}_1(\R^d)\coloneq \left\{\gamma:\R^d\to\R,\,  \abs{\gamma(x)-\gamma(y)}\leq \norm{x-y}_2, \forall x,y\in\R^d\right\}.
\end{equation}
In this paper, we study generative models governed by stochastic differential equations (SDEs). Given a drift term or a vector field $\mathbf{f}(x,t)$, we consider the following pair of forward and backward diffusion processes
\begin{align}
    \diff{x_t} &= -\mathbf{f}(x_t,t)\diff{t} + \boldsymbol{\sigma}(t)\diff{W_t},\quad x_0\sim \pi; \label{eq:forward}\\
    \diff{y_t} &= \left(\mathbf{f}(y_t,T-t)+\boldsymbol{\sigma}(T-t)\boldsymbol{\sigma}(T-t)^\top\nabla\log\eta^\pi(y_t,T-t)\right)\diff{t} + \boldsymbol{\sigma}(T-t)\diff{W_t},\quad y_0\sim m_0,\label{eq:backward}
\end{align}
where $\boldsymbol{\sigma}(t)$ is a $d\times d$ matrix whose Hilbert–Schmidt norm is square-integrable in time and $W_t$ is the $d$-dimensional standard Wiener process. We denote the law in \eqref{eq:forward} by $x_t\sim \eta^\pi(\cdot,t)$, where $\pi\in \CP(\R^d)$ is an unknown data distribution, which we only have access to through finite i.i.d. samples in practice. Here, $\nabla \log \eta^\pi(x,t)$ is called the \textit{score function}. It is known from \cite{anderson1982reverse} that if $m_0=\eta^\pi(\cdot,T)$, then we have $y_t \sim \eta^\pi(\cdot,T-t)$. Note that the corresponding probability measure of the process in \eqref{eq:forward} can be formulated as the solution of the Fokker-Planck equation
\begin{equation}\label{eq:general_FP}
    \partial_t \rho-\frac{1}{2}\Tr(\boldsymbol{\sigma}(t)\boldsymbol{\sigma}(t)^\top\nabla^2\rho)-\Div(\rho \mathbf{f})=0,\,\, \rho(0)=\pi,
\end{equation}
where $\Tr$ is the trace of a matrix and $\nabla^2\rho$ is the Hessian matrix of $\rho$ in variable $x$.

We list some notation that will be used in the rest of the paper. For square matrices $A$ and $B$, we write $A\succeq B$ if $A-B$ is positive semi-definite. Moreover, for a function or a probability density $m(x,t)$ that depends on both spatial and temporal variables, we occasionally hide the spatial variables and write $m(t)$ to simplify the notation. For a twice-differentiable time-evolving vector field $\mathbf{s}(x,t) = (s^1,s^2,\dots, s^d)$ defined on $\R^d\times[0,T]$, its $C^2$-norm is defined as $\norm{\mathbf{s}}_{C^2(\R^d\times[0,T])}\coloneqq \sup_{x\in\R^d,t\in[0,T]}\norm{\mathbf{s}}_2 + \sup_{x\in\R^d,t\in[0,T]}\norm{\nabla\mathbf{s}}_2 + \sup_{x\in\R^d,t\in[0,T]}\norm{\nabla^2\mathbf{s}}_{\text{F}}$, where $\norm{\mathbf{s}}_2$ is the vector $l^2$-norm of $\mathbf{s}$, $\norm{\nabla\mathbf{s}}_2$ is the matrix $2$-norm of the Jacobian of $\mathbf{s}$, and $\norm{\nabla^2\mathbf{s}}^2_{\text{F}}\coloneqq \sum_{i=1}^d\sum_{j,k=1}^d\left(\partial^2_{j,k}s^i\right)^2$.

\section{Wasserstein path-space divergences}\label{sec:Wdivergence}
We propose a new Wasserstein UQ bound in this section.
\begin{definition}\label{def:WUQ}
    Suppose $\rho_1^{[0,T]}$ and $\rho_2^{[0,T]}$ are the path laws of the following SDEs
    \begin{align*}
        \diff{x_t} &= \mathbf{b}_1(x_t,t)\diff{t} + \boldsymbol{\sigma}(t)\diff{W_t},\quad x_0\sim m_1;\\
        \diff{y_t} &= \mathbf{b}_2(y_t,t)\diff{t} + \boldsymbol{\sigma}(t)\diff{W_t},\quad y_0\sim m_2,
    \end{align*}
    from $t=0$ to $t=T$ respectively. Assume that $\boldsymbol{\sigma}(t)$ is measurable and its Hilbert–Schmidt norm is square-integrable in time and $\sup_{x\in\R^d, t\in[0,T]}\norm{\nabla \mathbf{b}_1(x,t)}_{2}<\infty$, then we define
    \begin{equation}\label{eq: WPD}
        \Xi\left(\rho_1^{[0,T]}\| \rho_2^{[0,T]}\right)\coloneqq \mathcal{W}_1(m_1,m_2)+\sqrt{T}\left(\int_0^T\int_{\R^d} \norm{(\mathbf{b}_2-\mathbf{b}_1)(x,t)}^2 \diff{\rho_2(x,t)}\diff{t}\right)^\frac{1}{2},
    \end{equation}
    where $\rho_2(x,t)$ is the marginal law of $\rho_2^{[0,T]}$ at time $t$, and $\boldsymbol{\sigma}(t)\boldsymbol{\sigma}(t)^\top$ is positive semidefinite, without requiring uniform ellipticity, and may be degenerate or identically zero.
\end{definition}
$\Xi$ is asymmetric, and we show below that $\Xi$ actually defines a divergence in the path-space.

\begin{theorem}[Path space divergence]\label{prop:pathspace}
    Under the assumption in \Cref{def:WUQ}, we have $\Xi\left(\rho_1^{[0,T]}\| \rho_2^{[0,T]}\right) \geq 0$ and $\Xi\left(\rho_1^{[0,T]}\| \rho_2^{[0,T]}\right) = 0$ if and only if $\rho_1^{[0,T]} = \rho_2^{[0,T]}$.
\end{theorem}
\begin{proof}
    The nonnegativity of $\Xi$ is obvious. First assume $\Xi\left(\rho_1^{[0,T]}\| \rho_2^{[0,T]}\right) = 0$, then we have $\mathcal{W}_1(m_2,m_1)=0$ and $\int_0^T\int_{\R^d} \norm{(\mathbf{b}_2-\mathbf{b}_1)(x,t)}^2 \diff{\rho_2(x,t)}\diff{t}=0$. The former implies $m_1 = m_2$, and the latter implies that $\mathbf{b}_1(x,t) = \mathbf{b}_2(x,t)$ for $\diff{\rho_2(x,t)}\diff{t}$-a.e. Together with the uniqueness in law for the SDE of $x_t$ implied by the Lipschitzness of $\mathbf{b}_1$, we have $\rho_1^{[0,T]} = \rho_2^{[0,T]}$. 
    
    Conversely, suppose $\rho_1^{[0,T]} = \rho_2^{[0,T]}$, then we immediately have their marginal laws at $t=0$ coincide; that is, $m_1 = m_2$ and thus $\W_1(m_1,m_2)=0$. Let $\rho^{[0,T]} = \rho_1^{[0,T]} = \rho_2^{[0,T]}$, and let $z_t$ denote the canonical process on path space: $z_t(\omega) = \omega(t)$. Since $\rho^{[0,T]}$ is the path law of the first SDE, $M_{1,t}\coloneqq z_t-z_0-\int_0^t\mathbf{b}_1(z_t,t)\diff{s}$ is a local martingale under $\rho^{[0,T]}$. Similarly, $M_{2,t}\coloneqq z_t-z_0-\int_0^t\mathbf{b}_2(z_t,t)\diff{s}$ is a local martingale under $\rho^{[0,T]}$. Therefore, their difference
    \begin{equation*}
        M_{1,t} - M_{2,t} = -\int_0^t\left(\mathbf{b}_1(z_t,t)-\mathbf{b}_2(z_t,t)\right)\diff{s}
    \end{equation*}
    is a local martingale. The right-hand side is an ordinary time integral, hence a continuous finite-variation process. It also starts from 0. Therefore, it is a continuous finite-variation local martingale starting from $0$, so it must be identically zero, $\forall t\in[0,T]$, $\rho^{[0,T]}$ a.s. Hence, $\mathbf{b}_1(z_t,t) = \mathbf{b}_2(z_t,t)$, $\rho^{[0,T]}\otimes\diff{t}$ a.s. Since $z_t\sim\diff{\rho_2}(x,t)$, under the common path law, this implies $\mathbf{b}_1(x,t) = \mathbf{b}_2(x,t)$, $\diff{\rho_2}(x,t)\diff{t}$-a.e. Therefore, $\int_0^T\int_{\R^d} \norm{(\mathbf{b}_2-\mathbf{b}_1)(x,t)}^2 \diff{\rho_2(x,t)}\diff{t}=0$. Together, we have proved $\Xi\left(\rho_1^{[0,T]}\| \rho_2^{[0,T]}\right) = 0$ when the path laws are equal.
\end{proof}
Note that \Cref{prop:pathspace} also applies to deterministic dynamics when $\boldsymbol{\sigma}(t)\equiv 0$. Thus, $\Xi$ can be cast as a UQ bound for the sensitivity analysis for both stochastic and deterministic dynamics. In contrast to the UQ bounds that involve the KL divergence \cite{dupuis2016path, zou2026goal}, our $\W_1$ metric makes such a UQ bound well-defined even for singular measures, such as discrete observables and measures with low-dimensional support.
\section{Wasserstein Uncertainty Propagation on $\R^d$}\label{sec:wup}
To see how the Wasserstein path-space divergence defined in \eqref{eq: WPD} is derived, we start by considering two forward diffusion processes that differ in the drift terms and initial probability measures but have the same diffusion coefficients that are independent of the spatial location $x$. To quantify the difference between the probability measures that evolve according to such diffusion processes, we provide the following $\mathcal{W}_1$ error bound that takes into account the weighted $L^2$ difference in the drift terms and the $\mathcal{W}_1$ distance between the initial measures. In contrast to Theorem 3.1 in \cite{mimikos2024score}, which relies on Bernstein-type estimates for Hamilton–Jacobi–Bellman equations and is therefore restricted to bounded domains, here we use a probabilistic approach that also applies to unbounded domains and Brownian motions with time-varying coefficients. More importantly, our analysis also includes the case when there is no noise at all, i.e., $\boldsymbol{\sigma}(t)\equiv 0$, while the Bernstein methods rely on noise, i.e., the existence of a Laplacian term in the PDE.
\begin{theorem}[Wasserstein Uncertainty Propagation]\label{thm:WUP}
    Let $\mathbf{b}_1,\mathbf{b}_2:\R^d\times[0,T]\to\R^d$ be differentiable vector fields and $m_1,m_2\in\CP(\R^d)$. Probability measures $\rho_i$ $(i=1,2)$ are given by
\begin{equation}\label{eq:WUP_1}
    \partial_t \rho_i-\frac{1}{2}\Tr(\boldsymbol{\sigma}(t)\boldsymbol{\sigma}(t)^\top\nabla^2\rho_i)-\Div(\rho_i \mathbf{b}_i)=0,\,\, \rho_i(0)=m_i.
\end{equation}
Then we have 
\begin{enumerate}
    \item If $c\coloneq\sup_{x\in\R^d, t\in[0,T]}\norm{\nabla \mathbf{b}_1(x,t)}_{2}<\infty$, then 
    \begin{equation}\label{eq:WUP1}
    \mathcal{W}_1(\rho_2(T),\rho_1(T))\leq e^{cT}\cdot\Xi\left(\rho_1^{[0,T]}\| \rho_2^{[0,T]}\right);
    \end{equation}
    
    \item If the negation of the drift term $\mathbf{b}_1(x,t) = \nabla U(x,t) + \mathbf{f}(x,t)$, where $U:\R^d\times[0,T]\to\R$, and both $\nabla^2 U\succeq c I_d$ and $\norm{\nabla\mathbf{f}(x,t)}_{2}\leq A$ hold uniformly in $x\in\R^d$ and $t\in [0,T]$ for some $A,c>0$, then
    \begin{equation}\label{eq:WUP2}
    \mathcal{W}_1(\rho_2(T),\rho_1(T))\leq e^{(-c+A)T}\cdot\Xi\left(\rho_1^{[0,T]}\| \rho_2^{[0,T]}\right).
    \end{equation}
\end{enumerate}
\end{theorem}
Note that the right-hand side of \eqref{eq:WUP1} and \eqref{eq:WUP2} are exactly the Wasserstein path-space divergence in \eqref{eq: WPD}, up to multiplicative factors in the time horizon $T$.
\begin{proof}
    We begin by proving the first part. Taking the difference of \eqref{eq:WUP_1} between $i=1,2$, the measure $\lambda=\rho_1-\rho_2$ satisfies the PDE
    \begin{equation}\label{eq:lambdaPDE_0}
        \partial_t\lambda-\frac{1}{2}\Tr(\boldsymbol{\sigma}(t)\boldsymbol{\sigma}(t)^\top\nabla^2\lambda) - \Div(\lambda \mathbf{b}_1+\rho_2(\mathbf{b}_1-\mathbf{b}_2))=0\,\,\text{in}\,\,\R^d\times(0,T),\quad \lambda(\cdot,0)=m_2-m_1.
    \end{equation}
    Let $\phi:\R^d\times[0,T]\to \R$ be a test function that evolves in time. We integrate in space and time against \eqref{eq:lambdaPDE_0} and apply integration by parts to obtain
    \begin{align}\label{eq:integrationbyparts_0}
        \int_{\R^d} \lambda(x,T)\phi(x,T) - \lambda(x,0)\phi(x,0)\diff{x} & + \int_0^T\int_{\R^d} \lambda(-\partial_t\phi-\frac{1}{2}\Tr(\boldsymbol{\sigma}(t)\boldsymbol{\sigma}(t)^\top\nabla^2\phi)+\mathbf{b}_1\cdot\nabla\phi)\diff{x}\diff{t}\\
        & + \int_0^T\int_{\R^d} \rho_2\nabla\phi\cdot(\mathbf{b}_1-\mathbf{b}_2)\diff{x}\diff{t}=0.\nonumber
    \end{align}
    Notice that if we choose the test function $\phi$ to satisfy the Kolmogorov backward equation
    \begin{equation}\label{eq:KBE_0}
    -\partial_t\phi-\frac{1}{2}\Tr(\boldsymbol{\sigma}(t)\boldsymbol{\sigma}(t)^\top\nabla^2\phi)+\mathbf{b}_1\cdot\nabla\phi=0\,\, \text{in}\,\,\R^d\times[0,T),\quad\phi(x,T)=\psi(x)\,\, \text{in}\,\,\R^d
    \end{equation} 
    with terminal condition $\psi\in\CF$ for some function class $\CF$, then from \eqref{eq:integrationbyparts_0}, we have 
    \begin{equation}\label{eq:difference_equation}
        \int_{\R^d} \lambda(x,T)\psi(x)\diff{x} = \int_{\R^d} \lambda(x,0)\phi(x,0)\diff{x} + \int_0^T\int_{\R^d} \rho_2(t)\nabla\phi(x,t)\cdot(\mathbf{b}_2-\mathbf{b}_1)(t)\diff{x}\diff{t}.
    \end{equation}
    We take $\CF$ to be the class of 1-Lipschitz functions on $\R^d$. Taking the supremum on both sides of \eqref{eq:difference_equation} over $\psi\in\CF$, we have 
    \begin{equation}\label{eq:WUP_bound}
        \mathcal{W}_1(\rho_2(T),\rho_1(T))\leq \sup_{\psi\in\CF}\abs{\int_{\R^d} \lambda(x,0)\phi(x,0)\diff{x}} + \sup_{\psi\in\CF}\abs{\int_0^T\int_{\R^d} \rho_2\nabla\phi\cdot(\mathbf{b}_2-\mathbf{b}_1)\diff{x}\diff{t}}.
    \end{equation}
    The first term on the right-hand side of \eqref{eq:WUP_bound} can be bounded by $e^{\norm{\nabla \mathbf{b}_1}T}\mathcal{W}_1(m_1,m_2)$, since $\phi(x,0)$ is $e^{\norm{\nabla \mathbf{b}_1}T}$-Lipschitz by \Cref{lemma:gradient_bound_0}, and the second term can be bounded by 
    \[\sqrt{T}\norm{\nabla\phi}\norm{\mathbf{b}_2-\mathbf{b}_1}_{L^2(\rho_2)}\leq \sqrt{T} e^{\norm{\nabla \mathbf{b}_1}T}\norm{\mathbf{b}_2-\mathbf{b}_1}_{L^2(\rho_2)},
    \]
    since we can split $\abs{\rho_2\nabla\phi\cdot(\mathbf{b}_2-\mathbf{b}_1)}$ into $\sqrt{\rho_2}\norm{\nabla\phi}$ and $\sqrt{\rho_2}\norm{\mathbf{b}_2-\mathbf{b}_1}$ and apply the Cauchy-Schwarz inequality and notice that the integral of $\rho_2$ over $\R^d$ is 1.

    For the second part of the theorem, note that in this case we can apply \Cref{lemma:gradient_bound} to bound $\norm{\nabla\phi}$, and the claim follows.
\end{proof}
\begin{remark}
    There is no constraint on the initial probability measures $m_i$ in \Cref{thm:WUP}: they can be empirical, singular, or supported on a low-dimensional manifold.
\end{remark}
\begin{remark}
    Part 2 of \Cref{thm:WUP} suggests that if $\mathbf{b}_1$ is the gradient of a strongly convex potential and the error $\epsilon$ grows polynomially in $T$, then $\lim_{T\to\infty}\mathcal{W}_1(\rho_2(T),\rho_1(T))=0$. It can be understood intuitively that the structure of $b_1$ can ``overcome'' the error in the drift term.
\end{remark}
\begin{remark}\label{remark:decay}
    When the domain is compact, for example, a $d$-dimensional torus $R\mathbb{T}^d$, we can employ a non-synchronous coupling, such as the reflection coupling in \cite{eberle2016reflection}, to derive a time-exponentially decaying Lipschitz bound for \Cref{lemma:gradient_bound_0}. This allows the factors $e^{cT}$ in \eqref{eq:WUP1} to be replaced by an exponentially decaying term in $T$, thereby yielding a stronger result than \cite[Theorem 3.1]{mimikos2024score}. We prove this improved bound in \Cref{appendix:torus}. However, the reflection coupling cannot improve our bound in the unbounded domain $\R^d$ unless additional structure, such as contractivity, is assumed on $\mathbf{b}_1$.
\end{remark}
\begin{remark}
    \Cref{thm:WUP} offers a different perspective from Lemma 22 in the work on stochastic interpolants \cite{albergo2025stochastic} for quantifying discrepancies between ground truth and observables via the Girsanov-type result and KL divergence. In particular, our framework accounts not only for differences in the drift terms, but also for discrepancies between the initial measures. More importantly, \Cref{thm:WUP} remains applicable in the noiseless setting, such as flow matching with $\boldsymbol{\sigma}(t)\equiv 0$. In this case, the Feynman–Kac formula used in the proofs of \Cref{lemma:gradient_bound_0} and \Cref{lemma:gradient_bound} reduces to the method of characteristics, whereas \cite{albergo2025stochastic} requires a nondegenerate, albeit small, diffusion coefficient in front of the Laplacian. Furthermore, the $\W_1$ metric employed in \Cref{thm:WUP} is well-defined for finite samples and measures supported on low-dimensional manifolds, while the KL divergence requires absolute continuity.
\end{remark}
We provide estimations of the Lipschitz constant of the test function that appears in the proof of \Cref{thm:WUP}. Instead of applying the logarithm transform and performing Bernstein-type gradient estimations as in \cite{mimikos2024score}, we bound the gradient of the test function that solves \eqref{eq:KBE_0} using probabilistic formulations for the PDE solution.
\begin{lemma}\label{lemma:gradient_bound_0}
Suppose $\phi$ is the solution of the backward equation
\begin{equation}\label{eq:KBE_general_0}
    \partial_t\phi+\frac{1}{2}\Tr(\boldsymbol{\sigma}(t)\boldsymbol{\sigma}(t)^\top\nabla^2\phi)+\mathbf{b}\cdot\nabla\phi-a\phi=0\,\, \text{in}\,\,\R^d\times[0,T),\quad\phi(x,T)=\psi(x)\,\, \text{in}\,\,\R^d,
    \end{equation} 
    where $a\in\R$ is a constant, $\norm{\nabla \mathbf{b}(x,t)}_{2}\leq c$ uniformly in $x\in\R^d$ and $t\in[0,T)$, and $\psi$ is $L$-Lipschitz. Then $\phi(x,t)$ is $e^{(c-a)(T-t)}L$-Lipschitz for any $t\in[0,T)$. In particular, $\phi(x,t)$ is also $L$-Lipschitz for $t\in[0,T)$ when $a\geq c$.
\end{lemma}
\begin{proof}
    Using the Feynman-Kac formula, the solution of \eqref{eq:KBE_general_0} can be expressed as
    \begin{equation}
        \phi(x,t) = \E\left[e^{-\int_t^T a\diff{\tau}}\psi(x_T)\mid x_t=x\right] = e^{-a(T-t)}\E\left[\psi(x_T)\mid x_t=x\right],
    \end{equation}
    where $\diff{x_t} = \mathbf{b}(x_t,t)\diff{t}+ \boldsymbol{\sigma}(t)\diff{W_t}$. Let $\diff{y_t} = \mathbf{b}(y_t,t)\diff{t}+ \boldsymbol{\sigma}(t)\diff{W_t}$, where the increment of Brownian motion $\diff{W_t}$ is coupled with that of $\diff{x_t}$, and we have
    \begin{equation}\label{eq:F-K_0}
        \phi(x,t)-\phi(y,t) =  e^{-a(T-t)}\E\left[\psi(x_T)-\psi(y_T)\mid x_t=x,y_t=y\right],
    \end{equation}
    where the conditional expectation is taken jointly on $x_T$ and $y_T$. Since $\psi$ is $L$-Lipschitz, $\psi(x_T)-\psi(y_T)\leq L\norm{x_T-y_T}_2$.
    Thus, we have 
    \begin{align*}
        \frac{\diff{}}{\diff{t}}\norm{x_t-y_t}_2^2 
        &= 2\inner{x_t-y_t}{\frac{\diff{(x_t-y_t)}}{\diff{t}}}\\
        &= 2\inner{x_t-y_t}{\mathbf{b}(x_t,t)- \mathbf{b}(y_t,t)}\\
        &\leq 2c\norm{x_t-y_t}_2^2.
    \end{align*}
    Then, we have by the Gr\"onwall's inequality,
    \begin{equation}
        \norm{x_T-y_T}^2 
        \leq \norm{x-y}^2 e^{\int_t^T 2c \diff{\tau}} = e^{2c(T-t)}\norm{x-y}^2.
    \end{equation}
    Therefore, 
    \begin{equation}
        \psi(x_T)-\psi(y_T)\leq L\norm{x_T-y_T}\leq e^{c(T-t)}L\norm{x-y},
    \end{equation}
    and returning to \eqref{eq:F-K_0}, we have 
    \begin{equation}
        \phi(x,t) - \phi(y,t)\leq e^{-a(T-t)}e^{c(T-t)}L\norm{x-y} = e^{(c-a)(T-t)}L\norm{x-y},
    \end{equation}
    whenever $T\geq t$. Switching $x$ and $y$, we can also show that $\phi(y,t) - \phi(x,t)\leq e^{(c-a)(T-t)}L\norm{x-y}$, and therefore $\phi(x,t)$ is $e^{(c-a)(T-t)}L$-Lipschitz for $t\leq T$.
\end{proof}

\begin{lemma}\label{lemma:gradient_bound}
Suppose $\phi$ is the solution of the backward equation
\begin{equation}\label{eq:KBE_general}
    \partial_t\phi+\frac{1}{2}\Tr(\boldsymbol{\sigma}(t)\boldsymbol{\sigma}(t)^\top\nabla^2\phi)+\mathbf{b}\cdot\nabla\phi-a\phi=0\,\, \text{in}\,\,\R^d\times[0,T),\quad\phi(x,T)=\psi(x)\,\, \text{in}\,\,\R^d,
    \end{equation} 
    where $a\in\R$ is a constant, and $\mathbf{b}(x,t) = -\nabla U(x,t) + \mathbf{f}(x,t)$, where $U:\R^d\times[0,T]\to\R$, and both $\nabla^2 U\succeq c I_d$ and $\norm{\nabla\mathbf{f}(x,t)}_2\leq A$ hold uniformly in $x\in\R^d$ and $t\in [0,T]$ for some $A,c>0$, and $\psi$ is an $L$-Lipschitz function. Then  $\phi(\cdot,t)$ is $e^{-(a+c-A)(T-t)}L$-Lipschitz, and in particular, it is $L$-Lipschitz for $t\in[0,T)$ if $a+c-A\geq 0$.
\end{lemma}
\begin{proof}
    Using the Feynman-Kac formula, the solution of \eqref{eq:KBE_general} can be expressed as
    \begin{equation}
        \phi(x,t) = \E\left[e^{-\int_t^T a\diff{\tau}}\psi(x_T)\mid x_t=x\right] = e^{-a(T-t)}\E\left[\psi(x_T)\mid x_t=x\right],
    \end{equation}
    where $\diff{x_t} = \mathbf{b}(x_t,t)\diff{t}+ \boldsymbol{\sigma}(t)\diff{W_t}$. For $\diff{y_t} = \mathbf{b}(y_t,t)\diff{t}+ \boldsymbol{\sigma}(t)\diff{W_t}$, where the increment of Brownian motion $\diff{W_t}$ is coupled with that of $x_t$, we have
    \begin{equation}\label{eq:F-K}
        \phi(x,t)-\phi(y,t) =  e^{-a(T-t)}\E\left[\psi(x_T)-\psi(y_T)\mid x_t=x,y_t=y\right],
    \end{equation}
    where the conditional expectation is taken jointly on $x_T$ and $y_T$. Since $\psi$ is $L$-Lipschitz, $\psi(x_T)-\psi(y_T)\leq L\norm{x_T-y_T}$. Moreover, $\diff{(x_t-y_t)} = \left[-\left(\nabla U(x_t)-\nabla U(y_t)\right)+\left(\mathbf{f}(x_t,t)-\mathbf{f}(y_t,t)\right)\right]\diff{t}$, and by the strong convexity of $U$, we have 
    \begin{equation*}
        \inner{\nabla U(x)-\nabla U(y)}{x-y}\geq c\norm{x-y}^2.
    \end{equation*}
    Thus, we have 
    \begin{align*}
        \frac{\diff{}}{\diff{t}}\norm{x_t-y_t}^2 
        &= 2\inner{x_t-y_t}{\frac{\diff{(x_t-y_t)}}{\diff{t}}}\\
        &= 2\inner{x_t-y_t}{-\left(\nabla U(x_t)-\nabla U(y_t)\right)}+2\inner{x_t-y_t}{\mathbf{f}(x_t,t)-\mathbf{f}(y_t,t)}\\
        &\leq -2c \norm{x_t-y_t}^2 + 2A \norm{x_t-y_t}^2.
    \end{align*}
    Then, we have by the Gr\"onwall's inequality,
    \begin{equation}
        \norm{x_T-y_T}^2 
        \leq \norm{x-y}^2 e^{\int_t^T 2(A-c) \diff{\tau}} = e^{2(A-c)(T-t)}\norm{x-y}^2.
    \end{equation}
    Therefore, 
    \begin{equation}
        \psi(x_T)-\psi(y_T)\leq L\norm{x_T-y_T}\leq e^{(A-c)(T-t)}L\norm{x-y},
    \end{equation}
    and using \eqref{eq:F-K}, we have 
    \begin{equation}
        \phi(x,t) - \phi(y,t)\leq e^{-a(T-t)}e^{(A-c)(T-t)}L\norm{x-y} = e^{-(a+c-A)(T-t)}L\norm{x-y},
    \end{equation}
    whenever $T\geq t$. Switching $x$ and $y$, we can also show that $\phi(y,t) - \phi(x,t)\leq e^{-(a+c-A)(T-t)}L\norm{x-y}$, and therefore $\phi(x,t)$ is $e^{-(a+c-A)(T-t)}L$-Lipschitz for $t\leq T$.
\end{proof}

As a result of \Cref{lemma:gradient_bound}, we obtain the exponential decay of the $\mathcal{W}_1$ distance between two probability measures that follow the same SDE: $\diff{x_t} = -\alpha x_t\diff{t}+\boldsymbol{\sigma}(t)\diff{W_t}$, where $\alpha>0$ is a constant.

\begin{corollary}[Exponential decay of $\mathcal{W}_1$]\label{lemma:exponential_decay} 
Let $m_1,m_2\in\CP(\R^d)$. If $\rho_i$ for $i=1,2$ are given by
\begin{equation}\label{eq:Fokker-Planck}
    \partial_t \rho_i-\frac{1}{2}\Tr(\boldsymbol{\sigma}(t)\boldsymbol{\sigma}(t)^\top\nabla^2\rho_i)-\Div(\alpha x\cdot \rho_i)=0,\,\, \rho_i(0)=m_i,
\end{equation}
    where $\alpha>0$ is a constant, then we have 
    \begin{equation}
        \mathcal{W}_1(\rho_1(t),\rho_2(t))\leq e^{-\alpha t}\mathcal{W}_1(m_1,m_2).
    \end{equation}
\end{corollary}
\begin{proof}
    Let $\CF$ be the class of 1-Lipschitz functions on $\R^d$. Then for any $\psi\in\CF$, by \eqref{eq:difference_equation}, we have
    \begin{equation}
        \int_{\R^d} \psi(x)\diff{
        \left(\rho_1(t)-\rho_2(t)\right)}
        = \int_{\R^d} \phi(x,0) \diff{
        \left(m_1(t)-m_2(t)\right)},
    \end{equation}
    since we simply take $\mathbf{b}_1=\mathbf{b}_2=\alpha x$ in \eqref{eq:difference_equation} for this case. Note that by taking $a=0$ and $c=\alpha$ in \Cref{lemma:gradient_bound} for the KBE in \eqref{eq:KBE_0}, $\phi(\cdot,0)$ is $e^{-\alpha t}$-Lipschitz. Taking the supremum over $\psi\in\CF$, we have
    \begin{equation}
        \mathcal{W}_1(\rho_1(t),\rho_2(t))\leq e^{-\alpha t}\mathcal{W}_1(m_1,m_2).
    \end{equation}
\end{proof}

\section{Population-level $\W_1$ error bounds for flow-based models on $\R^d$}\label{sec:SGM}
In this section, we apply \Cref{thm:WUP} to derive population-level $\W_1$ error bounds for both SGMs and flow matching, quantifying the discrepancy between the target data distribution and the corresponding generated distributions. We always denote by $\pi\in\CP(\R^d)$ the target data measure that the generative model seeks to approximate. 
\subsection{Population-level bounds for score-based generative models}
In SGMs, the generated measure is $m(T)$, where $m(t)$ follows the time-reversal (denoising) diffusion process \eqref{eq:backward} with the score function $\nabla\log\eta^\pi$ replaced by its score approximation $\mathbf{s}_\theta$. In particular, we consider the forward process in \eqref{eq:forward} as the Ornstein–Uhlenbeck (OU) process
\begin{equation}\label{SDE:OU}
\diff{x_t} = - x_t\diff{t}+\sqrt{2}\diff{W_t}.
\end{equation}
Then $m(t)$ satisfies the following backward PDE
\begin{equation}\label{eq:KBE2}
    \partial_t m=-\Div\left(( x+2\mathbf{b}_\theta)\cdot m\right)+\Delta m\,\, \text{in}\,\,\R^d\times(0,T),\quad m(0) =\mathcal{N}(0,I_d)\,\, \text{in}\,\,\R^d,
\end{equation}
where $\mathbf{b}_\theta(x,t) = \mathbf{s}_\theta(x,T-t)$, and in practice, $\mathbf{s}_\theta$ is often parametrized by neural networks.

The score approximation $\mathbf{s}_\theta$ is typically obtained via score matching, which is commonly achieved by optimizing parameterized vector fields with respect to a chosen score matching objective function. For example, the \textit{denoising score matching} (DSM) \cite{vincent2011connection} objective is defined as:
\begin{equation}\label{eq:DSM}
    \CJ_D(\eta^\pi,\mathbf{s}_\theta)=\int_0^T\int_{\R^d}\int_{\R^d} \norm{\mathbf{s}_\theta-\nabla\log\eta^{x'}}^2\diff{\eta^{x'}}(s)\diff{\pi}(x')\diff{s},
\end{equation}
where $\eta^{x'}(s)$ denotes the conditional probability from $x'$ at time 0 to $x$ of \eqref{eq:forward} at time $s$. In addition, we also introduce two other variants of score matching objectives. The \textit{explicit score matching} (ESM) objective \cite{song2020score} is defined as: 
\begin{equation}\label{eq:ESM}
    \CJ_E(\eta^\pi,\mathbf{s}_\theta)=\int_0^T\int_{\R^d} \norm{\mathbf{s}_\theta-\nabla\log\eta^\pi}^2\diff{\eta^\pi}(s)\diff{s},
\end{equation}
and it is obvious that $\CJ_E(\eta^\pi,\mathbf{s}_\theta) = \CJ_D(\eta^\pi,\mathbf{s}_\theta)$ by the law of total expectation. The \textit{implicit score matching} (ISM) objective \cite{song2020sliced} is defined as:
\begin{equation}\label{eq:ISM}
    \CJ_I(\eta^\pi,\mathbf{s}_\theta)\coloneq \int_0^T\int_{\R^d} \left(\norm{\mathbf{s}_\theta}^2+2\Div(\mathbf{s}_\theta)\right)\diff{\eta^\pi(s)}\diff{s},
\end{equation}
which may be more practical for score matching. By an expansion of the square of the norm, it is easy to verify that $\CJ_D(\rho,\mathbf{s}_\theta) = \CJ_E(\rho,\mathbf{s}_\theta) =\CJ_I(\rho,\mathbf{s}_\theta) + 4\norm{\nabla \sqrt{\rho}}_{L^2(\R^d\times[0,T])}^2$ for any $\rho\in\CP(\R^d)\times[0,T]$. This suggests that the optimal solutions to the DSM, ESM and ISM coincide for the same target probability measure.

We provide a $\W_1$ error analysis for SGMs at the population level.
\begin{theorem}[$\mathcal{W}_1$ error bound for SGMs, population level]\label{thm:generalization_error_continuos}
Let $\pi$ be the target data measure and assume the noising process follows \eqref{SDE:OU}. Let $A>0$, and suppose the generated distribution $m(T)$ satisfies \eqref{eq:KBE2}. Moreover, assume $\sup_{x\in\R^d,t\in[0,T]}\norm{\nabla_x \mathbf{s}_\theta(x,t)}_2\leq A$ and let $e_{nn} = \CJ_D(\eta^{\pi},\mathbf{s}_\theta)$ be the DSM error. Then we have
\begin{equation}
    \begin{aligned}
    \mathcal{W}_1(\pi,m(T))\leq  e^{(1+2A)T}\left(e^{-T}\mathcal{W}_1\left(\pi,\mathcal{N}(0,I_d)\right) + \sqrt{T\cdot e_{nn}}\right).
    \end{aligned}
\end{equation}
\end{theorem}
The $\mathcal{W}_1$ bound in \Cref{thm:generalization_error_continuos} remains well-defined and meaningful even when the target distribution $\pi$ does not have a density; for example, $\pi$ is supported on a low-dimensional manifold. Moreover, $\mathcal{W}_1$ naturally yields a sample complexity bound that we will prove in \Cref{thm:generalization_error}.
\begin{proof}
    Recall that $\eta^{\pi}$ satisfies
    \begin{equation}\label{eq:OUevolution_continous}
        \begin{cases}
            \partial_t \eta^{\pi}-\Delta \eta^{\pi} -\Div( x\cdot\eta^{\pi})=0 \,\, \text{in}\,\,\R^d\times(0,T],\\
            \eta^{\pi}(0) = \pi \,\,\text{in}\,\,\R^d.
        \end{cases}
    \end{equation}
    We define the drift
\begin{equation*}
    \mathbf{b}^{\pi}(x,t) \coloneq \nabla\log\eta^{\pi}(x,T-t)
\end{equation*}
and let $m^\pi(x,t) = \eta^{\pi}(x,T-t)$ that satisfies the backward equation
\begin{equation}\label{eq:KBE3_continuous}
    \begin{cases}
        \partial_t m^\pi=\Delta m^\pi -  \Div\left(m^\pi( x+2\mathbf{b}^{\pi})\right),\\
        m^\pi(0) = \eta^{\pi}(T).
    \end{cases}
\end{equation}
Applying the first part of \cref{thm:WUP} to \eqref{eq:KBE2} and \eqref{eq:KBE3_continuous}, we have 
\begin{equation}
    \mathcal{W}_1\left(\pi,m(T)\right) 
    = \mathcal{W}_1\left(m^\pi(T),m(T)\right)\leq e^{(1+2A) T}\left(\mathcal{W}_1\left(m^{\pi}(0),\mathcal{N}(0,I_d)\right)+\sqrt{T}\norm{\mathbf{b}^{\pi}-\mathbf{b}_\theta}_{L^2(m^\pi)}\right).
\end{equation}
By \Cref{lemma:exponential_decay}, we have 
\begin{align*}
    \mathcal{W}_1(m^{\pi}(0),\mathcal{N}(0,I_d)) = \mathcal{W}_1(\eta^{\pi}(T),\mathcal{N}(0,I_d))\leq e^{- T}\mathcal{W}_1(\pi,\mathcal{N}(0,I_d)),
\end{align*}
since the standard Gaussian $\mathcal{N}(0, I_d)$ is the stationary measure of the forward OU process. 
\end{proof}
\begin{remark}
    The exponential factor $e^{(1+2A)T}$ originates from the first part of \Cref{thm:WUP}, which is established on the unbounded domain $\R^d$. In general, the reverse-time dynamics may amplify small perturbations of the terminal distribution. Consequently, without additional structural assumptions on $\pi$ (e.g., log-concavity), the initialization error can only be controlled through a Lipschitz stability estimate, resulting in the factor $e^{(1+2A)T}$. By contrast, on a bounded domain, this growth can be replaced by an exponentially decaying factor in $T$; see \Cref{remark:decay}. 
\end{remark}

\subsection{Population-level bounds for flow matching}
In flow matching generative models \cite{lipman2022flow}, the forward and backward processes are deterministic ODEs rather than SDEs and, in most cases, we wish to approximate the velocity field $\mathbf{v}(x,t)$, such that at the particle level, we have
\begin{equation}\label{eq:FM_particle}
    \frac{\diff{x}}{\diff{t}}=\mathbf{v}(x,t),\quad x(0)\sim\pi, \quad x(1)\sim\CN(0,I_d).
\end{equation}
Hence, in order to generate data, we would run the ODE backward in time from the standard Gaussian using the velocity $-\mathbf{v}$. The choice of $\mathbf{v}$ can vary, such as the average velocity over $x(1)\sim\CN(0,I_d)$ of the linear interpolation with a constant speed; see \cite{lipman2022flow,albergo2025stochastic} for examples. 

The particle movement \eqref{eq:FM_particle} corresponds to the continuity equation
\begin{equation}\label{eq:transport}
    \partial_t\rho + \nabla\cdot(\rho\mathbf{v})=0,\quad \rho(0)=\pi, \quad \rho(1)=\CN(0,I_d).
\end{equation}
The flow matching objective for a velocity approximation $\mathbf{s}_\theta$, similar to the ESM for SGMs, is defined as
\begin{equation}\label{eq:FM}
    \CJ_{FM}(\mathbf{v},\mathbf{s}_\theta,\rho)=\int_0^1\int_{\R^d} \norm{\mathbf{v}-\mathbf{s}_\theta}^2\diff{\rho}(s)\diff{s},
\end{equation}
where $\mathbf{v}$ and $\rho$ satisfy \eqref{eq:transport}.
Using the approximation $\mathbf{b}_\theta(x,t) = \mathbf{s}_\theta(x,1-t)$, the generation follows the equation
\begin{equation}\label{eq:FW_generation}
    \partial_t m - \nabla\cdot\left(m\mathbf{b}_\theta\right)=0,\quad m(0)=\CN(0,I_d),
\end{equation}
and $m(1)$ is the generated distribution of the flow matching model. Then we have the following error analysis for flow matching generative models at the population level.

\begin{theorem}[$\mathcal{W}_1$ error bound for flow matching, population level]\label{thm:generalization_error_FM_continuous}
Assume $\mathbf{s}_\theta$ satisfies $\norm{\nabla_x \mathbf{s}_\theta(x,t)}_2\leq A$ uniformly over $(x,t)\in\R^d\times[0,1]$ for some constant $A>0$ and let $e_{nn} = \CJ_{FM}(\mathbf{v},\mathbf{s}_\theta,\rho)$. Then we have
\begin{equation}
\mathcal{W}_1(\pi,m(1))\leq  e^{A}\sqrt{e_{nn}}.
\end{equation}
\end{theorem}
\begin{proof}
Note that $\pi$ is the solution of the time reversal of \eqref{eq:transport} at $t=1$. Thus, by a direct application of the first part of \Cref{thm:WUP}, we readily have
\begin{equation}
    \mathcal{W}_1(\pi,m(1))\leq e^A \sqrt{e_{nn}},
\end{equation}
since there is no mismatch for the initial measures, both of which are the standard Gaussian.
\end{proof}
\begin{remark}
    The bound in \Cref{thm:generalization_error_FM_continuous} for flow matching is derived using the new Wasserstein path-space divergence, and it cannot be derived via Girsanov’s theorem, since the dynamics are deterministic and contain no diffusion term. While \cite[Proposition 3]{albergobuilding} establishes a related guarantee in the Wasserstein-2 metric for deterministic flows, their analysis does not directly extend to SGMs. In contrast, our adjoint formulation provides a unified framework for both stochastic and deterministic flows. In particular, this perspective highlights that generative flows can learn low-dimensional manifold structures, on which the Wasserstein distance remains well-defined whereas the KL divergence may fail to be finite, in both stochastic and deterministic settings under a common theoretical framework.
\end{remark}
\begin{remark}
    \cite{kunkel2026distribution} constructs a velocity field $\mathbf{v}$ satisfying $\norm{\nabla_x \mathbf{v}}_2\leq A^\star$ uniformly over $(x,t)\in\R^d\times[0,1]$ for some $A^\star >0$. Therefore, by choosing a neural network architecture for $\mathbf{s}_\theta$ with $A\geq A^\star$, $e_{nn}$ can, in principle, be made arbitrarily small by the universal approximation property of neural networks.
\end{remark}

\section{Finite-sample $\W_1$ error bounds for flow-based models on $\R^d$}\label{sec:flow matching}
In this section, we investigate the finite-sample performance of SGMs and flow matching on $\R^d$. While the results of the previous section characterize the population-level $\W_1$ approximation error relative to the target distribution $\pi$, practical training is based only on finitely many samples from $\pi$. We therefore analyze the additional statistical error introduced by replacing population objectives with their empirical counterparts and derive finite-sample $\W_1$ error bounds for the resulting learned generative models.
\subsection{Finite-sample bounds for score-based generative models}
We assume that, as in practical applications, the target distribution $\pi$ is observed only through $N$ i.i.d. samples $\{z_i\}_{i=1}^N$ drawn from $\pi$. The score matching or the DSM objective \eqref{eq:DSM} is then approximated such that $\eta^\pi(x,t)$ is replaced by its kernel density estimator $\eta^N(x,t)= \int P_t(x,y)\diff{\pi^N}(y)$, where $\eta^N(x,0)= \pi^N \coloneq \frac{1}{N}\sum_{i=1}^N\delta_{z_i}$ is the empirical measure, and $P_t(x,y)$ is the transition kernel of the OU process defined in \eqref{eq:transition_probability}. Since the kernel estimator does not have a well-defined score function at $t = 0$, the DSM objective is often integrated over $t \in [\epsilon, \epsilon+T]$ rather than starting from $t=0$, an example of early stopping in SGMs \cite{song2020score}. Specifically, this is equivalent to score matching for the mollified distribution $\eta^{N,\epsilon}(x,t) \coloneq \int P_t(x,y)\diff{\eta^N(y,\epsilon)}$ for $0\leq t\leq T$.

For the error analysis for SGMs in the finite-sample regime, we assume that the target data measure $\pi$ is sub-Gaussian; that is, $\pi$ satisfies the following lemma.
\begin{assumption}\label{assumption:tail}
    There exists $a>0$, such that
    \begin{equation}
        \int_{\R^d}e^{a\norm{x}^2}\diff{\pi(x)}<\infty.
    \end{equation}
\end{assumption}
For the proof of \Cref{thm:generalization_error}, we define the following quantity for $\pi$:
\begin{equation}
    R_\pi(\epsilon)\coloneq\int_{\R^d}\frac{\int_{\R^d}e^{-\frac{\norm{x-y}^2}{\epsilon}}\diff{\pi(y)}}{\int_{\R^d}e^{-\frac{\norm{x-y}^2}{2\epsilon}}\diff{\pi(y)}}\diff{x}.
\end{equation}
Then, we have the following lemma.
\begin{lemma}\label{lemma:finite_early_stopping}
    Suppose that $\pi$ satisfies \Cref{assumption:tail}, then there exists $\epsilon(\pi)>0$, such that $R_\pi(\epsilon)<\infty$ for any $\epsilon\geq\epsilon(\pi)$. Moreover, we have $\lim_{\epsilon\to+\infty}\epsilon^{-d/2}R_\pi(\epsilon) = (2\pi)^{d/2}$.
\end{lemma}
\begin{proof}
    Let $M_1(x) = \int_{\R^d}e^{-\frac{\norm{x-y}^2}{\epsilon}}\diff{\pi(y)}$ and $M_2(x) = \int_{\R^d}e^{-\frac{\norm{x-y}^2}{2\epsilon}}\diff{\pi(y)}$ be the numerator and the denominator of the integrand. Then we have
    \begin{align*}
        M_1(x) = e^{-\frac{\norm{x}^2}{\epsilon}}\int_{\R^d}e^{\frac{2x\cdot y}{\epsilon}-\frac{\norm{y}^2}{\epsilon}}\diff{\pi(y)}&\leq e^{-\frac{\norm{x}^2}{\epsilon}}\int_{\R^d}e^{\frac{2x\cdot y}{\epsilon}}\diff{\pi(y)}\\
        &\leq e^{-\frac{\norm{x}^2}{\epsilon}}\int_{\R^d}e^{\frac{2x\cdot y}{\epsilon}}\diff{\pi(y)}\\
        &\leq e^{-\frac{\norm{x}^2}{\epsilon} + \frac{\norm{x}^2}{a\epsilon^2}}\int_{\R^d}e^{a\norm{y}^2}\diff{\pi(y)},
    \end{align*}
    where in the last inequality we use the Cauchy-Schwartz inequality $\frac{2x\cdot y}{\epsilon} = \leq a\norm{y}^2 + \frac{\norm{x}^2}{a\epsilon^2}$.

    To provide a lower bound for $M_2(x)$, we can pick $R>0$ such that $p_R\coloneqq \pi\left(B(0,R)\right)>0$. Then, we have
    \begin{align*}
        M_2(x)&\geq \int_{\norm{y}\leq R}e^{-\frac{\norm{x-y}^2}{2\epsilon}}\diff{\pi(y)}\\
        &\geq \int_{\norm{y}\leq R}e^{-\frac{\norm{x}^2}{2\epsilon}-\frac{R\norm{x}}{\epsilon}-\frac{R^2}{2\epsilon}}\diff{\pi(y)}\\
        &= p_R e^{-\frac{\norm{x}^2}{2\epsilon}-\frac{R\norm{x}}{\epsilon}-\frac{R^2}{2\epsilon}},
    \end{align*}
    where in the last inequality we use the fact that $\norm{x-y}^2\leq\norm{x}^2+2R\norm{x}+R^2$ for $\norm{y}\leq R$.

    Therefore,
    \begin{equation}\label{eq:M1_M2}
        \frac{M_1(x)}{M_2(x)}\leq \frac{\int_{\R^d}e^{a\norm{y}^2}\diff{\pi(y)}}{p_R}e^{\frac{R^2}{2\epsilon}}\cdot e^{-\frac{\norm{x}^2}{2\epsilon}+ \frac{\norm{x}^2}{a\epsilon^2}+\frac{R\norm{x}}{\epsilon}},
    \end{equation}
    and the integral of the right-hand side in $x$ is finite whenever $2\epsilon<a\epsilon^2$; that is, $\epsilon>\frac{2}{a}$, where $a$ is the constant in \Cref{assumption:tail}.

    For the second part, we can write 
    \begin{equation}
    R_\pi(\epsilon) = \int_{\R^d} \frac{M_1(x)}{M_2(x)}\diff{x} = \epsilon^{d/2}\int_{\R^d} \frac{M_1(\sqrt{\epsilon}z)}{M_2(\sqrt{\epsilon}z)}\diff{z} \coloneqq \epsilon^{d/2}\int_{\R^d} F_\epsilon(z)\diff{z},
    \end{equation}
    and $\epsilon^{-d/2}R_\pi(\epsilon) = \int_{\R^d}F_\epsilon(z)\diff{z}$. Note that 
    \begin{align*}
        M_1(\sqrt{\epsilon}z) = \int_{\R^d}e^{-\frac{\norm{\sqrt{\epsilon}z-y}^2}{\epsilon}}\diff{\pi(y)} = e^{-\norm{z}^2}\int_{\R^d}e^{\frac{2z\cdot y}{\sqrt{\epsilon}}-\frac{\norm{y}^2}{\epsilon}}\diff{\pi(y)},
    \end{align*}
    the integrand of which can be bounded by
    \begin{equation}
        e^{\frac{2z\cdot y}{\sqrt{\epsilon}}-\frac{\norm{y}^2}{\epsilon}}\leq e^{\frac{2z\cdot y}{\sqrt{\epsilon}}} \leq a\norm{y}^2 + \frac{\norm{z}^2}{a\epsilon}\leq a\norm{y}^2 + \frac{\norm{z}^2}{a}
    \end{equation}
    by the Cauchy-Schwartz inequality we used for the first part for $\epsilon>1$. Thus, by the dominated convergence theorem, $\lim_{\epsilon\to\infty} M_1(\sqrt{\epsilon}z) = e^{-\norm{z}^2}$. Similarly, we can show that $\lim_{\epsilon\to\infty} M_2(\sqrt{\epsilon}z) = e^{-\frac{\norm{z}^2}{2}}$, and so that we have $\lim_{\epsilon\to\infty} F_\epsilon(z) = e^{-\frac{\norm{z}^2}{2}}$, and $\int_{\R^d} \lim_{\epsilon\to\infty}F_\epsilon(z)\diff{z}= (2\pi)^{d/2}$.
    Moreover, using \eqref{eq:M1_M2}, we have
    \begin{align*}
        F_\epsilon(z) = \frac{M_1(\sqrt{\epsilon}z)}{M_2(\sqrt{\epsilon}z)} &\leq \frac{\int_{\R^d}e^{a\norm{y}^2}\diff{\pi(y)}}{p_R}e^{\frac{R^2}{2\epsilon}}\cdot e^{-\frac{\norm{z}^2}{2}+ \frac{\norm{z}^2}{a\epsilon}+\frac{R\norm{z}}{\sqrt{\epsilon}}}\\
        &\leq \frac{\int_{\R^d}e^{a\norm{y}^2}\diff{\pi(y)}}{p_R}e^{\frac{R^2}{2\epsilon}}\cdot e^{-\frac{\norm{z}^2}{4}+R\norm{z}}
    \end{align*}
    when $\epsilon>\max\{4a^{-1}, 1\}$, and the right-hand side is integrable in $z$. Therefore, by the dominated convergence theorem, $\lim_{\epsilon\to\infty}\epsilon^{-d/2}R_\pi(\epsilon) = \lim_{\epsilon\to\infty}\int_{\R^d}F_\epsilon(z)\diff{z} = \int_{\R^d} \lim_{\epsilon\to\infty}F_\epsilon(z)\diff{z} = (2\pi)^{d/2}$.
\end{proof}
The following lemma provides a bound on the population-level error resulting from early stopping.
\begin{lemma}\label{lemma:d1_pi_pi_epsilon}
    Let $\Gamma_1(\pi)$ be the first moment of $\pi$, and $\pi^\epsilon$ be the law of running the process \eqref{eq:forward} for $\epsilon$ time with $\mathbf{f}(x_t,t) = x_t$ and $\sigma(t)=\sqrt{2}$. Then for any $\epsilon>0$, we have
    \begin{equation}
        \mathcal{W}_1(\pi^\epsilon,\pi)\leq C_d\sqrt{(1-e^{-2\epsilon})}+ (1-e^{-\epsilon})\Gamma_1(\pi),
    \end{equation}
    where $C_d>0$ is a dimensional constant.
\end{lemma}
\begin{proof}
    We can formulate $\pi^\epsilon$ as
    \begin{equation}\label{eq:pi_epsilon}
        \pi^\epsilon(x) = \int_{\R^d} P_\epsilon(x,y)\diff{\pi(y)},
    \end{equation}
    where 
    \begin{equation}\label{eq:transition_probability}
    P_\epsilon(x,y) = [2\pi(1-e^{-2\epsilon})]^{-\frac{d}{2}}e^{-\frac{\norm{x-e^{-\epsilon}y}^2}{2(1-e^{-2\epsilon})}}
    \end{equation}
    is the transition probability of the OU process.
    Then, for any 1-Lipschitz function $g$ we have
    \begin{align*}
        \int_{\R^d}g(x)\diff{(\pi^\epsilon-\pi)}&=\int_{\R^d}g(x)\diff{\pi^\epsilon(x)}-\int_{\R^d}g(x)\diff{\pi(x)}\\
        &= \int_{\R^d}\int_{\R^d}g(x)P_\epsilon(x,y)\diff{\pi(y)}\diff{x} - \int_{\R^d}g(x)\diff{\pi(x)}\\
        &= \int_{\R^d}\int_{\R^d}g(y)P_\epsilon(y,x)\diff{y}\diff{\pi(x)} - \int_{\R^d}g(x)\diff{\pi(x)}\\
        &= \int_{\R^d}\left(\int_{\R^d}g(y)P_\epsilon(y,x)\diff{y} - g(x)\right)\diff{\pi(x)},
    \end{align*}
    where in the third equality we apply a change of variables $x\mapsto y$ and $y\mapsto x$, and Fubini's theorem guarantees the switch of the order of integration. To bound the integrand, we have
    \begin{align*}
        \abs{\int_{\R^d}g(y)P_\epsilon(y,x)\diff{y} - g(x)}
        &= \abs{\int_{\R^d}[2\pi(1-e^{-2\epsilon})]^{-\frac{d}{2}}e^{-\frac{\norm{y-e^{-\epsilon}x}^2}{2(1-e^{-2\epsilon})}}g(y)\diff{y} - g(x)}\\
        &= \abs{\int_{\R^d}[2\pi(1-e^{-2\epsilon})]^{-\frac{d}{2}}e^{-\frac{\norm{y}^2}{2(1-e^{-2\epsilon})}}g(y+e^{-\epsilon}x)\diff{y} - g(x)}\\
        &= \abs{\int_{\R^d}[2\pi(1-e^{-2\epsilon})]^{-\frac{d}{2}}e^{-\frac{\norm{y}^2}{2(1-e^{-2\epsilon})}}\left[g(y+e^{-\epsilon}x)-g(x)\right]\diff{y}}\\
        &\leq \int_{\R^d}[2\pi(1-e^{-2\epsilon})]^{-\frac{d}{2}}e^{-\frac{\norm{y}^2}{2(1-e^{-2\epsilon})}}\abs{g(y+e^{-\epsilon}x)-g(x)}\diff{y}\\
        &\leq \int_{\R^d}[2\pi(1-e^{-2\epsilon})]^{-\frac{d}{2}}e^{-\frac{\norm{y}^2}{2(1-e^{-2\epsilon})}}\left(\norm{y} +(1-e^{-\epsilon})\norm{x}\right)\diff{y}\\
        &= C_d\sqrt{(1-e^{-2\epsilon})}+(1-e^{-\epsilon})\norm{x}.
    \end{align*}
    Therefore, we have 
    \begin{align*}
        \abs{\int_{\R^d}g(x)\diff{(\pi^\epsilon-\pi)}}
        &\leq \int_{\R^d}\abs{\int_{\R^d}g(y)P_\epsilon(y,x)\diff{y} - g(x)}\diff{\pi(x)}\\
        &\leq \int_{\R^d} C_d\sqrt{(1-e^{-2\epsilon})}+(1-e^{-\epsilon})\norm{x}\diff{\pi(x)}\\
        &= C_d\sqrt{(1-e^{-2\epsilon})}+ (1-e^{-\epsilon})\Gamma_1(\pi).
    \end{align*}
\end{proof}
We now derive an average $\mathcal{W}_1$ error bound for SGMs when score matching is performed using empirical measures with early stopping $\epsilon>0$; that is, the DSM objective is $\CJ_D(\eta^{N,\epsilon},\mathbf{s}_\theta)$ instead of $\CJ_D(\eta^{\pi},\mathbf{s}_\theta)$. Compared with the bound in \Cref{thm:generalization_error_continuos}, additional error terms arise due to finite-sample effects and early stopping.
\begin{theorem}[$\mathcal{W}_1$ error bound for SGMs, finite-sample regime]\label{thm:generalization_error}
Assume that the target data measure $\pi$ satisfies \Cref{assumption:tail} and that the noising process follows \eqref{SDE:OU}, and the early stopping $\epsilon>\epsilon(\pi)$ defined in \Cref{lemma:finite_early_stopping}. Let $A>0$, and suppose the generated distribution $m(T)$ satisfies \eqref{eq:KBE2}. Moreover, assume $\sup_{x\in\R^d,t\in[0,T]}\norm{\nabla_x \mathbf{s}_\theta(x,t)}_2\leq A$ and let $e_{nn} = \CJ_D(\eta^{N,\epsilon},\mathbf{s}_\theta)$ be the DSM error with early stopping $\epsilon$ for any $N$ i.i.d. samples $\pi^N$ drawn from $\pi$. Then we have
\begin{equation}
    \begin{aligned}
    \E\left[\mathcal{W}_1(\pi,m(T))\right]\leq C_d\sqrt{1-e^{-2\epsilon}}&+ (1-e^{-\epsilon})\Gamma_1(\pi)\\ &+ e^{(1+2A)T}\left(e^{-(T+\epsilon)}\mathcal{W}_1\left(\pi,\mathcal{N}(0,I_d)\right)+\sqrt{T}\sqrt{e_{nn}'}\right),
    \end{aligned}
\end{equation}
where
\begin{equation}
    \begin{aligned}
    e_{nn}'= \E[e_{nn}]&+C_d T e^{-\epsilon}\E\left[\mathcal{W}_1(\pi^{N},\pi)\right]\norm{\mathbf{s}_\theta}^2_{C^2(\R^d\times[0,T])}\\ 
    &+ \frac{4}{ N}\left([2\pi(e^{2(\epsilon+T)}-1)]^{-\frac{d}{2}} R_{\pi}(e^{2(\epsilon+T)}-1)-1\right),
    \end{aligned}
\end{equation}
the expectation is taken over $N$ i.i.d. samples drawn from $\pi$, and the $C_d$'s appear above can be different dimensional constants.
\end{theorem}
Before we prove the theorem, we interpret each term in the upper bound:
\begin{itemize}
\item The term $C_d\sqrt{(1-e^{-2\epsilon})}+ (1-e^{-\epsilon})\Gamma_1(\pi)$ is due to early stopping $\epsilon$;

\item $e^{(1+2A)T}e^{-(T+\epsilon)}\mathcal{W}_1\left(\pi,\mathcal{N}(0,I_d)\right)$ has exponential growth in time horizon $T$. The factor $e^{(1+2A)T}$ results from the unbounded domain. 
There is a trade-off between $A$ and $e_{nn}$, which the DSM error with $N$ training samples and early stopping $\epsilon$.  Moreover, by \Cref{lemma:score_decomposition}, the worst bound is $\E[e_{nn}]\leq \E\int_{0}^T\int_{\R^d}\norm{\nabla_x\log\eta^{N,\epsilon}}^2\eta^{N,\epsilon}\diff{x}\diff{t}\leq \int_0^T\frac{4e^{-2(\epsilon+t)}}{(1-e^{-2(\epsilon+t)})^2}\Gamma_2(\pi) + \frac{2d}{1-e^{-2(\epsilon+t)}}\diff{t} = 2(\frac{1}{1-e^{-2\epsilon}}-\frac{1}{1-e^{-2(\epsilon+T)}}) \Gamma_2(\pi) + d\log\frac{e^{2(\epsilon+T)}-1}{e^{2\epsilon}-1}$, which approaches infinity when the early stopping $\epsilon\to 0$.

\item Within $e_{nn}'$, $C_d T e^{-\epsilon}\E[\mathcal{W}_1(\pi,\pi^{N})]\norm{\mathbf{s}_\theta}^2_{C^2(\R^d\times[0,T])}$ includes the $C^2$-norm of the score approximation $\mathbf{s}_\theta$ with early stopping $\epsilon$ and $A^2\leq\norm{\mathbf{s}_\theta}^2_{C^2(\R^d\times[0,T])}$ is assumed to be bounded for all $\theta$, and the error of the empirical estimations of $\mathcal{W}_1$ whose convergence rates have been provided in \cite{fournier2015rate}, for example, $\E[\mathcal{W}_1(\pi,\pi^{N})]\lesssim N^{-1/d}$ for $d\geq 3$; the last term of $e_{nn}'$ has a decay rate $N^{-1}$ that decays faster than $\E[\mathcal{W}_1(\pi,\pi^{N})]$ in $N$; the asymptotics that $R_\pi(e^{2(\epsilon+T)}-1)$ is of the order $e^{d(\epsilon+T)}$ when $T\to\infty$ has been provided in \Cref{lemma:finite_early_stopping}, and thus the coefficient of the $N^{-1}$ term remains bounded when $T\to\infty$.

\item When the domain is $R\mathbb{T}^d$, the $d$-dimensional torus endowed with periodic boundary conditions, the WUP result from \Cref{appendix:torus} applies. As a consequence, the growth factor $e^{(1+2A)T}$ can be replaced by 
an exponentially decaying factor $C_2e^{-C_1T}$, where $C_1 = \frac{2}{D^2}e^{-(1+2A)D^2/8}$, $C_2 = 2e^{(1+2A)D^2/8}$ and $D=\pi R\sqrt{d}$, leading to a substantially sharper error bound than that of \cite{mimikos2024score}.
\end{itemize}
\begin{proof}
    We decompose $\mathcal{W}_1(\pi,m(T))$ as
\begin{equation}\label{ieq:oracle}
    \mathcal{W}_1(\pi,m(T))\leq \mathcal{W}_1(\pi,\pi^\epsilon)+\mathcal{W}_1(\pi^\epsilon,m(T)).
\end{equation}
By \Cref{lemma:d1_pi_pi_epsilon}, we have $\mathcal{W}_1(\pi,\pi^\epsilon)\leq C_d\sqrt{(1-e^{-2\epsilon})}+ (1-e^{-\epsilon})\Gamma_1(\pi)$.
To bound the second term on the right-hand side of \eqref{ieq:oracle}, we define $\eta^{\pi,\epsilon}:\R^d\times [0,T]\to\R$ as the solution of
\begin{equation}\label{eq:OUevolution}
    \begin{cases}
        \partial_t \eta^{\pi,\epsilon}-\Delta \eta^{\pi,\epsilon} -\Div( x\cdot\eta^{\pi,\epsilon})=0 \,\, \text{in}\,\,\R^d\times[0,T],\\
        \eta^{\pi,\epsilon}(0) = \pi^\epsilon \,\,\text{in}\,\,\R^d.
    \end{cases}
\end{equation}
Moreover, we define the drift
\begin{equation*}
    \mathbf{b}^{\pi,\epsilon}(x,t) \coloneq \nabla\log(\eta^{\pi,\epsilon})(x,T-t)
\end{equation*}
and let $m^{\epsilon}(x,t) = \eta^{\pi,\epsilon}(x,T-t)$ which satisfies the backward equation
\begin{equation}\label{eq:KBE3}
    \begin{cases}
        \partial_t m^{\epsilon}=\Delta m^{\epsilon} -  \Div\left(m^{\epsilon}( x+2\mathbf{b}^{\pi,\epsilon})\right),\\
        m^{\epsilon}(0) = \eta^{\pi,\epsilon}(T).
    \end{cases}
\end{equation}
Then applying the first part of \cref{thm:WUP} to \eqref{eq:KBE2} and \eqref{eq:KBE3}, we have 
\begin{equation}
    \mathcal{W}_1\left(\pi^\epsilon,m(T)\right) 
    = \mathcal{W}_1\left(m^\epsilon(T),m(T)\right)\leq e^{(1+2A) T}\left(\mathcal{W}_1\left(m^{\epsilon}(0),\mathcal{N}(0,I_d)\right)+\sqrt{T}\norm{\mathbf{b}^{\pi,\epsilon}-\mathbf{b}_\theta}_{L^2(m^\epsilon)}\right).
\end{equation}
By \Cref{lemma:exponential_decay}, we have 
\begin{align*}
    \mathcal{W}_1(m^{\epsilon}(0),\mathcal{N}(0,I_d)) = \mathcal{W}_1(\eta^{\pi,\epsilon}(T),\mathcal{N}(0,I_d))\leq e^{- T}\mathcal{W}_1(\pi^{\epsilon},\mathcal{N}(0,I_d))\leq e^{-(T+\epsilon)}\mathcal{W}_1(\pi,\mathcal{N}(0,I_d)),
\end{align*}
since the standard Gaussian $\mathcal{N}(0, I_d)$ is the stationary measure of the forward OU process. 

To bound $\norm{\mathbf{b}^{\pi,\epsilon}-\mathbf{b}_\theta}_{L^2(m^\epsilon)}$, we first note that, by a change of variable for time, $\norm{\mathbf{b}^{\pi,\epsilon}-\mathbf{b}_\theta}_{L^2(m^\epsilon)} = \sqrt{\CJ_D(\eta^{\pi,\epsilon},\mathbf{s}_\theta)}$. Also, recall that $\CJ_D(\eta^{\pi,\epsilon},\mathbf{s}_\theta) = \CJ_I(\eta^{\pi,\epsilon},\mathbf{s}_\theta) + 4\norm{\nabla \sqrt{\eta^{\pi,\epsilon}}}_2^2$. Thus, to utilize the DSM error on empirical measures, $\CJ_D(\eta^{N,\epsilon},\mathbf{s}_\theta)$, which is a random variable, we make the following decomposition 
    \begin{align*}
        \CJ_D(\eta^{\pi,\epsilon},\mathbf{s}_\theta) & = \CJ_D(\eta^{N,\epsilon},\mathbf{s}_\theta) + \left(\CJ_I(\eta^{\pi,\epsilon},\mathbf{s}_\theta) - \CJ_I(\eta^{N,\epsilon},\mathbf{s}_\theta)\right) + 4\left(\norm{\nabla \sqrt{\eta^{\pi,\epsilon}}}_2^2 - \norm{\nabla \sqrt{\eta^{N,\epsilon}}}_2^2\right).
    \end{align*}
    By \Cref{lemma:DSM_1}, we have 
    \begin{equation}    \CJ_I(\eta^{\pi,\epsilon},\mathbf{s}_\theta) - \CJ_I(\eta^{N,\epsilon},\mathbf{s}_\theta) \leq C_d T \mathcal{W}_1(\pi^{\epsilon},\pi^{N,\epsilon})\norm{\mathbf{s}_\theta}^2_{C^2(\R^d\times[0,T])}\leq C_d T e^{-\epsilon}\mathcal{W}_1(\pi,\pi^{N})\norm{\mathbf{s}_\theta}^2_{C^2(\R^d\times[0,T])},
    \end{equation}
    where the last inequality is due to \Cref{lemma:exponential_decay}. By \Cref{lemma:DSM_2}, we have 
    \begin{align*}
        &\norm{\nabla \sqrt{\eta^{\pi,\epsilon}}}_2^2 - \norm{\nabla \sqrt{\eta^{N,\epsilon}}}_2^2\\
        &= \int_{\R^d} \pi^\epsilon\log(\pi^\epsilon) - \pi^{N,\epsilon}\log(\pi^{N,\epsilon}) \diff{x} + \int_{\R^d} \eta^{N,\epsilon}(T)\log(\eta^{N,\epsilon}(T))  -\eta^{\pi,\epsilon}(T)\log(\eta^{\pi,\epsilon}(T))\diff{x},
    \end{align*}
    In particular, by Jensen's inequality, we have 
    \begin{equation}
        \E \int_{\R^d} - \pi^{N,\epsilon}\log(\pi^{N,\epsilon}) \diff{x}\leq \int_{\R^d} -\pi^\epsilon\log(\pi^\epsilon) \diff{x},
    \end{equation}
    since the differential entropy is a concave functional of probability measures. Thus, we have 
    \begin{equation}
    \E\left[\int_{\R^d} \pi^\epsilon\log(\pi^\epsilon) - \pi^{N,\epsilon}\log(\pi^{N,\epsilon}) \diff{x}\right]\leq 0.
    \end{equation}
    Moreover, we have
    \begin{align*}
        &\int_{\R^d} \eta^{N,\epsilon}(T)\log\left(\eta^{N,\epsilon}(T)\right)  -\eta^{\pi,\epsilon}(T)\log\left(\eta^{\pi,\epsilon}(T)\right)\diff{x}\\
        &=  \int_{\R^d} \eta^{N,\epsilon}(T)\log\left(\eta^{N,\epsilon}(T)\right)  -\eta^{N,\epsilon}(T)\log\left(\eta^{\pi,\epsilon}(T)\right)\diff{x} + \int_{\R^d} \eta^{N,\epsilon}(T)\log\left(\eta^{\pi,\epsilon}(T)\right)  -\eta^{\pi,\epsilon}(T)\log\left(\eta^{\pi,\epsilon}(T)\right)\diff{x}\\
        &= D_{\text{KL}}\left(\eta^{N,\epsilon}(T)\|\eta^{\pi,\epsilon}(T)\right) + \int_{\R^d} \left(\eta^{N,\epsilon}(T)  -\eta^{\pi,\epsilon}(T)\right)\log\left(\eta^{\pi,\epsilon}(T)\right)\diff{x},
    \end{align*}
    where $D_{\text{KL}}(p\|q)\coloneq \int\log\frac{p}{q}\diff{p}$ is the KL divergence between $p$ and $q$. Note that \[\E\int_{\R^d} \left(\eta^{N,\epsilon}(T)  -\eta^{\pi,\epsilon}(T)\right)\log(\eta^{\pi,\epsilon}(T))\diff{x}=0,\] and for the KL term, we have
    \begin{align*}
        D_{\text{KL}}\left(\eta^{N,\epsilon}(T)\|\eta^{\pi,\epsilon}(T)\right)&\leq \chi^2\left(\eta^{N,\epsilon}(T)\|\eta^{\pi,\epsilon}(T)\right)\coloneq \int_{\R^d}\frac{\left(\eta^{N,\epsilon}(T)-\eta^{\pi,\epsilon}(T)\right)^2}{\eta^{\pi,\epsilon}(T)}\diff{x}
    \end{align*}
    as it is a well-known fact that the KL divergence can be bounded from above by the chi-square distance $\chi^2$. Hence we have
    \begin{align*}
        \E\left[ D_{\text{KL}}\left(\eta^{N,\epsilon}(T)\|\eta^{\pi,\epsilon}(T)\right)\right]&\leq \int_{\R^d}\frac{\E\left(\eta^{N,\epsilon}(T)-\eta^{\pi,\epsilon}(T)\right)^2}{\eta^{\pi,\epsilon}(T)}\diff{x}\\
        &= \frac{1}{N}\int_{\R^d}\frac{\E\left(\eta^{1,\epsilon}(T)-\eta^{\pi,\epsilon}(T)\right)^2}{\eta^{\pi,\epsilon}(T)}\diff{x}\\
        &= \frac{1}{N}\left(\int_{\R^d}\frac{\E\left(\eta^{1,\epsilon}(T)\right)^2}{\eta^{\pi,\epsilon}(T)}\diff{x}-1\right).
    \end{align*}
    Furthermore, we have
    \begin{align*}
        \int_{\R^d}\frac{\E\left(\eta^{1,\epsilon}(T)\right)^2}{\eta^{\pi,\epsilon}(T)}\diff{x}
        &=\int_{\R^d}\frac{\int_{\R^d}P_{\epsilon+T}^2(x,y)\diff{\pi}(y)}{\int_{\R^d}P_{\epsilon+T}(x,y)\diff{\pi}(y)}\diff{x}\\
        &= [2\pi(1-e^{-2(\epsilon+T)})]^{-\frac{d}{2}}\int_{\R^d}\frac{\int e^{-\frac{\norm{x-e^{-(\epsilon+T)}y}^2}{(1-e^{-2(\epsilon+T)})}}\diff{\pi}(y)}{\int e^{-\frac{\norm{x-e^{-(\epsilon+T)}y}^2}{2(1-e^{-2(\epsilon+T)})}}\diff{\pi}(y)}\diff{x}\\
        &= e^{- d(\epsilon+T)}[2\pi(1-e^{-2(\epsilon+T)})]^{-\frac{d}{2}}\int_{\R^d}\frac{\int e^{-\frac{\norm{e^{-(\epsilon+T)} x-e^{-(\epsilon+T)}y}^2}{(1-e^{-2(\epsilon+T)})}}\diff{\pi}(y)}{\int e^{-\frac{\norm{e^{-(\epsilon+T)} x-e^{-(\epsilon+T)}y}^2}{2(1-e^{-2(\epsilon+T)})}}\diff{\pi}(y)}\diff{x}\\
        &= e^{- d(\epsilon+T)}[2\pi(1-e^{-2(\epsilon+T)})]^{-\frac{d}{2}}\int_{\R^d}\frac{\int e^{-\frac{\norm{x-y}^2}{(e^{2(\epsilon+T)}-1)}}\diff{\pi}(y)}{\int e^{-\frac{\norm{x-y}^2}{2(e^{2(\epsilon+T)}-1)}}\diff{\pi}(y)}\diff{x}\\
        &= [2\pi(e^{2(\epsilon+T)}-1)]^{-\frac{d}{2}} R_{\pi}(e^{2(\epsilon+T)}-1)
    \end{align*}
    where $P_{\epsilon+T}(x,y)$ is the transition probability defined in \eqref{eq:transition_probability} and $R_\pi$ is defined in \Cref{assumption:tail}.

    Combining all the estimates above together, we have
    \begin{align*}
        \E\left[\mathcal{W}_1(\pi,m(T))\right]
        &\leq C_d\sqrt{(1-e^{-2\epsilon})}+ (1-e^{-\epsilon})\Gamma_1(\pi)\\
        &\quad+ e^{(1+2A)T}\left(e^{-(T+\epsilon)}\mathcal{W}_1(\pi,\mathcal{N}(0,I_d))+\E\sqrt{T}\norm{\mathbf{b}^{\pi,\epsilon}-\mathbf{b}_\theta}_{L^2(m^\epsilon)}\right)\\
        &\leq C_d\sqrt{(1-e^{-2\epsilon})}+ (1-e^{-\epsilon})\Gamma_1(\pi) + e^{(1+2A)T}e^{-(T+\epsilon)}\mathcal{W}_1(\pi,\mathcal{N}(0,I_d))\\
        &\quad + e^{(1+2A) T}\sqrt{T}\cdot\E\sqrt{\CJ_D(\eta^{\pi,\epsilon},\mathbf{s}_\theta)}\\
        &\leq C_d\sqrt{(1-e^{-2\epsilon})}+ (1-e^{-\epsilon})\Gamma_1(\pi) + e^{(1+2A)T}e^{-(T+\epsilon)}\mathcal{W}_1(\pi,\mathcal{N}(0,I_d))\\
        &\quad + e^{(1+2A)T}\sqrt{T}\cdot\sqrt{\E\CJ_D(\eta^{\pi,\epsilon},\mathbf{s}_\theta)}\\
        &\leq C_d\sqrt{(1-e^{-2\epsilon})}+ (1-e^{-\epsilon})\Gamma_1(\pi) + e^{(1+2A)T}e^{-(T+\epsilon)}\mathcal{W}_1(\pi,\mathcal{N}(0,I_d))\\
        &\quad + e^{(1+2A)T}\sqrt{T}\cdot\sqrt{e_{nn}'},
    \end{align*}
    where 
    \begin{align*}
    e_{nn}'&= e_{nn}+C_d T e^{-\epsilon}\E[\mathcal{W}_1(\pi,\pi^{N})]\norm{\mathbf{s}_\theta}^2_{C^2(\R^d\times[0,T])}\\
    &\quad+ \frac{4}{ N}\left([2\pi(e^{2(\epsilon+T)}-1)]^{-\frac{d}{2}} R_{\pi}(e^{2(\epsilon+T)}-1)-1\right).
    \end{align*}    
\end{proof}
It follows from \eqref{eq:KBE2} that the drift term of the backward process is $x+2\mathbf{s}_\theta(x,T-t)$, and inspired by the empirical score decomposition proved in \Cref{lemma:score_decomposition} in \Cref{appendix:score_decomposition}, we can approximate the empirical score by 
\begin{equation}\label{eq:score_decomposition}
\mathbf{s}_\theta(x,t) = -[1-e^{-2(\epsilon+t)}]^{-1}x+\mathbf{v}_\theta(x,t),    
\end{equation}
where $\mathbf{v}_\theta$ approximates the residual. This decomposition is inspired by \cite[Proposition 1]{shi2024simplified}. Therefore, we can instead impose $\sup_{x\in\R^d,t\in[0,T]}\norm{\nabla_x \mathbf{v}_\theta(x,t)}_2\leq A$, and by \cref{thm:WUP}, the factor $e^{(1+2A)T}$ can be improved to $e^{(-1+2A)T}$ by noting that $[1-e^{-2(\epsilon+t)}]^{-1}>1$ or more precisely to $e^{(1+2A)T-\int_{0}^T2[1-e^{-2(\epsilon+t)}]^{-1}\diff{t}}$, which decays exponentially in $T$ when $A$ is small. The following corollary details this observation.
\begin{corollary}\label{cor:SGM_decomposition}
Assume that the target data measure $\pi$ satisfies \Cref{assumption:tail} and that the noising process follows \eqref{SDE:OU}, and the early stopping $\epsilon>\epsilon(\pi)$ defined in \Cref{lemma:finite_early_stopping}. Let $A>0$, and suppose the generated distribution $m(T)$ satisfies \eqref{eq:KBE2}. Moreover, assume $\mathbf{s}_\theta(x,t) = -[1-e^{-2(\epsilon+t)}]^{-1}x+\mathbf{v}_\theta(x,t)$, where $\sup_{x\in\R^d,t\in[0,T]}\norm{\nabla_x \mathbf{v}_\theta(x,t)}_2\leq A$ and let $e_{nn} = \CJ_D(\eta^{N,\epsilon},\mathbf{s}_\theta)$ for any $N$ i.i.d. samples $\pi^N$ drawn from $\pi$. Then we have
\begin{equation}
    \begin{aligned}
    \E\left[\mathcal{W}_1(\pi,m(T))\right]\leq C_d\sqrt{1-e^{-2\epsilon}}&+ (1-e^{-\epsilon})\Gamma_1(\pi)\\ &+ e^{(1+2A)T-2\int_{0}^T[1-e^{-2(\epsilon+t)}]^{-1}\diff{t}}\left(e^{-(T+\epsilon)}\mathcal{W}_1\left(\pi,\mathcal{N}(0,I_d)\right)+\sqrt{T}\sqrt{e_{nn}'}\right),
    \end{aligned}
\end{equation}
where
\begin{equation}
    \begin{aligned}
    e_{nn}'= \E[e_{nn}]&+C_d T e^{-\epsilon}\E\left[\mathcal{W}_1(\pi^{N},\pi)\right]\norm{\mathbf{s}_\theta}^2_{C^2(\R^d\times[0,T])}\\ 
    &+ \frac{4}{ N}\left([2\pi(e^{2(\epsilon+T)}-1)]^{-\frac{d}{2}} R_{\pi}(e^{2(\epsilon+T)}-1)-1\right),
    \end{aligned}
\end{equation}
the expectation is taken over $N$ i.i.d. samples drawn from $\pi$, and the $C_d$'s appear above can be different dimensional constants.     
\end{corollary}
\begin{proof}
    Since now the drift term for \eqref{eq:KBE2} becomes $x + 2\mathbf{s}_\theta(x,T-t) = x -2[1-e^{-2(\epsilon+t)}]^{-1}x+2\mathbf{v}_\theta(x,T-t)$, we can apply the second part of \cref{thm:WUP}, and the rest of the proof follows that of \Cref{thm:generalization_error}.
\end{proof}
By approximating the residual, the worst bound for score matching is $\E[e_{nn}]\leq \int_0^T\frac{e^{-2(\epsilon+t)}}{(1-e^{-2(\epsilon+t)})^2}\Gamma_2(\pi)\diff{t} = \frac{1}{2}(\frac{1}{1-e^{-2\epsilon}}-\frac{1}{1-e^{-2(\epsilon+T)}}) \Gamma_2(\pi)$, and is bounded due to \Cref{lemma:score_decomposition} in \Cref{appendix:score_decomposition}. As in the proof of \Cref{thm:generalization_error_continuos}, the population score function of $\eta^\pi$ can be decomposed so as to isolate the term $-[1-e^{-2t}]^{-1}x$ which generates contractivity. Nevertheless, $\int_0^T[1-e^{-2t}]^{-1}\diff{t}$ diverges at $t=0$, making the resulting estimate unbounded without early stopping. This suggests that, even at the population level, SGMs require early stopping unless additional assumptions such as log-concavity of $\pi$ are imposed.

\subsection{Finite-sample bounds for flow matching}
In the finite-sample regime, the velocity field $\tilde{\mathbf{v}}$ is designed such that
\begin{equation}\label{eq:transport_empirical}
    \partial_t\tilde{\rho} + \nabla\cdot(\tilde{\rho}\tilde{\mathbf{v}})=0,\quad \tilde{\rho}(0)=\pi^N, \quad \tilde{\rho}(1)=\CN(0,I_d),
\end{equation}
where $\pi^N$ is the empirical measure of $N$ i.i.d. samples drawn from $\pi$. Then we have the following direct bound that accounts for finite samples.
\begin{corollary}[$\mathcal{W}_1$ error bound for flow matching, finite-sample regime]\label{thm:generalization_error_FM}
Assume $\mathbf{s}_\theta$ satisfies $\norm{\nabla_x \mathbf{s}_\theta(x,t)}_2\leq A$ uniformly over $(x,t)\in\R^d\times[0,1]$ for some constant $A>0$ and for any $N$ i.i.d. samples $\pi^N$ drawn from $\pi$, let $e_{nn} = \CJ_{FM}(\tilde{\mathbf{v}},\mathbf{s}_\theta,\tilde{\rho})$. Then we have
\begin{equation}
\E\left[\mathcal{W}_1(\pi,m(1))\right]\leq  \E\left[\mathcal{W}_1(\pi,\pi^N)\right] + e^{A}\sqrt{\E[e_{nn}]},
\end{equation}
where the expectation is taken over $N$ i.i.d. samples drawn from $\pi$.
\end{corollary}
\begin{proof}
First, we have
\begin{equation}
    \mathcal{W}_1(\pi,m(1)) \leq \mathcal{W}_1(\pi,\pi^N) + \mathcal{W}_1(\pi^N,m(1)).
\end{equation}
Note that $\pi^N$ is the solution of the time reversal of \eqref{eq:transport_empirical} at $t=1$, and again by a direct application of the first part of \Cref{thm:WUP}, we have
\begin{equation}
    \mathcal{W}_1(\pi^N,m(1))\leq e^{A}\norm{\tilde{\mathbf{v}} - \mathbf{s}_\theta}_{L^2(\tilde{\rho})}\leq e^A \sqrt{e_{nn}}.
\end{equation}
\end{proof}
\begin{remark}[Lipschitzness of vector fields]
In contrast to \Cref{thm:generalization_error} for SGMs, the velocity field in flow matching is not intrinsically determined by the underlying distribution through a score function. Consequently, the quantity $\E[e_{nn}]$ depends not only on the neural network architecture but also on the particular choice of interpolation flow $\tilde{\mathbf v}$. Moreover, unlike the empirical score field, which admits the population counterpart, an empirical velocity field does not generally possess a canonical population-level analogue. As a result, the behavior of $\E[e_{nn}]$ may be substantially more sensitive to finite-sample effects. Furthermore, for many flow matching constructions, the target velocity field associated with the empirical measure $\pi^N$ may become increasingly irregular near the endpoints of the time interval. Therefore, under the architectural constraint $\sup_{(x,t)\in\mathbb{R}^d\times[0,1]}\|\nabla_x\mathbf{s}_\theta(x,t)\|_2\leq A$, the universal approximation theorem does not necessarily imply that $e_{nn}$ can be made arbitrarily small. Indeed, if the target velocity field cannot be well approximated within the prescribed Lipschitz class, then a nontrivial approximation error may remain. From this perspective, imposing a uniform Lipschitz bound can be viewed as an implicit regularization that may prevent exact memorization of the empirical measure while improving stability and generalization. It is also applicable in the SGM case \Cref{thm:generalization_error}, and it becomes even more obvious in \eqref{eq:score_decomposition}; in that sense,  the parametrization \eqref{eq:score_decomposition} takes out the singularity at $t=0, \epsilon=0$ and makes a more meaningful regularization assumption in \Cref{cor:SGM_decomposition}.
\end{remark}

\section{Provable efficiency of equivariant SGMs and equivariant flow matching}\label{sec:equivariant_SGM}
In this section, building on \Cref{sec:flow matching}, we consider generative modeling when the target distribution is invariant under a given group action. After introducing the necessary notions of group actions and symmetrization operators, we apply \Cref{thm:generalization_error} to obtain a Wasserstein-1 error bound for SGMs with data augmentation. We then develop a theory of equivariant vector fields for SGMs, establishing their key structural properties and demonstrating that equivariant parameterizations are provably superior to standard data augmentation schemes. Finally, we connect these results to equivariant flow matching, revealing a unified perspective on symmetry exploitation in flow-based generative models.

\subsection{Group actions and symmetrization operators}
Let $\CM_b(\R^d)$ be the space of bounded measurable functions on $\R^d$. A \textit{group} is a set $G$ equipped with a group product satisfying the axioms of associativity, identity, and invertibility. Given a group $G$ and a set $\R^d$, a map $\theta:G \times \R^d\to\R^d$ is called a \textit{group action on $\R^d$} if $\theta_g\coloneqq \theta(g, \cdot): \R^d\to\R^d$ is an automorphism on $\R^d$ for all $g\in G$, and $\theta_{g_2}\circ \theta_{g_1} = \theta_{g_2\cdot g_1}$, $\forall g_1, g_2\in G$. By convention, we will abbreviate $\theta(g, x)$ as $gx$ throughout the paper. 

A function $\gamma:\R^d\to \R$ is called \textit{$G$-invariant} if $\gamma\circ \theta_g = \gamma, \forall g\in G$. On the other hand, a probability measure $P\in \CP(\R^d)$ is called \textit{$G$-invariant} if $P = (\theta_g)_* P, \forall g\in G$, where $(\theta_g)_* P\coloneqq P\circ (\theta_g)^{-1}$ is the push-forward measure of $P$ under $\theta_g$. We denote the set of all $G$-invariant probability measures on $\R^d$ as $\CP_G(\R^d)\coloneqq \{P\in\CP(\R^d): P ~\text{is}~ G\text{-invariant}\}$. We make the following assumption on $G$.
\begin{assumption}\label{assumption:group}
$G$ is a group such that the mapping $g: \R^d\to\R^d$ can be written as $g(x)\mapsto A_g x$ for some orthogonal matrix $A_g\in\R^{d\times d}$ for any $g\in G,x\in\R^d$. That is, each $g\in G$ induces a linear isometry.
\end{assumption}
For example, $G$ can include rotations and reflections. Given a group action induced by a group $G$, we introduce two symmetrization operators from \cite{birrell2022structure}, that are useful for our theoretical analysis. 

\textbf{Symmetrization of functions:} $S_G:\CM_b(\R^d)\to \CM_b(\R^d)$,
\begin{equation}
    S_G[\gamma](x)\coloneq \int_G \gamma(gx)\mu_G(\diff{g}) = \E_{\mu_G}[\gamma\circ g(x)],
\end{equation}
where $\gamma\in\CM_b(\R^d)$ and $\mu_G$ is the unique Haar probability measure of $G$. Under \Cref{assumption:group}, $S_G$ is also well-defined for any continuous functions that may be unbounded on $\R^d$, since $G$ is compact.

\textbf{Symmetrization of probability measures (dual operator of $S_G$):} $S^G:\CP(\R^d)\to \CP(\R^d)$, defined for $\gamma\in\CM_b(\R^d)$ by
\begin{equation}
    \E_{S^G[P]}\gamma\coloneq \int_{\R^d} S_G[\gamma]\diff{P(x)}=\E_PS_G[\gamma].
\end{equation}
The operator $S^G$ is commonly understood as \textit{data augmentation} when $P$ is an empirical measure of finite training samples. It has been shown in \cite{birrell2022structure} that both $S_G$ and $S^G$ are projections. We also abuse the notation that if $P$ evolves with time, then $S^G[P]$ means the symmetrization of $P$ at each time.
\begin{definition}
    Given a group action induced by a group $G$ that satisfies \Cref{assumption:group}, we say a vector field $\mathbf{s}:\R^d\times[0,T]\to\R^d$ is $G$-equivariant if 
\begin{equation}
    \mathbf{s}(gx,t) = A_g\cdot \mathbf{s}(x,t)
\end{equation}
for any $x\in\R^d$ and $g\in G$.
\end{definition}
It can be easily verified that if $\rho\in\CP_G(\R^d)$, then its score $\nabla \log\rho$ is $G$-equivariant. In addition to $S_G$ and $S^G$, we propose the following symmetrization operator for vector fields.

\textbf{Symmetrization of vector fields:} 
$S_G^E: (\R^d\times[0,T]\to\R^d)\to (\R^d\times[0,T]\to\R^d)$,
\begin{equation}
    S_G^E[\mathbf{s}](x,t)\coloneq \int_G A_g^\top\cdot \mathbf{s}(gx,t)\mu_G(\diff{g})
\end{equation}
for any vector field $\mathbf{s}$, which is an extension of formula (12) in \cite{lu2024diffusion} for finite groups. It can be shown that $S_G^E[\mathbf{s}]$ is $G$-equivariant for any vector field $\mathbf{s}$. The proof can be found in \cref{sec:proofs}. By the definition of equivariance, we immediately have $S^E_G[\mathbf{s}]=\mathbf{s}$ if $\mathbf{s}$ is $G$-equivariant.

\subsection{SGMs with data augmentation}\label{sec:equivariant_bound}
We apply \Cref{thm:generalization_error} to derive a generalization bound with improved sample complexity in $\mathcal{W}_1$ for learning a $G$-invariant target distribution $\pi$, i.e., $\pi\in\CP_G(\R^d)$.

In contrast to score matching with $N$ i.i.d. samples from $\pi$, we now consider the symmetrized (augmented) empirical measure $\pi^N_G \coloneq S^G[\pi^N]$. For example, if $G$ is a finite group, then $\pi^N_G$ is an empirical measure on $\abs{G}\cdot N$ points that are not independent but conditionally independent samples. We denote by $\pi^{N,\epsilon}_G \coloneq \int P_\epsilon(x,y)\diff{\pi^N_G}(y)$ the early stopping version of the augmented empirical measure, and let $\eta^{N,\epsilon}_G:\R^d\times [0,T]\to[0,\infty)$ be the solution to
\begin{equation}
    \begin{cases}
        \partial_t \rho-\Delta  \rho - \Div(x\cdot \rho)=0 \,\, \text{in}\,\,\R^d\times(0,T],\\
        \rho(0) = \pi^{N,\epsilon}_G \,\,\text{in}\,\,\R^d.
    \end{cases}
\end{equation}
The $G$-invariance of the target distribution $\pi$ provides a significant improvement in the sample efficiency, as shown in the following corollary, when we perform score matching with respect to augmented samples $\eta^{N,\epsilon}_G$.
\begin{corollary}\label{cor:equivariantDSM}
    Assume that the target data measure $\pi$ is $G$-invariant and satisfies \Cref{assumption:tail} and that the noising process follows \eqref{SDE:OU}. Let $A>0$, and suppose the generated distribution $m(T)$ satisfies \eqref{eq:KBE2}. Moreover, assume $\sup_{x\in\R^d,t\in[0,T]}\norm{\nabla_x \mathbf{s}_\theta(x,t)}_2\leq A$ and let $e_{nn} = \CJ_D(\eta^{N,\epsilon}_G,\mathbf{s}_\theta)$ for any $N$ i.i.d. samples $\pi^N$ drawn from $\pi$. Then we have
    \begin{equation}
        \begin{aligned}
        \E\left[\mathcal{W}_1(\pi,m(T))\right]\leq C_d\sqrt{(1-e^{-2\epsilon})}&+ (1-e^{-\epsilon})\Gamma_1(\pi)\\ &+ e^{(1+2A)T}\left(e^{-(T+\epsilon)}\mathcal{W}_1\left(\pi,\mathcal{N}(0,I_d)\right)+\sqrt{T}\sqrt{e_{nn}'}\right),
        \end{aligned}
    \end{equation}
    where
    \begin{equation}
        \begin{aligned}
        e_{nn}'= \E[e_{nn}]&+C_d T e^{-\epsilon}\E\left[\mathcal{W}_1(\pi^{N}_G,\pi)\right]\norm{\mathbf{s}_\theta}^2_{C^2(\R^d\times[0,T])}\\ 
        &+ \frac{4}{ N}\left([2\pi(e^{2(\epsilon+T)}-1)]^{-\frac{d}{2}} R_{\pi}(e^{2(\epsilon+T)}-1)-1\right),
        \end{aligned}
    \end{equation}
    the expectation is taken over $N$ i.i.d. samples drawn from $\pi$, and the $C_d$'s appear above can be different dimensional constants.
\end{corollary}

\begin{remark}
$\E[\mathcal{W}_1(\pi^N_G,\pi)]$ has faster convergence than $\E[\mathcal{W}_1(\pi^N,\pi)]$. First, it is shown in \cite{chen2023sample} that on bounded domains of $\R^d$, we have roughly that
\begin{equation}
    \E[\mathcal{W}_1(\pi^N_G,\pi)]\lesssim
    \begin{cases}
        \left(\frac{1}{\abs{G}N}\right)^{1/d} & \text{if } d\geq3,\\
        \left(\frac{1}{\abs{G} N}\right)^{1/2}\log N & \text{if } d=2,\\
        \frac{\text{diam}(\R^d/G)}{ N^{1/2}}& \text{if } d=1,
    \end{cases}
\end{equation}
if $G$ is finite. Later, \cite{tahmasebisample} extends it to closed Riemannian manifolds with possibly infinite $G$ such that $\E[\mathcal{W}_1(\pi^N_G,\pi)]\lesssim \left(\frac{\text{vol}(\R^d/G)}{N}\right)^{1/d^*}$, where $\text{vol}(\R^d/G)$ is the volume of the quotient manifold $\R^d/G$ and $d^*=\text{dim}(\R^d/G)\geq 3$. More recently, \cite{chen2025robust} extends the convergence estimates to the entire domain $\R^d$ with sub-Weibull assumptions. Such sample complexity gain cannot be derived for the KL or other $f$-divergences without additional regularization.
\end{remark}

\subsection{Properties of score matching with equivariant vector fields} \label{sec:HJBproperty}
\Cref{cor:equivariantDSM} does not explicitly convey the significance of equivariant vector fields in score matching. To that end, we highlight the role of $G$-equivariant vector fields (typically parameterized by neural networks) in score matching, an aspect that has only been addressed experimentally in previous studies. Our rigorous results indicate that it suffices to perform score matching with $G$-equivariant vector fields with respect to an unsymmetrized empirical distribution. This approach will be particularly beneficial when we only have a finite set of \textit{unaugmented} samples, i.e., a non-symmetric empirical distribution drawn from an invariant distribution. Proofs of results in this section can be found in \Cref{sec:proofs}.

First, we show that for any distribution $\rho$, the ISM objective when restricted to $G$-equivariant vector fields, is equivalent to the ISM objective with respect to its symmetrized counterpart.
\begin{theorem}\label{thm:ISM}
    Consider the ISM problem in \eqref{eq:ISM}, in which $\rho$ is not necessarily $G$-invariant. Then for any $G$-equivariant vector field $\mathbf{s}$, we have
     \begin{equation*}
         \CJ_I(\rho,\mathbf{s}) = \CJ_I(S^G[\rho],\mathbf{s}).
     \end{equation*}
\end{theorem}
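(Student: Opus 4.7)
The plan is to reduce the equality to a single pointwise observation: the ISM integrand $F(x,t) := |\mathbf{s}(x,t)|^2 + 2\nabla\cdot \mathbf{s}(x,t)$ is $G$-invariant in $x$ for each fixed $t$ whenever $\mathbf{s}$ is $G$-equivariant. Once this is shown, the identity $\CJ_I(\rho,\mathbf{s}) = \CJ_I(S^G[\rho],\mathbf{s})$ drops out immediately from the defining duality of $S^G$ against $G$-invariant test functions, and crucially requires no symmetry hypothesis on $\rho$.

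For the invariance of $F$, the squared-norm piece is trivial from \cref{assumption:group}: $|\mathbf{s}(gx,t)|^2 = |A_g \mathbf{s}(x,t)|^2 = |\mathbf{s}(x,t)|^2$ since $A_g$ is unitary. For the divergence piece, I would differentiate the equivariance relation $\mathbf{s}(gx,t) = A_g \mathbf{s}(x,t)$ in $x$: the chain rule on the left produces $\nabla \mathbf{s}(gx,t)\,A_g$, while linearity on the right gives $A_g\,\nabla \mathbf{s}(x,t)$. Using $A_g^{-1}=A_g^\top$, this yields the similarity $\nabla \mathbf{s}(gx,t) = A_g\,\nabla \mathbf{s}(x,t)\,A_g^\top$. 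Since the trace is invariant under orthogonal conjugation, $(\nabla\cdot\mathbf{s})(gx,t) = (\nabla\cdot\mathbf{s})(x,t)$. Combining the two, $F(gx,t) = F(x,t)$ for every $g\in G$.

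With the invariance of $F$ in hand, $S_G[F(\cdot,t)] = F(\cdot,t)$, and the definition of $S^G$ yields, for each $t\in[0,T]$, $\int_\Omega F(x,t)\, d(S^G[\rho(t)])(x) = \int_\Omega S_G[F(\cdot,t)](x)\, d\rho(t)(x) = \int_\Omega F(x,t)\, d\rho(t)(x)$, where I invoke the slicewise convention noted in \cref{sec:background} for the action of $S^G$ on time-evolving measures. Integrating in $t$ over $[0,T]$ then gives the claim.

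The main obstacle --- really, the only computation with any content --- is the similarity identity $\nabla \mathbf{s}(gx,t) = A_g\,\nabla \mathbf{s}(x,t)\,A_g^\top$, which is a one-line consequence of differentiating equivariance together with $A_g^\top A_g = I$. The genuine conceptual content reflected in this proof is that because the symmetry is already encoded in $\mathbf{s}$, the ISM functional becomes indifferent to whether one integrates against $\rho$ or against its symmetrization $S^G[\rho]$; this is precisely the mechanism that makes equivariant parametrizations consistent with \emph{unaugmented} data in the practice discussed later in \cref{sec:wrap}.
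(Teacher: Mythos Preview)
Your proof is correct and in fact cleaner than the paper's for the divergence term. Both you and the paper handle $|\mathbf{s}|^2$ the same way, via unitarity of $A_g$. For $\nabla\cdot\mathbf{s}$, however, the paper does \emph{not} establish pointwise $G$-invariance; instead it expands $\int_\Omega S_G[\nabla\cdot\mathbf{s}]\,\rho\,\diff{x}$, performs a change of variable $x\mapsto g^{-1}x$, applies integration by parts to move the derivative onto $\rho$, uses equivariance of $\mathbf{s}$ to simplify, and then integrates by parts again---arriving at $\int_\Omega(\nabla\cdot\mathbf{s})\,\rho\,\diff{x}$ only after this longer chain. Your argument short-circuits all of this by differentiating the equivariance relation once to obtain the similarity $\nabla\mathbf{s}(gx,t)=A_g\,\nabla\mathbf{s}(x,t)\,A_g^\top$ and then taking traces, which shows $\nabla\cdot\mathbf{s}$ is $G$-invariant \emph{pointwise}. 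This is more direct and buys you a stronger intermediate fact (invariance of the full integrand, not just of its integral against $\rho$); the paper's route, by contrast, never needs to differentiate $\mathbf{s}$ beyond first order but pays for this with two integrations by parts and a change of variable.
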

\begin{remark}
    \cref{thm:ISM} is important for practical implementations, in the sense that the optimal equivariant vector field can be obtained by score matching for raw data without data augmentation. We will demonstrate this point in our numerical simulations in \cref{sec:experiments}.
\end{remark}

Moreover, for the ESM (or equivalently, the DSM) problem of a generic probability measure, the $G$-equivariant minimizer is exactly the score of the symmetrized probability measure, namely:
\begin{proposition}\label{prop:minimumofSM}
     Consider the ESM problem in \eqref{eq:ESM} (or its DSM equivalent), in which $\rho$ is not necessarily $G$-invariant. Denote by $V_G\subset\R^d\times[0,T]\to\R^d$, the subspace of $G$-equivariant vector fields. Then we have 
     \begin{equation*}
         \argmin_{\mathbf{s}\in V_G}\CJ_D(\rho,\mathbf{s})=\argmin_{\mathbf{s}\in V_G}\CJ_E(\rho,\mathbf{s}) = \nabla_x\left[\log\left(S^G[\rho]\right)\right].
     \end{equation*}
\end{proposition}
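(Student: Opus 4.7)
The plan is to reduce the constrained minimization of $\CJ_E(\rho,\mathbf{s})$ over $V_G$ to an unconstrained minimization of $\CJ_E(S^G[\rho],\mathbf{s})$ via Theorem \ref{thm:ISM}, and then invoke the well-known fact that the global minimizer of the ESM objective is the score of the reference measure.

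First, recall from the discussion after \cref{eq:ISM} that $\CJ_E(\rho,\mathbf{s}) = \CJ_I(\rho,\mathbf{s}) + 4\norm{\nabla\sqrt{\rho}}_2^2$, where the second term is a constant independent of $\mathbf{s}$. Therefore minimizing $\CJ_E(\rho,\mathbf{s})$ over $\mathbf{s}\in V_G$ is equivalent to minimizing $\CJ_I(\rho,\mathbf{s})$ over $V_G$. By \cref{thm:ISM}, for every $\mathbf{s}\in V_G$ we have $\CJ_I(\rho,\mathbf{s}) = \CJ_I(S^G[\rho],\mathbf{s})$, and adding back the $\mathbf{s}$-independent constant $4\norm{\nabla\sqrt{S^G[\rho]}}_2^2$ shows that
\begin{equation*}
    \argmin_{\mathbf{s}\in V_G}\CJ_E(\rho,\mathbf{s}) = \argmin_{\mathbf{s}\in V_G}\CJ_E(S^G[\rho],\mathbf{s}).
\end{equation*}

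Next, I would argue that the unconstrained minimizer of $\CJ_E(S^G[\rho],\cdot)$, which is $\nabla_x\log S^G[\rho]$ by the standard square-completion identity for the ESM objective, already lies in $V_G$, so the constraint is inactive. Since $S^G[\rho]$ is $G$-invariant, setting $\phi(x,t) = \log S^G[\rho](x,t)$ gives $\phi(A_g x,t) = \phi(x,t)$ for every $g\in G$. Differentiating this identity in $x$ and using that $A_g$ is orthogonal (so $A_g^{-\top}=A_g$), we obtain $\nabla_x\phi(A_g x,t) = A_g\nabla_x\phi(x,t)$, which is precisely the $G$-equivariance condition $\mathbf{s}(gx,t)=A_g\mathbf{s}(x,t)$. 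Hence $\nabla_x\log S^G[\rho]\in V_G$.

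Since $V_G$ contains the unconstrained minimizer of the strictly convex (in $\mathbf{s}$) quadratic functional $\CJ_E(S^G[\rho],\mathbf{s})$, it is also the unique minimizer over $V_G$, completing the proof. The only nontrivial ingredient is \cref{thm:ISM}; everything else is bookkeeping with the identity linking ISM and ESM and the chain rule under the orthogonal action. The main obstacle, should it arise, would be verifying measurability/integrability conditions needed to apply the ISM--ESM identity to the (possibly non-invariant) measure $\rho$, but under the standing regularity assumptions on $\Omega=\T^d$ and the vector fields in $V_G$ this is routine.
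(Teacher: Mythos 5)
Your proof is correct, but it takes a genuinely different route from the paper's. The paper argues directly on the ESM objective: it decomposes the integral over $\Omega$ into an integral over the quotient $\Omega/G$ and the group, uses equivariance and the isometry of the actions to pull $A_g$ inside the norm, and then solves the pointwise first-order optimality condition on each orbit, identifying the resulting stationary point with $\nabla_x\log S^G[\rho]$ via an explicit formula for the score of a symmetrized measure (\cref{lemma:scoreofsymmetrization}). You instead make the argument modular: the $\mathbf{s}$-independent Fisher-information term turns the constrained ESM problem into the constrained ISM problem, \cref{thm:ISM} swaps $\rho$ for $S^G[\rho]$ on $V_G$, and then the standard square-completion identity for ESM plus the observation that the score of a $G$-invariant density is automatically $G$-equivariant (so the constraint is inactive) finishes the job. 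Your route buys brevity and avoids both the quotient-space decomposition (with its case-dependent normalizing constant $C_G$) and the explicit computation of \cref{lemma:scoreofsymmetrization}; its cost is that it leans on the ISM--ESM equivalence, hence on integration by parts for the possibly non-invariant $\rho$, which is harmless on $\T^d$ but is exactly the machinery the paper's direct variational computation does not need. The paper's version also yields the closed-form expression for the constrained minimizer as a byproduct. Both arguments share the same minor caveat that the minimizer is determined only $S^G[\rho]$-a.e.
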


We propose the following definition as an error quantification for the non-equivariance of a vector field with respect to a $G$-invariant measure $\rho\in\CP_G(\R^d)\times[0,T]$.
\begin{definition}[Deviation from equivariance]
    The deviation from equivariance (DFE) of a vector field $\mathbf{s}$ with respect to $\rho\in\CP_G(\R^d)\times[0,T]$ is defined as
    \begin{equation}
        \mathrm{DFE}(\rho,\mathbf{s})\coloneq \int_0^T\int_{\R^d} \abs{\mathbf{s}-S_G^E[\mathbf{s}]}^2\diff{\rho(s)}\diff{s}.
    \end{equation}
\end{definition}
It is evident that $\mathrm{DFE}(\rho,\mathbf{s})=0$ if $\mathbf{s}$ is $G$-equivariant. Given this definition, we obtain the following decomposition of the ESM and DSM objectives. 
\begin{theorem}\label{prop:equidecomposition}
    For any $\rho\in\CP_G(\R^d)\times[0,T]$ and any vector field $\mathbf{s}$, we have
    \begin{equation}
        \CJ_E(\rho,\mathbf{s}) = \mathrm{DFE}(\rho,\mathbf{s}) + \CJ_E(\rho,S^E_G[\mathbf{s}]).
    \end{equation}
    As DSM and ESM are equivalent objectives, we readily have 
\begin{equation}
    \CJ_D(\rho,\mathbf{s}) = \mathrm{DFE}(\rho,\mathbf{s}) + \CJ_D(\rho,S^E_G[\mathbf{s}])\, , \,\,  \text{for any} \,\, \rho\in\CP_G(\R^d)\times[0,T]\, .
\end{equation}
\end{theorem}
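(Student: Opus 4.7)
The plan is to add and subtract the symmetrized vector field $S_G^E[\mathbf{s}]$ inside the square that defines $\CJ_E$, and then show that the resulting cross term integrates to zero. Concretely, I would write
$$|\mathbf{s}-\nabla\log\rho|^2 = |\mathbf{s}-S^E_G[\mathbf{s}]|^2 + 2\langle \mathbf{s}-S^E_G[\mathbf{s}],\, S^E_G[\mathbf{s}]-\nabla\log\rho\rangle + |S^E_G[\mathbf{s}]-\nabla\log\rho|^2,$$
and integrate each term against $d\rho(s)\,ds$. The first contribution is exactly $\text{DFE}(\rho,\mathbf{s})$ by definition, and the third is $\CJ_E(\rho,S_G^E[\mathbf{s}])$. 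Everything therefore reduces to showing the cross term vanishes.

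The key structural observation is that since $\rho(\cdot,t)\in\CP_G(\Omega)$ for every $t$, the score $\nabla\log\rho(\cdot,t)$ is $G$-equivariant, and $S^E_G[\mathbf{s}]$ is $G$-equivariant by the construction recalled in Section 2 (and the appendix). Hence $\mathbf{w}\coloneqq S^E_G[\mathbf{s}]-\nabla\log\rho$ is $G$-equivariant, so $S^E_G[\mathbf{w}]=\mathbf{w}$. It is therefore sufficient to establish that $S^E_G$ is \emph{self-adjoint} with respect to the $L^2(\rho)$ inner product when $\rho$ is $G$-invariant: granted this, one gets
$$\int_\Omega \langle S^E_G[\mathbf{s}],\mathbf{w}\rangle\,d\rho = \int_\Omega \langle \mathbf{s}, S^E_G[\mathbf{w}]\rangle\,d\rho = \int_\Omega \langle \mathbf{s},\mathbf{w}\rangle\,d\rho,$$
so the cross term collapses after integration in $t$.

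The self-adjointness is proved by a short, direct computation that uses all the structural assumptions at once. Unpacking the definition of $S^E_G$ and applying Fubini gives
$$\int_\Omega \langle S^E_G[\mathbf{s}](x),\mathbf{w}(x)\rangle\,d\rho(x) = \int_G \int_\Omega \langle A_g^\top \mathbf{s}(gx),\mathbf{w}(x)\rangle\,d\rho(x)\,\mu_G(dg).$$
Using unitarity of $A_g$ (Assumption 1) to move $A_g^\top$ onto $\mathbf{w}$, then $G$-equivariance of $\mathbf{w}$ to rewrite $A_g\mathbf{w}(x) = \mathbf{w}(gx)$, and finally the $G$-invariance of $\rho$ through the change of variables $y=gx$, yields
$$\int_G\int_\Omega \langle \mathbf{s}(y),\mathbf{w}(y)\rangle\,d\rho(y)\,\mu_G(dg) = \int_\Omega \langle \mathbf{s}(y),\mathbf{w}(y)\rangle\,d\rho(y),$$
since $\mu_G$ is a probability measure. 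This is the self-adjoint identity. Inserting it into the expansion and integrating in $t$ produces the stated ESM decomposition; the DSM version is immediate from the identity $\CJ_D(\rho,\cdot)=\CJ_E(\rho,\cdot)$ recalled in Section 2.

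The only real obstacle is the self-adjointness step: it requires using unitarity of the representation, $G$-invariance of $\rho$, and $G$-equivariance of the test field \emph{simultaneously}, and losing any one of them breaks the cancellation. The equivariance of $\nabla\log\rho$ (which uses $\rho\in\CP_G(\Omega)$) is essential for the cross-term partner $\mathbf{w}$ to land in the range of $S^E_G$, which is why the decomposition is stated only for $G$-invariant $\rho$.
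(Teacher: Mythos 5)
Your proof is correct and follows essentially the same route as the paper: both expand the square around $S^E_G[\mathbf{s}]$ and reduce the claim to killing the cross term, using unitarity of $A_g$, equivariance of the partner field, and $G$-invariance of $\rho$ via a change of variables. The only difference is organizational --- you fold the paper's two separately verified integral identities ($\int \mathbf{s}^\top S^E_G[\mathbf{s}]\,\rho\,\diff{x} = \int \abs{S^E_G[\mathbf{s}]}^2\rho\,\diff{x}$ and $\int \mathbf{s}^\top\nabla\log\rho\,\rho\,\diff{x} = \int S^E_G[\mathbf{s}]^\top\nabla\log\rho\,\rho\,\diff{x}$) into one adjointness identity against the single equivariant field $\mathbf{w}$, which incidentally avoids the paper's double Haar integral and its appeal to unimodularity.
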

\Cref{prop:equidecomposition} suggests that using $G$-equivariant vector fields can reduce the DSM error for $G$-invariant distributions, at least the DSM error can be reduced if $\mathbf{s}$ is replaced by $S^E_G[\mathbf{s}]$ for score matching.

The following proposition indicates that for any learned distribution $\eta$, its symmetrized counterpart $S^G[\eta]$ is always closer to the $G$-invariant target distribution $\pi$ in the $\mathcal{W}_1$ sense. The $G$-invariance of the generated distribution is guaranteed by the $G$-equivariant vector field $\mathbf{s}_\theta$ and the $G$-invariant initial distribution $\mathcal{N}(0,I_d)$ under \Cref{assumption:group}, as the solution of the backward equation \eqref{eq:KBE2} is unique.

\begin{proposition}\label{prop:symmetrized_prob}
    For any $\eta,\pi\in\CP(\R^d)$, and $\pi$ is $G$-invariant, we have 
    \begin{equation*}
        \mathcal{W}_1(\eta,\pi)\geq\mathcal{W}_1(S^G[\eta],\pi).
    \end{equation*}
\end{proposition}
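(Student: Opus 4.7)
The plan is to exploit the Kantorovich--Rubinstein duality for $\dis_1$ and the defining adjoint relationship between the symmetrization operators $S^G$ on measures and $S_G$ on functions. Concretely, the duality gives
\[
\dis_1(S^G[\eta],\pi) = \sup_{\gamma\in \text{Lip}_1(\Omega)}\left\{\E_{S^G[\eta]}[\gamma] - \E_{\pi}[\gamma]\right\},
\]
and by the definition of $S^G$ in the Background section, $\E_{S^G[\eta]}[\gamma] = \E_{\eta}[S_G[\gamma]]$. Since $\pi$ is $G$-invariant, a short computation using Fubini gives $\E_{\pi}[\gamma] = \E_{\pi}[S_G[\gamma]]$ (average the identity $\E_\pi[\gamma\circ\theta_g]=\E_\pi[\gamma]$ against $\mu_G$). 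So for every $\gamma\in\text{Lip}_1(\Omega)$,
\[
\E_{S^G[\eta]}[\gamma] - \E_{\pi}[\gamma] \;=\; \E_{\eta}[S_G[\gamma]] - \E_{\pi}[S_G[\gamma]].
\]

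The key step is then to verify that $S_G$ maps $\text{Lip}_1(\Omega)$ into itself, which is exactly where \cref{assumption:group} enters. For $x,y\in\Omega$ and $\gamma\in\text{Lip}_1(\Omega)$,
\[
\abs{S_G[\gamma](x) - S_G[\gamma](y)} \;\leq\; \int_G \abs{\gamma(gx) - \gamma(gy)}\,\mu_G(\diff g) \;\leq\; \int_G \abs{A_g(x-y)}\,\mu_G(\diff g) \;=\; \abs{x-y},
\]
since each $A_g$ is unitary. Thus $S_G[\gamma]\in\text{Lip}_1(\Omega)$, and therefore $\E_{\eta}[S_G[\gamma]] - \E_{\pi}[S_G[\gamma]] \leq \dis_1(\eta,\pi)$ by the dual formulation. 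Taking the supremum over $\gamma\in\text{Lip}_1(\Omega)$ on the left yields the desired inequality.

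The only delicate point is the Lipschitz preservation under $S_G$, which relies critically on the isometric (unitary) action assumption; without this, $S_G$ would generally expand Lipschitz constants and the argument would fail. Everything else is bookkeeping with the adjoint identity $\E_{S^G[\eta]}[\gamma]=\E_{\eta}[S_G[\gamma]]$ and the invariance of $\pi$. No measure-theoretic subtleties arise on the bounded torus $\Omega=\T^d$.
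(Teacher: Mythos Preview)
Your proof is correct and follows essentially the same approach as the paper: both use the Kantorovich--Rubinstein dual, the adjoint identity $\E_{S^G[\eta]}[\gamma]=\E_\eta[S_G[\gamma]]$, and the fact that $S_G$ preserves $\text{Lip}_1(\Omega)$ under the isometric action assumption. The only cosmetic difference is that the paper packages the reduction to $G$-invariant test functions by citing an external result (Theorem 4.6 of \cite{birrell2022structure}) and then uses the inclusion $\Gamma_G^{inv}\subset\text{Lip}_1(\Omega)$, whereas you derive the same content directly and self-containedly.
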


\subsection{The significance of equivariant vector fields in SGMs}\label{sec:wrap}
With the theoretical results established in \cref{sec:equivariant_bound} and \cref{sec:HJBproperty},
we can now focus on providing quantitative comparisons between using equivariant vector fields and applying data augmentation for learning a $G$-invariant $\pi$.
Our strategy relies on making the generalization bound in \cref{cor:equivariantDSM} as small as possible. In particular, we take a closer look at the terms $e_{nn}$ and $\E[\mathcal{W}_1(\pi^N_G,\pi)]$, which can be improved by selecting an appropriate structure for the vector field or by implementing data augmentation.

The assumption $\CJ_D(\eta_G^{N,\epsilon},\mathbf{s}_\theta)\leq e_{nn}$ in \cref{cor:equivariantDSM} refers to the error of DSM with augmented data, regardless of whether the vector field $\mathbf{s}_\theta$ is $G$-equivariant or not, and we also have $\eta_G^{N,\epsilon} = S^G[\eta^{N,\epsilon}]$.

We denote the model family by $\{\mathbf{s}_\theta, \theta\in\Theta\}$, where $\Theta$ is the set of possible parameters. If we perform data augmentation only, then the best parameter is given by 
\begin{equation}
    \theta_1 = \argmin_{\theta\in\Theta} \CJ_D(\eta_G^{N,\epsilon},\mathbf{s}_\theta).
\end{equation}
Note that under \Cref{assumption:group}, we have $\norm{\nabla_x S_G^E[\mathbf{s}_\theta]}_2\leq\norm{\nabla_x \mathbf{s}_\theta}_2$, so the Lipschitz constraint in \cref{cor:equivariantDSM} does not loose if we replace $\mathbf{s}_\theta$ by its symmetrization $S_G^E[\mathbf{s}_\theta]$. Indeed, we have $\norm{S_G^E[\mathbf{s}_\theta]}_2\leq \norm{\mathbf{s}}_2$, $\norm{\nabla_x S_G^E[\mathbf{s}_\theta]}_2\leq\norm{\nabla_x \mathbf{s}_\theta}_2$, and $\norm{\nabla^2S_G^E[\mathbf{s}_\theta]}_{\text{F}}\leq \norm{\nabla^2\mathbf{s}_\theta}_{\text{F}}$, hence $\norm{S_G^E[\mathbf{s}_\theta]}_{C^2(\R^d\times[0,T])}\leq\norm{\mathbf{s}_\theta}_{C^2(\R^d\times[0,T])}$. If instead we approximate the score of unaugmented data by $\{S_G^E[\mathbf{s}_\theta], \theta\in\Theta\}$, then the best parameter is obtained by solving $\min_{\theta\in\Theta}\CJ_D(\eta^{N,\epsilon},S_G^E[\mathbf{s}_\theta])$. This minimization is equivalent to $\min_{\theta\in\Theta}\CJ_I(\eta^{N,\epsilon},S_G^E[\mathbf{s}_\theta])$ and therefore to $\min_{\theta\in\Theta}\CJ_I(\eta_G^{N,\epsilon},S_G^E[\mathbf{s}_\theta])$ by \Cref{thm:ISM}. 

Note that $\min_{\theta\in\Theta}\CJ_I(\eta_G^{N,\epsilon},S_G^E[\mathbf{s}_\theta])$ is equivalent to $\min_{\theta\in\Theta}\CJ_D(\eta_G^{N,\epsilon},S_G^E[\mathbf{s}_\theta])$. Hence, the best parameter in this case is given by
\begin{equation}
    \theta_2 = \argmin_{\theta\in\Theta} \CJ_D(\eta^{N,\epsilon},S_G^E[\mathbf{s}_\theta]) = \argmin_{\theta\in\Theta} \CJ_D(\eta^{N,\epsilon}_G,S_G^E[\mathbf{s}_\theta]).
\end{equation}
We compare $J_D(\eta_G^{N,\epsilon},\mathbf{s}_{\theta_1})$ with $J_D(\eta^{N,\epsilon}_G,S_G^E[\mathbf{s}_{\theta_2}])$:
\begin{align*}
    \CJ_D(\eta_G^{N,\epsilon},\mathbf{s}_{\theta_1}) &= \text{DFE}(\eta_G^{N,\epsilon},\mathbf{s}_{\theta_1}) + \CJ_D(\eta^{N,\epsilon}_G,S_G^E[\mathbf{s}_{\theta_1}])\\
    &\geq \CJ_D(\eta^{N,\epsilon}_G,S_G^E[\mathbf{s}_{\theta_1}])\\
    &\geq \CJ_D(\eta^{N,\epsilon}_G,S_G^E[\mathbf{s}_{\theta_2}]),
\end{align*}
and so the gap $J_D(\eta_G^{N,\epsilon},\mathbf{s}_{\theta_1})-J_D(\eta^{N,\epsilon}_G,S_G^E[\mathbf{s}_{\theta_2}])$ is at least $\text{DFE}(\eta_G^{N,\epsilon},\mathbf{s}_{\theta_1})$. The analysis above means that if we can always reduce the DSM error $e_{nn}$ using a symmetrized model that is $G$-equivariant while maintaining the statistical efficiency in  $\E[\mathcal{W}_1(\pi^N_G,\pi)]$.

\subsection{A symmetry-aware Wasserstein path-space divergence and equivariant flow matching}\label{sec:equivariantFM}

Since the flow matching objective in \eqref{eq:FM} shares the same $L^2$-regression structure as the DSM or ESM objectives for SGMs, inspired by the Wasserstein path-space divergence and the notion of DFE, we propose the following \textit{symmetry-aware Wasserstein path-space divergence}, in which we assume one of the dynamics always has $G$-invariant marginals. 
\begin{proposition}[Symmetry-aware Wasserstein path-space divergence]\label{def:G_divergence}
    Suppose $\rho_1^{[0,T]}$ and $\rho_2^{[0,T]}$ are the path laws of the following SDEs
    \begin{align*}
        \diff{x_t} &= \mathbf{b}_1(x_t,t)\diff{t} + \boldsymbol{\sigma}(t)\diff{W_t},\quad x_0\sim m_1;\\
        \diff{y_t} &= \mathbf{b}_2(y_t,t)\diff{t} + \boldsymbol{\sigma}(t)\diff{W_t},\quad y_0\sim m_2\in\CP_G(\R^d),
    \end{align*}
    from $t=0$ to $t=T$ respectively. Assume that $\mathbf{b}_2(\cdot,t)$ is always $G$-equivariant for $0\leq t\leq T$ and $\sup_{x\in\R^d, t\in[0,T]}\norm{\nabla \mathbf{b}_1(x,t)}_{2}<\infty$, and we define the symmetry-aware Wasserstein path-space divergence as
    \begin{equation}\label{eq: WPD_G}
        \Xi_G\left(\rho_1^{[0,T]}\| \rho_2^{[0,T]}\right)\coloneqq \mathcal{W}_1(m_1,m_2)+\sqrt{T}\left(\left\|\mathbf{b}_2 - S_G^E[\mathbf{b}_1]\right\|_{L^2(\rho_2)}^2
    + \mathrm{DFE}(\rho_2,\mathbf{b}_1)\right)^{1/2},
    \end{equation}
    where $\rho_2(x,t)$ is the $G$-invariant marginal law of $\rho_2^{[0,T]}$ at time $t$. Then we have $\Xi_G\left(\rho_1^{[0,T]}\| \rho_2^{[0,T]}\right) = \Xi\left(\rho_1^{[0,T]}\| \rho_2^{[0,T]}\right)$.
\end{proposition}
The result is a direct consequence of \Cref{def:WUQ} and \Cref{prop:equidecomposition}. In \Cref{def:G_divergence}, one may reduce the value of the divergence, given $m_1$ and $\mathbf{b}_1$, through replacing $m_1$ by $S^G[m_1]$ (due to \Cref{prop:symmetrized_prob}), and replacing $\mathbf{b}_1$ by $S_G^E[\mathbf{b}_1]$. The DFE may also be viewed as a model-dependent regularization term and can therefore be weighted by an additional hyperparameter. More importantly, the decomposition reveals a fundamental limitation of data- and training-based improvements for learning $G$-invariant path laws: while increasing the sample size or training longer can reduce the first two error terms, neither can decrease the DFE. Similarly, data augmentation may improve the sample-complexity contribution, but it leaves the DFE unchanged. In contrast, an equivariant parametrization eliminates the DFE. 

When $\pi$ is $G$-invariant, \cite{klein2024equivariant} proposes an equivariant OT flow matching where the cost function of the OT is $\tilde{c}(x_0,x_1) = \min_{g\in G}\norm{x_0-\theta_g(x_1)}_2^2$, thereby incorporating the underlying group structure into the coupling. They further showed that the induced flow is $G$-equivariant, and thus setting $\tilde{\rho}(0)=\pi^N$ in \eqref{eq:transport_empirical} is equivalent to setting $\tilde{\rho}(0)=\pi^N_G$; that is, the equivariant OT coupling makes the flow starting from $\pi^N$
equivalent to a flow starting from $\pi^N_G$. The following result shows that equivariant approximation of the velocity field can reduce the error term $e_{nn}$ without increasing the Lipschitz constant $A$ in \Cref{thm:generalization_error_FM}.
\begin{corollary}\label{thm:equivariant_FM}
Suppose $\pi\in\CP_G(\R^d)$. Assume $\mathbf{s}_\theta$ satisfies $\sup_{x\in\R^d,t\in[0,1]}\norm{\nabla_x \mathbf{s}_\theta(x,t)}_2\leq A$ for some $A>0$ and $\tilde{\mathbf{v}}$ is $G$-equivariant for any $N$ i.i.d. samples $\pi^N$ drawn from $\pi$. Let $e_{nn} = \CJ_{FM}(\tilde{\mathbf{v}},\mathbf{s}_\theta,\tilde{\rho})$, then we have
\begin{equation}\label{eq:Gflowmatching}
\E\left[\mathcal{W}_1(\pi,m(1))\right]\leq  \E\left[\mathcal{W}_1(\pi,\pi^N_G)\right] + e^{A}\sqrt{\E[e_{nn}-\mathrm{DFE}(\tilde{\rho}, \mathbf{s}_\theta)]},
\end{equation}
when we generate samples using $S_G^E[\mathbf{s}_\theta]$, where the expectation is taken over $N$ i.i.d. samples drawn from $\pi$. 
\end{corollary}
\begin{proof}
    Since $\tilde{\rho}(1)$, the standard Gaussian, is $G$-invariant and $\tilde{\mathbf{v}}$ is $G$-equivariant, the time-reversal distribution $\tilde{\rho}$ is $G$-invariant for $0\leq t<1$. Noting that the symmetrization of $\mathbf{s}_\theta$ does not increase the Lipschitz constant $A$, the bound follows from \Cref{thm:generalization_error_FM} and \Cref{prop:equidecomposition}.
\end{proof}
\begin{remark}
    The second term on the right-hand side of \eqref{eq:Gflowmatching} is exactly the symmetry-aware Wasserstein path-space divergence $\Xi_G$  for the time-reverse deterministic dynamics: one picks $m_1=m_2=\CN(0,I_d)$, $\mathbf{b}_1 = -\mathbf{s}_\theta$, $\mathbf{b}_2 = -\tilde{\mathbf{v}}$, $T=1$ and    $\boldsymbol{\sigma}(t)\equiv 0$ on the right-hand side of \eqref{eq: WPD_G}.
\end{remark}
\section{Comparison of model equivariance and data augmentation}\label{sec:experiments}
We validate the basic results of our theory in \Cref{sec:equivariant_SGM} on Gaussian mixtures that possess group symmetries. The primary purpose is to emphasize that minimizing the score matching objective with respect to a non-symmetric sample of a $G$-invariant distribution $\pi$ within a class of $G$-equivariant vector fields is better than just augmenting the data through group actions, as is indicated by our analysis described in \Cref{sec:wrap}, and captured by \Cref{thm:ISM} and the DFE term in the symmetry-aware Wasserstein path-space divergence \eqref{eq: WPD_G}.

We consider a mixture of 8 isotropic Gaussians in $\R^{10}$ centered at $[\pm 5, \pm 5,\pm5]$ in the first three coordinates with the remaining seven coordinates centered at zero. Each mode has identity covariance. The distribution is invariant under the group $G = (\mathbb{Z}_2)^3$ of sign flips in the first three coordinates with $|G| = 8$. We report the $\mathcal{W}_1$ distance between the generated distribution and the target distribution. We consider four experimental setups: the first case (\textbf{Equivariant, not augmented}) is where the score network is parametrized to be $G$-equivariant by parametrizing it as 
\begin{align}\label{eq:eqscore}
    \mathbf{s}^G_\theta(x,t) = \frac{1}{|G|}\sum_{g \in G} A_g^\top\mathbf{s}_\theta(A_gx,t),
\end{align}
where $|G|=8$ is the order of the group. The score is trained on $N_{train}$ samples that are not augmented. The second case (\textbf{Equivariant, augmented}) is where the score network is parametrized as in \eqref{eq:eqscore}, and is trained on data that is augmented by applying each group action on each training sample (hence effectively $8\times N_{train}$ samples). The third case (\textbf{Non-equivariant, augmented}) is where the network $\mathbf{s}_\theta$ is trained directly but on augmented training data. The fourth case (\textbf{Non-equivariant, not augmented}) is where the network $\mathbf{s}_\theta$ is trained directly and the training data is not augmented. For each case, the function $\mathbf{s}_\theta$ is parametrized via a fully-connected neural network with 4 hidden layers and 128 nodes per layer. Although our theory assumes a Lipschitz bound on the network, we test the case where no Lipschitz constraint is imposed. The networks are trained over 30000 iterations of the Adam optimizer with learning rate $10^{-3}$ and the batch size is $N_{batch} = 32$. For $N_{train} = 10$, we sample with replacement during the optimization iterations. 

The Wasserstein-1 distance is computed via discrete optimal transport (the earth mover's distance (EMD)) between the generated and target samples using the \emph{Python Optimal Transport} library \cite{flamary2021pot,flamary2024pot}. For each trained score network, we draw $5000$ samples via the reverse-time SDE and compare them against $5000$ samples drawn from the target distribution. For each value of $N_{train}$, we perform $25$ independent runs of each method with a newly sampled training set. The EMD means and standard deviations of the 25 runs are reported in Table~\ref{table:Gaussian10D} and in Figure~\ref{fig:main_figure}.  Notice that the equivariant cases (blue and orange curves) consistently outperform the non-equivariant cases (red and green curves), which corroborates our theoretical analysis for the DFE error. The gap between the \textbf{Non-equivariant, not augmented} and the \textbf{Non-equivariant, augmented} cases (red and green curves) for limited samples (e.g., $N\leq 100$) demonstrates the improvement in the statistical efficiency of  $\E[\W_1(\pi^N_G,\pi)]$ achieved through data augmentation. In contrast, the indistinguishability between the \textbf{Equivariant, not augmented} and the \textbf{Equivariant, augmented} cases (blue and orange curves) confirms \Cref{thm:ISM}, showing that equivariant parameterization automatically obtains the benefits of data augmentation.

\begin{table}[h!]
\caption{$\mathcal{W}_1$ values for an 8-component Gaussian mixture in 10 dimensions. The distribution is symmetric about the axes in the first three coordinates, yielding eight modes at the corners $(\pm5,\pm5,\pm5)$. The remaining seven coordinates are isotropic Gaussians.  }
\label{table:Gaussian10D}
\begin{center}
\begin{tabular}{@{}lcccc@{}}
\multirow{2}{*}{$N_{train}$} & \bf Equivariant,  & \bf Equivariant, & \bf Non-equivariant, & \bf Non-equivariant, \\
& \bf augmented & \bf not augmented & \bf augmented & \bf not augmented \\
\hline \\
$10$    & $\mathbf{3.01 \pm 0.21}$ & $\mathbf{2.94 \pm 0.14}$ & $4.01 \pm 0.28$ & $5.61 \pm 0.63$ \\
$100$   & $\mathbf{2.47 \pm 0.11}$ & $\mathbf{2.43 \pm 0.08}$ & $2.93 \pm 0.12$ & $3.40 \pm 0.29$ \\
$1000$  & $\mathbf{2.49 \pm 0.08}$ & $\mathbf{2.49 \pm 0.10}$ & $2.74 \pm 0.14$ & $2.75 \pm 0.10$ \\
$5000$  & $\mathbf{2.50 \pm 0.06}$ & $\mathbf{2.50 \pm 0.08}$ & $2.72 \pm 0.09$ & $2.78 \pm 0.12$ \\
\end{tabular}
\end{center}
\label{table:experiments10D}
\end{table}

\begin{figure}[h!]
\centering
    \includegraphics[width = 0.67\textwidth]{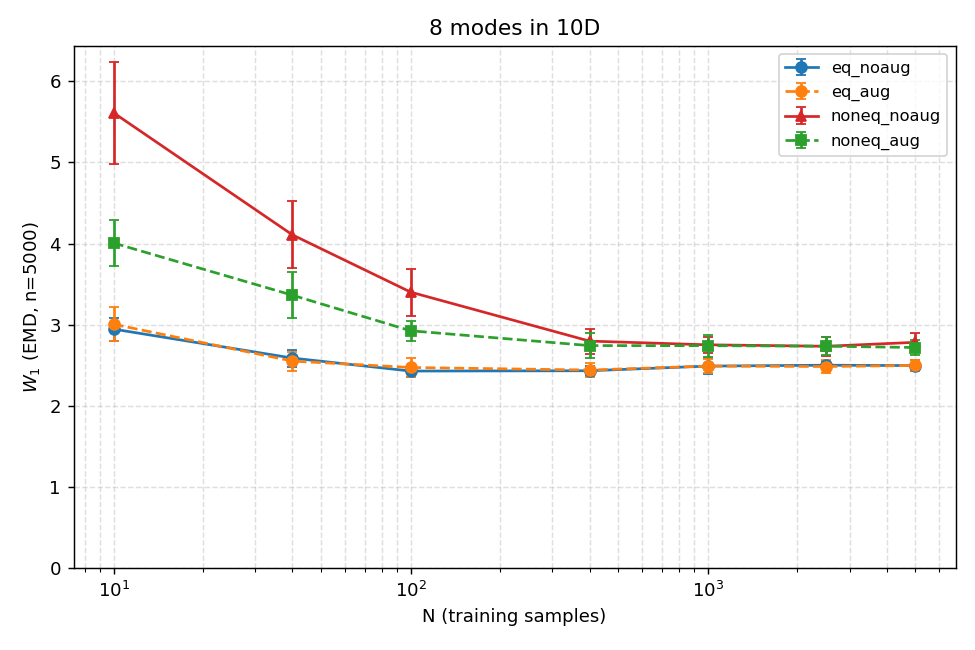}
    \caption{Wasserstein-1 distance as a function of training sample size. The Wasserstein distance is calculated using the earth mover's distance with $5000$ fixed true samples for each distance computation. }
    \label{fig:main_figure}
\end{figure}

\section{Proofs of theoretical results in \Cref{sec:HJBproperty}}\label{sec:proofs}
\paragraph{$G$-equivariance of $S_G^E[\mathbf{s}]$.}
For any $\bar{g}\in G$, we have
    \begin{align*}
        S_G^E[\mathbf{s}](\bar{g}x,t)
        &=\int_G A_g^\top\cdot \mathbf{s}(g\bar{g}x,t)\mu_G(\diff{g})\\
        &=\int_G A_{\bar{g}} A_{\bar{g}}^\top A_g^\top\cdot \mathbf{s}(g\bar{g}x,t)\mu_G(\diff{g})\\
        &= \int_G A_{\bar{g}}  A_{g\circ \bar{g}}^\top\cdot \mathbf{s}(g\bar{g}x,t)\mu_G(\diff{g})\\
        &= A_{\bar{g}}S_G^E[\mathbf{s}](x,t).
    \end{align*}

\begin{proof}[Proof of \cref{thm:ISM}]
    It is sufficient to look at the integration of $x$ over $\R^d$. We have
    \begin{align*}
        \int_{\R^d} \left(\abs{\mathbf{s}}^2+2\nabla\cdot\mathbf{s}\right)S^G[\rho](x)\diff{x}
        &= \int_{\R^d} S_G\left[\abs{\mathbf{s}}^2+2\nabla\cdot\mathbf{s}\right]\rho(x)\diff{x}\\
        &= \int_{\R^d} \abs{\mathbf{s}}^2\rho(x)\diff{x}+ 2\int_{\R^d} S_G\left[\nabla\cdot\mathbf{s}\right]\rho(x)\diff{x},
    \end{align*}
    where the last equality is due to that the module $\abs{\mathbf{s}}$ is $G$-invariant since $\mathbf{s}$ is $G$-equivariant.
    For the second integral, we have
    \begin{align*}
        \int_{\R^d} S_G\left[\nabla\cdot\mathbf{s}\right]\rho(x)\diff{x}
        &= \int_{\R^d} \int_G \sum_{i=1}^d\frac{\partial(\mathbf{s}_i)}{\partial x_i}(gx)\diff{\mu_G(g)}\rho(x)\diff{x}\\
        &= \int_G \int_{\R^d} \sum_{i=1}^d\frac{\partial(\mathbf{s}_i)}{\partial x_i}(gx)\rho(x)\diff{(x)}\diff{\mu_G(g)}\\
        &= \int_G \int_{\R^d} \sum_{i=1}^d\frac{\partial(\mathbf{s}_i)}{\partial x_i}(x)\rho(g^{-1}x)\diff{(g^{-1}x)}\diff{\mu_G(g)}\\
        &= -\int_G \int_{\R^d} \mathbf{s}(x)^\top(A_g\nabla\rho|_{g^{-1}x})\diff{(g^{-1}x)}\diff{\mu_G(g)}\quad\text{(use integration by parts)}\\
        &= -\int_G \int_{\R^d} (A_g^\top\mathbf{s}(x))^\top(\nabla\rho|_{g^{-1}x})\diff{(g^{-1}x)}\diff{\mu_G(g)}\\
        &= -\int_G \int_{\R^d} (\mathbf{s}(g^{-1}x))^\top(\nabla\rho|_{g^{-1}x})\diff{(g^{-1}x)}\diff{\mu_G(g)}\quad\text{(by the equivariance of $\mathbf{s}$)}\\
        &= -\int_G \int_{\R^d} (\mathbf{s}(x))^\top(\nabla\rho(x))\diff{x}\diff{\mu_G(g)}\\
        &= \int_G\int_{\R^d} (\nabla\cdot\mathbf{s})(x)\rho(x)\diff{x}\diff{\mu_G(g)}\\
        &= \int_{\R^d} (\nabla\cdot\mathbf{s})(x)\rho(x)\diff{x}.
    \end{align*}
    Therefore, we have
    \begin{align*}
        \int_{\R^d} \left(\abs{\mathbf{s}}^2+2\nabla\cdot\mathbf{s}\right)S^G[\rho](x)\diff{x}
        = \int_{\R^d} \left(\abs{\mathbf{s}}^2+2\nabla\cdot\mathbf{s}\right)\rho(x)\diff{x}.
    \end{align*}
\end{proof}

To prove \cref{prop:minimumofSM}, we need the following lemma.
\begin{lemma}\label{lemma:scoreofsymmetrization}
    For a generic $\rho\in\CP(\R^d)$, which may not be $G$-invariant, the score formula of its symmetrized measure $S^G[\rho]$, is given by
    \begin{equation*}
        \nabla_x\left[\log\left(S^G[\rho]\right)\right](x) = \frac{\int_G A_g^\top\cdot(\nabla_x\rho)|_{gx}\diff{\mu_G (g)}}{\int_G\rho(gx)\diff{\mu_G (g)}},
    \end{equation*}
    where $(\nabla_x\rho)|_{gx}$ is the gradient of $\rho$ evaluated at $gx$.
\end{lemma}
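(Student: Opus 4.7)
The plan is to derive an explicit Lebesgue density for $S^G[\rho]$ and then differentiate its logarithm by exchanging $\nabla_x$ with the Haar integral and applying the chain rule to the linear group action.

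First, starting from the duality definition $\E_{S^G[\rho]}\gamma = \int_\Omega \int_G \gamma(gx)\,\mu_G(\diff{g})\,\rho(x)\,\diff{x}$, I would apply Fubini and, for each fixed $g$, perform the change of variables $y = gx = A_g x$. Because $A_g$ is unitary under \cref{assumption:group}, $|\det A_g| = 1$, so the Lebesgue volume element is preserved. This produces
\begin{equation*}
\E_{S^G[\rho]}\gamma = \int_\Omega \gamma(y) \left[\int_G \rho(g^{-1}y)\,\mu_G(\diff{g})\right]\diff{y},
\end{equation*}
and invariance of the Haar measure under inversion $g\mapsto g^{-1}$ lets me rewrite the bracket so that $S^G[\rho]$ admits the pointwise density representation $S^G[\rho](x) = \int_G \rho(gx)\,\mu_G(\diff{g})$, which is exactly the denominator in the claimed formula.

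Next, since $\nabla\log u = \nabla u / u$, it remains to compute $\nabla_x S^G[\rho](x)$. Under mild regularity on $\rho$ (for instance $\rho\in C^1$ with an integrable dominant), compactness of $G$ together with dominated convergence justifies exchanging $\nabla_x$ with $\int_G \mu_G(\diff{g})$. For each fixed $g$, the chain rule applied to $x\mapsto \rho(A_g x)$ gives $\nabla_x[\rho(gx)] = A_g^\top(\nabla\rho)|_{gx}$, because the Jacobian of $x\mapsto A_g x$ is $A_g$ and pulls back on gradients by its transpose. Dividing by the density obtained in the first step yields the claimed identity.

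The only non-trivial point is justifying the interchange of differentiation and the Haar integral; this follows from a standard dominated-convergence argument given compactness of $G$ and the assumed smoothness of $\rho$. Everything else is direct bookkeeping with the unitary action and Haar invariance, so I do not anticipate any genuine obstacle.
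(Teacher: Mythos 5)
Your proof is correct and follows essentially the same route as the paper: write $S^G[\rho](x)=\int_G\rho(gx)\,\mu_G(\diff{g})$, use $\nabla\log u=\nabla u/u$, exchange $\nabla_x$ with the Haar integral, and apply the chain rule to the linear action to pick up $A_g^\top$. The only difference is that you explicitly derive the pointwise density of $S^G[\rho]$ from the duality definition (via unitarity and inversion-invariance of the Haar measure) and justify the interchange by dominated convergence, both of which the paper takes for granted.
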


\begin{proof}[Proof of \cref{lemma:scoreofsymmetrization}]
        \begin{align*}
            \nabla_x\left[\log\left(S^G[\rho]\right)\right](x) &= \nabla_x\left[\log\left(\int_G\rho(gx)\diff{\mu_G (g)}\right)\right]\\
            &= \frac{\nabla_x \int_G\rho(gx)\diff{\mu_G (g)}}{\int_G\rho(gx)\diff{\mu_G (g)}}\\
            &= \frac{ \int_G\nabla_x\rho(gx)\diff{\mu_G (g)}}{\int_G\rho(gx)\diff{\mu_G (g)}}\\
            &= \frac{\int_G A_g^\top\cdot(\nabla_x\rho)|_{gx}\diff{\mu_G (g)}}{\int_G\rho(gx)\diff{\mu_G (g)}}.
        \end{align*}
    \end{proof}

\begin{proof}[Proof of \cref{prop:minimumofSM}]
It suffices to prove the result for the ESM objective for each time $t$, so we omit the time parameter. Let $\R^d/G$ be the quotient space of $\R^d$ by $G$. By the definition in \cref{eq:ESM}, denoting by $\nabla\log\rho|_{gx}$ the score $\nabla\log\rho$ evaluated at $gx$, up to a multiplicative constant $C_G$ the depends on $G$ ($C_G=1$ if $\text{dim}(\R^d/G)<d$ and $C_G=\abs{G}$ if $G$ is finite), we have
    \begin{align*}
        \CJ_E(\rho,\mathbf{s}) &= C_G\int_{\R^d/G}\int_G \abs{\mathbf{s}(gx)-\nabla\log\rho|_{gx}}^2\rho(gx)\diff{\mu_G (g)}\diff{x}\\
        &= C_G\int_{\R^d/G}\int_G \abs{A_g\cdot\mathbf{s}(x)-\nabla\log\rho|_{gx}}^2\rho(gx)\diff{\mu_G (g)}\diff{x}\\
        &= C_G\int_{\R^d/G}\int_G \abs{\mathbf{s}(x)-A_g^\top\cdot\nabla\log\rho|_{gx}}^2\rho(gx)\diff{\mu_G (g)}\diff{x},
    \end{align*}
    where the last equality is due to the group actions in $G$ are isometries. For each $x\in\R^d/G$, regardless of $C_G$, we have
    \begin{align*}
        \nabla_{\mathbf{s}}\left[\int_G \abs{\mathbf{s}(x)-A_g^\top\cdot\nabla\log\rho|_{gx}}^2\rho(gx)\diff{\mu_G (g)} \right] &= 2\int_G \mathbf{s}(x)-A_g^\top\cdot(\nabla\log\rho|_{gx})\rho(gx)\diff{\mu_G (g)}.
    \end{align*}
    Then the stationary point of the above equation is given by
    \begin{equation*}
        \mathbf{s}^*(x) = \frac{\int_G A_g^\top\cdot(\nabla\log\rho|_{gx})\rho(gx)\diff{\mu_G (g)}}{\int_G \rho(gx)\diff{\mu_G (g)}}.
    \end{equation*}
    Note that $\nabla\log\rho|_{gx} = \frac{(\nabla_x \rho)|_{gx}}{\rho(gx)}$. This combined with \cref{lemma:scoreofsymmetrization} proves the claim.
\end{proof}

\begin{proof}[Proof of \cref{prop:equidecomposition}]
    It suffices to prove the equality for each time $t$, thus we will omit the time parameter. Expanding the square, it is equivalent to show that 
    \begin{equation*}
        \int_{\R^d}(\mathbf{s}^\top\nabla\log\rho)\rho(x)\diff{x} = \int_{\R^d} \left(\mathbf{s}^\top S^E_G[\mathbf{s}]-\abs{S^E_G[\mathbf{s}]}^2+S^E_G[\mathbf{s}]^\top \nabla\log\rho\right)\rho(x)\diff{x}.
    \end{equation*}
    First, we show that $\int \mathbf{s}^\top S^E_G[\mathbf{s}]\rho(x)\diff{x}=\int \abs{S^E_G[\mathbf{s}]}^2\rho(x)\diff{x}$. We have
    \begin{align*}
        \text{LHS} = \int_{\R^d} \int_G \mathbf{s}(x)^\top\cdot A_g^\top \mathbf{s}(gx)\diff{\mu_G (g)}\rho(x)\diff{x}
    \end{align*}
    by the definition of the operator $S^E_G$; while 
    \begin{align*}
        \text{RHS}
        &= \int_{\R^d}\int_G\int_G \mathbf{s}(g_1x)^\top A_{g_1}A_{g_2}^\top\mathbf{s}(g_2x)\diff{\mu_G (g_1)}\diff{\mu_G (g_2)}\rho(x)\diff{x}\\
        &= \int_{\R^d}\int_G\int_G \mathbf{s}(g_1x)^\top A_{g_2\circ g_1^{-1}}^\top\mathbf{s}(g_2x)\diff{\mu_G (g_1)}\diff{\mu_G (g_2)}\rho(x)\diff{x}\\
        &=
        \int_G\int_G\int_{\R^d} \mathbf{s}(g_1x)^\top A_{g_2\circ g_1^{-1}}^\top\mathbf{s}(g_2x)\rho(x)\diff{x}\diff{\mu_G (g_1)}\diff{\mu_G (g_2)}\\
        &=
        \int_G\int_G\int_{\R^d} \mathbf{s}(x)^\top A_{g_2\circ g_1^{-1}}^\top\mathbf{s}(g_2\circ g_1^{-1}x)\rho(x)\diff{x}\diff{\mu_G (g_1)}\diff{\mu_G (g_2)}\\
        &=\int_G\int_G\int_{\R^d} \mathbf{s}(x)^\top A_{g}^\top\mathbf{s}(gx)\rho(x)\diff{x}\diff{\mu_G (g)}\diff{\mu_G (g_2)}\\
        &=\int_G\int_{\R^d} \mathbf{s}(x)^\top A_{g}^\top\mathbf{s}(gx)\rho(x)\diff{x}\diff{\mu_G (g)}=\text{LHS}
    \end{align*}
    where the fourth equality is due to the $G$-invariance of $\rho$ and $A_g$ is unitary for any $g\in G$, and the fifth equality is due to that $G$ is unimodular so the Haar measure $\diff_{\mu_{G}}$ is left-, right- and inverse-invariant.

    Then it remains to show that $\int(\mathbf{s}^\top\nabla\log\rho)\rho(x)\diff{x} = \int (S^E_G[\mathbf{s}]^\top\nabla\log\rho)\rho(x)\diff{x}$. Indeed, we have
    \begin{align*}
        \int_{\R^d} (S^E_G[\mathbf{s}]^\top\nabla\log\rho)\rho(x)\diff{x} &=\int_{\R^d}\int_G(A_g^\top\mathbf{s}(gx))^\top \diff{\mu_G (g)}(\nabla\log\rho(x))\rho(x)\diff{x}\\
        &= \int_G\int_{\R^d} \mathbf{s}(gx)^\top A_g (\nabla\log\rho(x))\rho(x)\diff{x}\diff{\mu_G (g)}\\
        &= \int_G\int_{\R^d} \mathbf{s}(gx)^\top (\nabla\log\rho|_{gx})\rho(x)\diff{x}\diff{\mu_G (g)}\\
        &= \int_G\int_{\R^d} \mathbf{s}(x)^\top (\nabla\log\rho(x))\rho(x)\diff{x}\diff{\mu_G (g)}\\
        &=\int_{\R^d} \mathbf{s}(x)^\top (\nabla\log\rho(x))\rho(x)\diff{x},
    \end{align*}
    where the third equality is due to the fact that $\nabla\log\rho$ is $G$-equivariant, and the fourth equality is by a change of variable and $\rho$ is $G$-invariant.
\end{proof}

\begin{proof}[Proof of \cref{prop:symmetrized_prob}]
    Let $\Gamma = \text{Lip}_1(\R^d)$, and $\Gamma_G^{inv}$ be the subspace of $\Gamma$ that consists of $G$-invariant functions. By \cref{assumption:group}, actions in $G$ are 1-Lipschitz. Thus, $S_G[\Gamma]\subset \Gamma$. 
    First note that $S^G[\pi]=\pi$ since $\pi$ is $G$-invariant. Then we have 
    \begin{align*}
        \mathcal{W}_1(S^G[\eta],\pi)&= \mathcal{W}_1(S^G[\eta],S^G[\pi])\\
        &=\sup_{\gamma\in\Gamma}\left\{\E_{S^G[\eta]}[\gamma]-\E_{S^G[\pi]}[\gamma]\right\}\\
        &= \sup_{\gamma\in\Gamma_G^{inv}}\left\{\E_{\eta}[\gamma]-\E_{\pi}[\gamma]\right\}\\
        &\leq \sup_{\gamma\in\Gamma}\left\{\E_{\eta}[\gamma]-\E_{\pi}[\gamma]\right\}=\mathcal{W}_1(\eta,\pi),
    \end{align*}
    where the second equality is by the definition of $\mathcal{W}_1$ metric, and the third equality is due to Theorem 4.6 in \cite{birrell2022structure}.
\end{proof}

\section{Conclusion}\label{sec:conclusion}
We have proposed a new Wasserstein path-space divergence and developed a unified $\W_1$ robustness and generalization theory for flow-based generative models, built on a probabilistic WUP theorem that applies to unbounded domains, accommodates time-varying and possibly degenerate diffusion coefficients, and remains valid in the noiseless setting. Based on the WUP, we obtain $\W_1$ generalization bounds for both score-based diffusion models and flow matching. Specializing the framework to group-invariant targets, we provide the first rigorous error analysis of equivariant SGMs and flow matching, identify a model-form DFE error that quantifies the cost of a non-equivariant parametrization, and show analytically that equivariant inductive bias strictly dominates data augmentation.
Several directions remain open. First, some real-world data distributions, such as image distributions on square domains, may not exhibit exact invariance under group actions. It would therefore be worthwhile to extend our analysis to distributions possessing only approximate symmetries, for instance, when the target distribution is close to a $G$-invariant distribution \cite{birrell2024nonlinear} under a metric such as $\W_1$. On the numerical side, our analysis does not account for the time discretization of SGMs or flow matching; incorporating discretization error, or designing symmetry-preserving numerical integrators within our theoretical framework, would close an important gap between theory and practice. Beyond the OU noising process considered here, extending the framework to nonlinear forward processes such as those in \cite{birrell2024nonlinear,singhal2024s} is a natural next step. Finally, the symmetrization operator under group symmetry can equivalently be viewed as a conditional expectation on the $\sigma$-algebra generated by group orbits; see some preliminary discussion in \cite{birrell2022structure}.
This perspective suggests an extension of our symmetry framework to more general coarse-graining generative models in which only a smaller $\sigma$-algebra is observable, a direction we find particularly promising for scientific applications where partial observability is the norm. In a different direction, while our primary motivation comes from generative modeling, the Wasserstein path-space divergence and the associated uncertainty propagation theorem are formulated at the level of dynamical systems and are not tied to any particular learning architecture. This opens the possibility of applying the framework to model calibration, sensitivity analysis, uncertainty quantification, and operator learning for stochastic and deterministic systems, as previously performed in the KL framework \cite{dupuis2016path,katsoulakis2017scalable,zou2026goal}. We expect that extending the path-space UQ paradigm beyond relative entropy to Wasserstein metrics and the symmetry-aware variants will prove fruitful in a broad range of applications where singular measures, deterministic dynamics, or geometric structure play a central role.

\section{Acknowledgement}
Z. Chen, M. Katsoulakis, B. Zhang are partially funded by AFOSR grant FA9550-21-1-0354. M.K. is partially funded by NSF DMS-2307115.

\bibliographystyle{abbrv}
\bibliography{mybibfile}

@article{mimikos2024score,
  title={Score-based generative models are provably robust: an uncertainty quantification perspective},
  author={Mimikos-Stamatopoulos, Nikiforos and Zhang, Benjamin and Katsoulakis, Markos},
  journal={Advances in Neural Information Processing Systems},
  volume={37},
  pages={63154--63183},
  year={2024}
}

@inproceedings{chen2023sample,
  title={Sample complexity of probability divergences under group symmetry},
  author={Chen, Ziyu and Katsoulakis, Markos and Rey-Bellet, Luc and Zhu, Wei},
  booktitle={International Conference on Machine Learning},
  pages={4713--4734},
  year={2023},
  organization={PMLR}
}

@inproceedings{tahmasebisample,
  title={Sample Complexity Bounds for Estimating Probability Divergences under Invariances},
  author={Tahmasebi, Behrooz and Jegelka, Stefanie},
  year={2024},
  booktitle={Forty-first International Conference on Machine Learning}
}

@article{vincent2011connection,
  title={A connection between score matching and denoising autoencoders},
  author={Vincent, Pascal},
  journal={Neural Computation},
  volume={23},
  number={7},
  pages={1661--1674},
  year={2011},
  publisher={MIT Press}
}

@article{anderson1982reverse,
  title={Reverse-time diffusion equation models},
  author={Anderson, Brian D.O.},
  journal={Stochastic Processes and their Applications},
  volume={12},
  number={3},
  pages={313--326},
  year={1982},
  publisher={Elsevier}
}

@article{song2020score,
  title={Score-based generative modeling through stochastic differential equations},
  author={Song, Yang and Sohl-Dickstein, Jascha and Kingma, Diederik P and Kumar, Abhishek and Ermon, Stefano and Poole, Ben},
  journal={arXiv preprint arXiv:2011.13456},
  year={2020}
}

@inproceedings{
lu2024diffusion,
title={Diffusion Models with Group Equivariance},
author={Haoye Lu and Spencer Szabados and Yaoliang Yu},
booktitle={ICML 2024 Workshop on Structured Probabilistic Inference {\&} Generative Modeling},
year={2024},
url={https://openreview.net/forum?id=65XylEuDLB}
}

@inproceedings{birrell2022structure,
  title={Structure-preserving {GAN}s},
  author={Birrell, Jeremiah and Katsoulakis, Markos and Rey-Bellet, Luc and Zhu, Wei},
  booktitle={International Conference on Machine Learning},
  pages={1982--2020},
  year={2022},
  organization={PMLR}
}

@article{chen2023statistical,
  title={Statistical {G}uarantees of {G}roup-{I}nvariant {GAN}s},
  author={Chen, Ziyu and Katsoulakis, Markos A and Rey-Bellet, Luc and Zhu, Wei},
  journal={SIAM/ASA Journal on Uncertainty Quantification},
  volume={13},
  number={2},
  pages={862--890},
  year={2025},
  publisher={SIAM}
}

@article{song2019generative,
  title={Generative modeling by estimating gradients of the data distribution},
  author={Song, Yang and Ermon, Stefano},
  journal={Advances in Neural Information Processing Systems},
  volume={32},
  year={2019}
}

@inproceedings{songdenoising,
  title={Denoising Diffusion Implicit Models},
  author={Song, Jiaming and Meng, Chenlin and Ermon, Stefano},
  booktitle={International Conference on Learning Representations},
  year={2021}
}

@article{garcia2021n,
  title={E(n) equivariant normalizing flows},
  author={Garcia Satorras, Victor and Hoogeboom, Emiel and Fuchs, Fabian and Posner, Ingmar and Welling, Max},
  journal={Advances in Neural Information Processing Systems},
  volume={34},
  pages={4181--4192},
  year={2021}
}

@inproceedings{kohler2020equivariant,
  title={Equivariant flows: exact likelihood generative learning for symmetric densities},
  author={K{\"o}hler, Jonas and Klein, Leon and No{\'e}, Frank},
  booktitle={International Conference on Machine Learning},
  pages={5361--5370},
  year={2020},
  organization={PMLR}
}

@article{klein2024equivariant,
  title={Equivariant flow matching},
  author={Klein, Leon and Kr{\"a}mer, Andreas and No{\'e}, Frank},
  journal={Advances in Neural Information Processing Systems},
  volume={36},
  year={2024}
}

@inproceedings{hoogeboom2022equivariant,
  title={Equivariant diffusion for molecule generation in 3d},
  author={Hoogeboom, Emiel and Satorras, V{\i}ctor Garcia and Vignac, Cl{\'e}ment and Welling, Max},
  booktitle={International Conference on Machine Learning},
  pages={8867--8887},
  year={2022},
  organization={PMLR}
}

@article{ho2020denoising,
  title={Denoising diffusion probabilistic models},
  author={Ho, Jonathan and Jain, Ajay and Abbeel, Pieter},
  journal={Advances in Neural Information Processing Systems},
  volume={33},
  pages={6840--6851},
  year={2020}
}

@inproceedings{song2020sliced,
  title={Sliced score matching: A scalable approach to density and score estimation},
  author={Song, Yang and Garg, Sahaj and Shi, Jiaxin and Ermon, Stefano},
  booktitle={Uncertainty in Artificial Intelligence},
  pages={574--584},
  year={2020},
  organization={PMLR}
}

@inproceedings{chensampling,
  title={Sampling is as easy as learning the score: theory for diffusion models with minimal data assumptions},
  author={Chen, Sitan and Chewi, Sinho and Li, Jerry and Li, Yuanzhi and Salim, Adil and Zhang, Anru R},
  booktitle={International Conference on Learning Representations},
  year={2023}
}

@article{lee2022convergence,
  title={Convergence for score-based generative modeling with polynomial complexity},
  author={Lee, Holden and Lu, Jianfeng and Tan, Yixin},
  journal={Advances in Neural Information Processing Systems},
  volume={35},
  pages={22870--22882},
  year={2022}
}

@inproceedings{chen2023improved,
  title={Improved analysis of score-based generative modeling: User-friendly bounds under minimal smoothness assumptions},
  author={Chen, Hongrui and Lee, Holden and Lu, Jianfeng},
  booktitle={International Conference on Machine Learning},
  pages={4735--4763},
  year={2023},
  organization={PMLR}
}

@article{conforti2023score,
  title={Score diffusion models without early stopping: finite Fisher information is all you need},
  author={Conforti, Giovanni and Durmus, Alain and Silveri, Marta Gentiloni},
  journal={arXiv preprint arXiv:2308.12240},
  year={2023}
}

@inproceedings{oko2023diffusion,
  title={Diffusion models are minimax optimal distribution estimators},
  author={Oko, Kazusato and Akiyama, Shunta and Suzuki, Taiji},
  booktitle={International Conference on Machine Learning},
  pages={26517--26582},
  year={2023},
  organization={PMLR}
}

@article{de2022convergence,
  title={Convergence of denoising diffusion models under the manifold hypothesis},
  author={De Bortoli, Valentin},
  journal={Transactions on Machine Learning Research},
  year={2022}
}

@article{birrell2024nonlinear,
  title={Nonlinear denoising score matching for enhanced learning of structured distributions},
  author={Birrell, Jeremiah and Katsoulakis, Markos A and Rey-Bellet, Luc and Zhang, Benjamin J and Zhu, Wei},
  journal={Computer Methods in Applied Mechanics and Engineering},
  volume={446},
  pages={118226},
  year={2025},
  publisher={Elsevier}
}

@inproceedings{singhal2024s,
  title={What's the score? Automated Denoising Score Matching for Nonlinear Diffusions},
  author={Singhal, Raghav and Goldstein, Mark and Ranganath, Rajesh},
  booktitle={International Conference on Machine Learning},
  year={2024},
  organization={PMLR}
}

@article{fournier2015rate,
  title={On the rate of convergence in Wasserstein distance of the empirical measure},
  author={Fournier, Nicolas and Guillin, Arnaud},
  journal={Probability theory and related fields},
  volume={162},
  number={3},
  pages={707--738},
  year={2015},
  publisher={Springer}
}

@article{lipman2022flow,
  title={Flow matching for generative modeling},
  author={Lipman, Yaron and Chen, Ricky TQ and Ben-Hamu, Heli and Nickel, Maximilian and Le, Matt},
  journal={The Eleventh International Conference on Learning Representations},
  year={2023}
}

@article{chen2025robust,
  title={Robust generative learning with Lipschitz-regularized $\alpha$-divergences allows minimal assumptions on target distributions},
  author={Chen, Ziyu and Gu, Hyemin and Katsoulakis, Markos A and Rey-Bellet, Luc and Zhu, Wei},
  journal={Information and Inference: A Journal of the IMA},
  volume={14},
  number={4},
  pages={iaaf028},
  year={2025},
  publisher={Oxford University Press}
}

@article{albergo2025stochastic,
  title={Stochastic interpolants: A unifying framework for flows and diffusions},
  author={Albergo, Michael and Boffi, Nicholas M and Vanden-Eijnden, Eric},
  journal={Journal of Machine Learning Research},
  volume={26},
  number={209},
  pages={1--80},
  year={2025}
}

@article{eberle2016reflection,
  title={Reflection couplings and contraction rates for diffusions},
  author={Eberle, Andreas},
  journal={Probability Theory and Related Fields},
  volume={166},
  number={3},
  pages={851--886},
  year={2016},
  publisher={Springer}
}

@article{bentonerror,
  title={Error Bounds for Flow Matching Methods},
  author={Benton, Joe and Deligiannidis, George and Doucet, Arnaud},
  year={2024},
  journal={Transactions on Machine Learning Research}
}

@inproceedings{fukumizu2025flow,
  title={Flow matching achieves almost minimax optimal convergence},
  author={Fukumizu, Kenji and Suzuki, Taiji and Isobe, Noboru and Oko, Kazusato and Koyama, Masanori},
  booktitle={International Conference on Learning Representations},
  volume={2025},
  pages={27608--27640},
  year={2025}
}

@inproceedings{zhou2025error,
  title={An error analysis of flow matching for deep generative modeling},
  author={Zhou, Zhengyu and Liu, Weiwei},
  booktitle={Forty-second International Conference on Machine Learning},
  year={2025}
}

@article{kunkel2026distribution,
  title={Distribution estimation via {F}low {M}atching with {L}ipschitz guarantees},
  author={Kunkel, Lea},
  journal={Journal of Multivariate Analysis},
  pages={105657},
  year={2026},
  publisher={Elsevier}
}

@inproceedings{albergobuilding,
  title={Building Normalizing Flows with Stochastic Interpolants},
  author={Albergo, Michael Samuel and Vanden-Eijnden, Eric},
  year={2023},
  booktitle={The Eleventh International Conference on Learning Representations}
}

@article{dupuis2016path,
  title={Path-{S}pace {I}nformation {B}ounds for {U}ncertainty {Q}uantification and {S}ensitivity {A}nalysis of {S}tochastic {D}ynamics},
  author={Dupuis, Paul and Katsoulakis, Markos A and Pantazis, Yannis and Plech{\'a}c, Petr},
  journal={SIAM/ASA Journal on Uncertainty Quantification},
  volume={4},
  number={1},
  pages={80--111},
  year={2016},
  publisher={SIAM}
}

@article{zou2026goal,
  title={Goal-oriented learning of stochastic dynamical systems using error bounds on path-space observables},
  author={Zou, Joanna and Lie, Han Cheng and Marzouk, Youssef},
  journal={arXiv preprint arXiv:2603.20467},
  year={2026}
}

@article{katsoulakis2017scalable,
  title={Scalable information inequalities for uncertainty quantification},
  author={Katsoulakis, Markos A and Rey-Bellet, Luc and Wang, Jie},
  journal={Journal of Computational Physics},
  volume={336},
  pages={513--545},
  year={2017},
  publisher={Elsevier}
}

@article{harmandaris2016path,
  title={Path-space variational inference for non-equilibrium coarse-grained systems},
  author={Harmandaris, Vagelis and Kalligiannaki, Evangelia and Katsoulakis, Markos and Plech{\'a}{\v{c}}, Petr},
  journal={Journal of Computational Physics},
  volume={314},
  pages={355--383},
  year={2016},
  publisher={Elsevier}
}

@article{shi2024simplified,
  title={Simplified and generalized masked diffusion for discrete data},
  author={Shi, Jiaxin and Han, Kehang and Wang, Zhe and Doucet, Arnaud and Titsias, Michalis},
  journal={Advances in Neural Information Processing Systems},
  volume={37},
  pages={103131--103167},
  year={2024}
}

@misc{flamary2024pot,
  author = {Flamary, R{\'e}mi and Vincent-Cuaz, C{\'e}dric and Courty, Nicolas and Gramfort, Alexandre and Kachaiev, Oleksii and Quang Tran, Huy and David, Laurène and Bonet, Cl{\'e}ment and Cassereau, Nathan and Gnassounou, Th{\'e}o and Tanguy, Eloi and Delon, Julie and Collas, Antoine and Mazelet, Sonia and Chapel, Laetitia and Kerdoncuff, Tanguy and Yu, Xizheng and Feickert, Matthew and Krzakala, Paul and Liu, Tianlin and Fernandes Montesuma, Eduardo},
  title = {POT Python Optimal Transport (version 0.9.5)},
  note = {Available at \url{https://github.com/PythonOT/POT}},
  url =  {https://github.com/PythonOT/POT},
  year = {2024}
}

@article{flamary2021pot,
  author  = {R{\'e}mi Flamary and Nicolas Courty and Alexandre Gramfort and Mokhtar Z. Alaya and Aur{\'e}lie Boisbunon and Stanislas Chambon and Laetitia Chapel and Adrien Corenflos and Kilian Fatras and Nemo Fournier and L{\'e}o Gautheron and Nathalie T.H. Gayraud and Hicham Janati and Alain Rakotomamonjy and Ievgen Redko and Antoine Rolet and Antony Schutz and Vivien Seguy and Danica J. Sutherland and Romain Tavenard and Alexander Tong and Titouan Vayer},
  title   = {POT: Python Optimal Transport},
  journal = {Journal of Machine Learning Research},
  year    = {2021},
  volume  = {22},
  number  = {78},
  pages   = {1-8},
  url     = {http://jmlr.org/papers/v22/20-451.html}
}

\appendix

\section{Empirical score decomposition}\label{appendix:score_decomposition}
We decompose the score function of $\eta^{N,\epsilon}(t)$, appeared in \Cref{thm:generalization_error}, into a monotonic part (gradient of a strongly concave potential) and a residual.
\begin{proposition}\label{lemma:score_decomposition}
Let $z_1, z_2,\dots, z_N$ be $N$ i.i.d. samples on $\R^d$ drawn from $\pi$, then we have
    \begin{equation}
        \nabla_x\log\eta^{N,\epsilon}(x,t) = \frac{-x}{1-e^{-2(\epsilon+t)}}+r^{N,\epsilon}(x,t),
    \end{equation}
    where $\lim_{t\to\infty}\norm{\nabla_x r^{N,\epsilon}}_2=0$ uniformly for all $x\in\R^d$. Moreover, $\E \int_{\R^d}\norm{r^{N,\epsilon}}^2\eta^{N,\epsilon}\diff{x}\leq \frac{e^{-2(\epsilon+t)}}{(1-e^{-2(\epsilon+t)})^2}\Gamma_2(\pi)$,
    where $\Gamma_2(\pi)$ is the second moment of $\pi$, and therefore, $\E \int_{\R^d}\norm{\nabla_x\log\eta^{N,\epsilon}}^2\eta^{N,\epsilon}\diff{x}\leq \frac{4e^{-2(\epsilon+t)}}{(1-e^{-2(\epsilon+t)})^2}\Gamma_2(\pi) + \frac{2d}{1-e^{-2(\epsilon+t)}}$.
\end{proposition}
\begin{proof}
    First, we have
    \begin{align*}
        \eta^{N,\epsilon}(x,t) = \frac{C_{\epsilon,t}}{N}\sum_{i=1}^N e^{-\frac{\norm{x-e^{-(\epsilon+t)}z_i}^2}{2(1-e^{-2(\epsilon+t)})}},
    \end{align*}
    where $C_{\epsilon,t} = [2\pi(1-e^{-2(\epsilon+t)})]^{-\frac{d}{2}}$. Then, we have 
    \begin{align*}
        \nabla_x\log\eta^{N,\epsilon}(x,t) &= \nabla_x\log\left(\sum_{i=1}^N e^{-\frac{\norm{x-e^{-(\epsilon+t)}z_i}^2}{2(1-e^{-2(\epsilon+t)})}}\right)\\
        &= \frac{-1}{1-e^{-2(\epsilon+t)}}\frac{\sum_{i=1}^N(x-e^{-(\epsilon+t)}z_i) e^{-\frac{\norm{x-e^{-(\epsilon+t)}z_i}^2}{2(1-e^{-2(\epsilon+t)})}}}{\sum_{i=1}^N e^{-\frac{\norm{x-e^{-(\epsilon+t)}z_i}^2}{2(1-e^{-2(\epsilon+t)})}}}\\
        &= \frac{-x}{1-e^{-2(\epsilon+t)}} + \frac{1}{1-e^{-2(\epsilon+t)}}\frac{\sum_{i=1}^N(e^{-(\epsilon+t)}z_i) e^{-\frac{\norm{x-e^{-(\epsilon+t)}z_i}^2}{2(1-e^{-2(\epsilon+t)})}}}{\sum_{i=1}^N e^{-\frac{\norm{x-e^{-(\epsilon+t)}z_i}^2}{2(1-e^{-2(\epsilon+t)})}}}\\
        &\coloneqq \frac{-x}{\sigma_t^2} + \frac{\sum_{i=1}^Nw_i(x)a_i}{\sigma_t^2},
    \end{align*}
    where 
    \[\sigma_t^2 = 1-e^{-2(\epsilon+t)},\quad w_i(x) = \frac{e^{-\frac{\norm{x-e^{-(\epsilon+t)}z_i}^2}{2(1-e^{-2(\epsilon+t)})}}}{\sum_{i=1}^N e^{-\frac{\norm{x-e^{-(\epsilon+t)}z_i}^2}{2(1-e^{-2(\epsilon+t)})}}},\quad a_i = e^{-(\epsilon+t)}z_i.\] 
    By a direct calculation and using the fact that $\sum_{i=1}^N w_i(x) \equiv 1$ for any $x\in\R^d$, we have the Jacobian:
    \begin{equation*}
        \nabla_x(\nabla_x\log\eta^{N,\epsilon}) = \frac{-I}{\sigma_t^2} + \frac{1}{2\sigma_t^4}\sum_{i,j=1}^N w_i(x)w_j(x)(a_i-a_j)(a_i-a_j)^\top,
    \end{equation*}
    where 
    \begin{align*}
        \norm{\frac{1}{2\sigma_t^4}\sum_{i,j=1}^N w_i(x)w_j(x)(a_i-a_j)(a_i-a_j)^\top}_2\leq \frac{\max_{i,j}\norm{a_i-a_j}^2}{2\sigma_t^4} = \frac{e^{-2(\epsilon+t)}\max_{i,j}\norm{z_i-z_j}^2}{2\sigma_t^4},
    \end{align*}
    which approaches 0 when $t\to\infty$.

    For the second part, note that by Jensen's inequality, we have
    \begin{align*}
        \norm{r^{N,\epsilon}}^2 = \frac{\norm{\sum_{i=1}^Nw_i(x)a_i}^2}{\sigma_t^4}\leq \frac{\sum_{i=1}^N w_i(x)\norm{a_i}^2}{\sigma_t^4}.
    \end{align*}
    Recalling the formula for $w_i$, we have $w_i(x)\eta^{N,\epsilon} = \frac{C_{\epsilon,t}}{N}e^{-\frac{\norm{x-e^{-(\epsilon+t)}z_i}^2}{2(1-e^{-2(\epsilon+t)})}}$, so that
    \begin{align*}
        \int_{\R^d}\norm{r^{N,\epsilon}}^2\eta^{N,\epsilon}\diff{x}&\leq \int_{\R^d}\frac{\sum_{i=1}^N w_i(x)\norm{a_i}^2}{\sigma_t^4}\eta^{N,\epsilon}\diff{x}\\
        &=\frac{C_{\epsilon,t}}{N\sigma_t^4}\sum_{i=1}^N\int_{\R^d}e^{-\frac{\norm{x-e^{-(\epsilon+t)}z_i}^2}{2(1-e^{-2(\epsilon+t)})}}e^{-2(\epsilon+t)}\norm{z_i}^2\diff{x}\\
        &= \frac{e^{-2(\epsilon+t)}}{N\sigma_t^4}\sum_{i=1}^N\norm{z_i}^2 = \frac{e^{-2(\epsilon+t)}}{(1-e^{-2(\epsilon+t)})^2}\cdot\frac{1}{N}\sum_{i=1}^N\norm{z_i}^2.
    \end{align*}
    Taking expectations for $z_1,\dots,z_N\sim\pi$, we have
    \begin{equation}
        \E \int_{\R^d}\norm{r^{N,\epsilon}}^2\eta^{N,\epsilon}\diff{x} \leq \frac{e^{-2(\epsilon+t)}}{(1-e^{-2(\epsilon+t)})^2}\Gamma_2(\pi).
    \end{equation}
    Therefore, we have
    \begin{align*}
        &\E\int_{\R^d}\norm{\nabla_x\log\eta^{N,\epsilon}}^2\eta^{N,\epsilon}\diff{x}\\
        &\leq \E\int_{\R^d} \frac{2\norm{x}^2}{(1-e^{-2(\epsilon+t)})^2}\eta^{N,\epsilon}\diff{x} + \E\int_{\R^d}2\norm{r^{N,\epsilon}}^2\eta^{N,\epsilon}\diff{x}\\
        &\leq \E\int_{\R^d}\frac{2\norm{x}^2}{(1-e^{-2(\epsilon+t)})^2}\frac{C_{\epsilon,t}}{N}\sum_{i=1}^N e^{-\frac{\norm{x-e^{-(\epsilon+t)}z_i}^2}{2(1-e^{-2(\epsilon+t)})}}\diff{x} + \frac{2e^{-2(\epsilon+t)}}{(1-e^{-2(\epsilon+t)})^2}\Gamma_2(\pi)\\
        &= \frac{2}{(1-e^{-2(\epsilon+t)})^2}
        \E\left(e^{-2(\epsilon+t)}\frac{1}{N}\sum_{i=1}^N\norm{z_i}^2+d(1-e^{-2(\epsilon+t)})\right) + \frac{2e^{-2(\epsilon+t)}}{(1-e^{-2(\epsilon+t)})^2}\Gamma_2(\pi)\\
        &= \frac{4e^{-2(\epsilon+t)}}{(1-e^{-2(\epsilon+t)})^2}\Gamma_2(\pi) + \frac{2d}{1-e^{-2(\epsilon+t)}}.
    \end{align*}
\end{proof}

\section{Lemmas for \Cref{thm:generalization_error}}\label{appendix:WUP}
The following lemma provides a bound for the ISM error. A version of the result was established in \cite[Proposition B.2]{mimikos2024score} for bounded domains; we give the proof here using our \Cref{lemma:exponential_decay} for  $\R^d$.
\begin{lemma}\label{lemma:DSM_1}
    Let $\pi_i$ ($i=1,2$) be two probability measures in $\R^d$ and $\rho_i$ the corresponding solutions to \eqref{eq:OUevolution} with $\rho_i(0) = \pi_i$. There exists a dimensional constant $C_d>0$ such that 
    \begin{equation}
        \abs{\CJ_I(\rho_2,\mathbf{s}_\theta)-\CJ_I(\rho_1,\mathbf{s}_\theta)}\leq C_d T \sup_{t\in[0,T]}\mathcal{W}_1(\rho_1(t),\rho_2(t))\norm{\mathbf{s}_\theta}^2_{C^2(\R^d\times[0,T])}\leq C_d T \mathcal{W}_1(\pi_1,\pi_2)\norm{\mathbf{s}_\theta}^2_{C^2(\R^d\times[0,T])}.
    \end{equation}
\end{lemma}
\begin{proof}
    Using \eqref{eq:ISM} and letting $g_\theta = \norm{\mathbf{s}_\theta}_2^2+2\Div(\mathbf{s}_\theta)$, we have 
    \begin{align*}
        \abs{\CJ_I(\rho_2,\mathbf{s}_\theta)-\CJ_I(\rho_1,\mathbf{s}_\theta)} 
        &= \abs{\int_0^T\int_{\R^d}\left(\norm{\mathbf{s}_\theta}_2^2+2\Div(\mathbf{s}_\theta)\right)\diff{(\rho_2(s)-\rho_1(s))}\diff{s}}\\
        &= \abs{\int_0^T\int_{\R^d}g_\theta\diff{(\rho_2(s)-\rho_1(s))}\diff{s}}\\
        &\leq \sup_{0\leq t\leq T}\norm{\nabla g_\theta}_2 \int_0^T \mathcal{W}_1(\rho_2(s),\rho_1(s))\diff{s}\\
        &\leq T \sup_{0\leq t\leq T}\norm{\nabla g_\theta}_2 \sup_{0\leq t\leq T}\mathcal{W}_1(\rho_2(t),\rho_1(t)).
    \end{align*}
    Since there exists some $C_d>0$, such that $\sup_{0\leq t\leq T}\norm{\nabla g_\theta}_2\leq C_d \norm{\mathbf{s}_\theta}^2_{C^2(\R^d\times[0,T])}$, the lemma is proved using \Cref{lemma:exponential_decay} for $\sup_{0\leq t\leq T}\mathcal{W}_1(\rho_2(t),\rho_1(t))$.
\end{proof}

\begin{lemma}\label{lemma:DSM_2}
Let $m_0\in\CP(\R^d)$ be a probability density that has a finite first moment and $m_0\log(m_0)\in L^1(\R^d)$ and $\rho:\R^d\times[0,T]\to\R$ solves the PDE in \eqref{eq:OUevolution} with $\rho(0) = m_0$. Then we have  
    \begin{equation}\label{eq:score_difference}
        4\norm{\nabla\sqrt{\rho}}_2^2 = \int_{\R^d} m_0\log(m_0)-\rho(T)\log(\rho(T))\diff{x} + dT.
    \end{equation}    
\end{lemma}
\begin{proof}
    Multiplying the equation $\partial_t \rho-\Delta \rho -\Div( x\cdot\rho)=0$ with $f'(\rho)$ for a smooth function $f\in C^\infty\left((0,\infty)\right)$ and integrating over space and time, we obtain
    \begin{equation}\label{eq:score_difference_1}
        \int_{\R^d}\int_{0}^T f'(\rho)\partial_t \rho \diff{t}\diff{x} = \int_{0}^T\int_{\R^d}f'(\rho)\Delta\rho \diff{x}\diff{t} + \int_{0}^T\int_{\R^d}f'(\rho)\Div(x\cdot\rho)\diff{x}\diff{t}.
    \end{equation}
    For the left-hand side of \eqref{eq:score_difference_1}, we have 
    \begin{equation*}
        \int_{\R^d}\int_{0}^T f'(\rho)\partial_t \rho \diff{t}\diff{x} = \int_{\R^d} f(\rho(T))-f(m_0)\diff{x}.
    \end{equation*}
    For the first term on the right-hand side of \eqref{eq:score_difference_1}, using integration by parts, we have
    \begin{align*}
        \int_{0}^T\int_{\R^d}f'(\rho)\Delta\rho \diff{x}\diff{t} &= \int_{0}^T\int_{R^{d-1}} \sum_{i=1}^d f'(\rho)\frac{\partial \rho}{\partial x_i}\Big|_{x_i=-\infty}^\infty \diff{(x\backslash\{x_i\})}\diff{t} - \int_{0}^T\int_{\R^d}f''(\rho)\norm{\nabla\rho}^2 \diff{x}\diff{t},
    \end{align*}
    where we use $x\backslash\{x_i\}$ to denote the integration over $\R^d$ except for the $i$-th coordinate $x_i$. For each $i\in[d]$ and $t\in(0,T]$, $\rho$ is bounded and hence is $f'(\rho)$ since $f$ is smooth, and $\frac{\partial\rho}{\partial x_i}\to0$ as $x_i\to\pm\infty$ by using formula \eqref{eq:transition_probability}: $\rho(x,t)=\int P_{t}(x,y)\diff{m_0(y)}$ and applying the dominated convergence theorem. So we have 
    \begin{equation*}
        \int_{0}^T\int_{\R^d}f'(\rho)\Delta\rho \diff{x}\diff{t} = - \int_{0}^T\int_{\R^d}f''(\rho)\norm{\nabla\rho}^2 \diff{x}\diff{t}.
    \end{equation*}
    For the second term on the right-hand side of \eqref{eq:score_difference_1}, applying an integration by parts again, we have 
    \begin{align*}
        \int_{0}^T\int_{\R^d}f'(\rho)\Div(x\cdot\rho)\diff{x}\diff{t}
        &= \int_{0}^T\int_{R^{d-1}} \sum_{i=1}^d \rho f'(\rho)x_i\Big|_{x_i=-\infty}^\infty \diff{(x\backslash\{x_i\})}\diff{t} - \int_{0}^T\int_{\R^d}\rho f''(\rho)x\cdot\nabla\rho \diff{x}\diff{t}.
    \end{align*}
    For each $i\in[d]$ and $t\in(0,T]$, $x_i\rho\to0$ as $x_i\to\pm\infty$ using the formula $\rho(x,t)=\int P_{t}(x,y)\diff{m_0(y)}$ and applying the dominated convergence theorem combined with the assumption that $m_0$ has a finite first moment. So we have
    \begin{equation*}
        \int_{0}^T\int_{\R^d}f'(\rho)\Div(x\cdot\rho)\diff{x}\diff{t} = -\int_{0}^T\int_{\R^d}\rho f''(\rho)x\cdot\nabla\rho \diff{x}\diff{t}.
    \end{equation*}
    Combining all the results and pick in particular $f(x)=x\log x$ and $f''(x)=x^{-1}$, and we have 
    \begin{align*}
        \int_{\R^d} \rho(T)\log(\rho(T))-m_0\log(m_0)\diff{x} 
        &= - \int_{0}^T\int_{\R^d}\frac{\norm{\nabla\rho}^2}{\rho} \diff{x}\diff{t} -\int_{0}^T\int_{\R^d}x\cdot\nabla\rho \diff{x}\diff{t}\\
        &= - \int_{0}^T\int_{\R^d}\frac{\norm{\nabla\rho}^2}{\rho} \diff{x}\diff{t} + d\int_{0}^T\int_{\R^d}\rho \diff{x}\diff{t}\\
        &= - \int_{0}^T\int_{\R^d}\frac{\norm{\nabla\rho}^2}{\rho} \diff{x}\diff{t} +  dT,
    \end{align*}
    where the second equality follows from an integration by parts. Notice that $4\norm{\nabla\sqrt{\rho}}_2^2 = \int_{0}^T\int_{\R^d}\frac{\norm{\nabla\rho}^2}{\rho} \diff{x}\diff{t}$ and we finish the proof.
\end{proof}

\section{WUP on the torus $R\mathbb{T}^d$}\label{appendix:torus}
In this section of the appendix, we provide a proof for the Wasserstein Uncertainty Propagation Theorem when the domain is a torus $\Omega=R\mathbb{T}^d$, using the reflection coupling introduced in \cite{eberle2016reflection}. The resulting Lipschitz bound of test functions has exponential decay in time, and is sharper than Eq. (10) in \cite[Theorem 3.1]{mimikos2024score}. As a result, we will be able to derive a sharper $\mathcal{W}_1$ generalization bound than \cite[Theorem 3.3]{mimikos2024score}. For simplicity, we assume that $\boldsymbol{\sigma}(t)\equiv\sqrt{2}$ as in \cite{mimikos2024score}.

\begin{theorem}[Wasserstein Uncertainty Propagation on $R\mathbb{T}^d$]\label{thm:WUP_torus}
    Let $\Omega = R\mathbb{T}^d$ and  $\mathbf{b}_1,\mathbf{b}_2:\Omega\times[0,T]\to\Omega$ be differentiable vector fields and $m_1,m_2\in\CP(\Omega)$. Probability measures $\rho_i$ $(i=1,2)$ are given by
\begin{equation}\label{eq:WUP_1_torus}
    \partial_t \rho_i-\Delta\rho_i-\Div(\rho_i \mathbf{b}_i)=0,\,\, \rho_i(0)=m_i.
\end{equation}
Let
\begin{equation}\label{wq:WUP_error_torus}
    \epsilon\coloneq\norm{\mathbf{b}_2-\mathbf{b}_1}_{L^2(\rho_2)}\coloneq \left(\int_0^T\int_{\Omega} \abs{(\mathbf{b}_2-\mathbf{b}_1)(x,t)}^2 \rho_2(x,t)\diff{x}\diff{t}\right)^\frac{1}{2}.
\end{equation}
Then if $c\coloneq\sup_{x\in\Omega, t\in[0,T]}\norm{\nabla \mathbf{b}_1(x,t)}_{2}<\infty$, we have 
    \begin{equation}\label{eq:WUP1_torus}
    \mathcal{W}_1(\rho_2(T),\rho_1(T))\leq C_2e^{-C_1T}(\mathcal{W}_1(m_2,m_1)+\sqrt{T}\epsilon),
    \end{equation}
    where $C_1 = \frac{2}{D^2}e^{-cD^2/8}$, $C_2 = 2e^{cD^2/8}$, and $D=\pi R\sqrt{d}$.
\end{theorem}
The proof is identical to that of the full $\R^d$ case in \Cref{thm:WUP}, except that we can use the following lemma to obtain an exponential-decay-in-time Lipschitz bound for the test functions when the domain is bounded.
\begin{lemma}\label{lemma:gradient_bound_0_torus}
Suppose $\phi$ is the solution of the backward equation
\begin{equation}\label{eq:KBE_general_0}
    \partial_t\phi+\Delta\phi+\mathbf{b}\cdot\nabla\phi=0\,\, \text{in}\,\,\Omega\times[0,T),\quad\phi(x,T)=\psi(x)\,\, \text{in}\,\,\Omega,
    \end{equation} 
    where $a\in\R$ is a constant, $\norm{\nabla \mathbf{b}(x,t)}_{2}\leq c$ uniformly in $x\in\Omega$ and $t\in[0,T)$, and $\psi$ is $L$-Lipschitz. Then $\phi(x,t)$ is $C_2e^{-C_1(T-t)}L$-Lipschitz for any $t\in[0,T)$, where $C_1 = \frac{2}{D^2}e^{-cD^2/8}$, $C_2 = 2e^{cD^2/8}$, and $D=\pi R\sqrt{d}$ is the diameter of $\Omega$.
\end{lemma}
\begin{proof}
    Again, using the Feynman-Kac formula, the solution of \eqref{eq:KBE_general_0} can be expressed as
    \begin{equation}
        \phi(x,t) = \E\left[\psi(x_T)\mid x_t=x\right],
    \end{equation}
    where $\diff{x_t} = \mathbf{b}(x_t,t)\diff{t}+ \sqrt{2}\diff{W_t}$. Let $y_t$ satisfies the same SDE as $x_t$, then we have
    \begin{equation}\label{eq:F-K_torus_0}
        \abs{\phi(x,t)-\phi(y,t)} \leq\E\left[\abs{\psi(x_T)-\psi(y_T)}\mid x_t=x,y_t=y\right],
    \end{equation}
    where the conditional expectation is taken jointly on $x_T$ and $y_T$. Since $\psi$ is $L$-Lipschitz, $\abs{\psi(x_T)-\psi(y_T)}\leq L\norm{x_T-y_T}_2$. Set $z_t = x_t - y_t$, $r_t = \norm{z_t}$ and for $r_t>0$, $e_t = z_t/r_t$. Define a second Brownian motion via reflection across the hyperplane orthogonal to $e_t$:
    \begin{equation}
        \diff{\widetilde{W}_t} = (I-2e_te_t^\top)\diff{W}_t\quad\text{for}\,\, t<\tau,
    \end{equation}
    where $\tau = \inf\{t:r_t=0\}$ is the coupling time; after $\tau$, we set $\widetilde{W}_t = W_t$ and $x_t=y_t$. By L\'evy's characterization, $\widetilde{W}_t$ is a standard Brownian motion. Define $\diff{y_t} = \mathbf{b}(y_t,t)\diff{t}+ \sqrt{2}\diff{\widetilde{W}_t}$, then $x_t$ and $y_t$ still have the law of solutions to the same SDE. For $t<\tau$, 
    \begin{equation}
        \diff{z_t} = [\mathbf{b}(x_t,t)-\mathbf{b}(y_t,t)]\diff{t} + \sqrt{2}(\diff{W}_t - \diff{\widetilde{W}_t}) = [\mathbf{b}(x_t,t)-\mathbf{b}(y_t,t)]\diff{t} + 2\sqrt{2}e_te_t^\top\diff{W}_t.
    \end{equation}
    The noise has rank one, aligned with $e_t$, with covariance $\inner{z_t^{(i)}}{z_t^{(j)}} = 8 e^{(i)}_te^{(j)}_t\diff{t}$. Apply It\^o's formula for $r_t=\norm{z_t}$, using $\partial_i\norm{z_t} = z_t^{(i)}/\norm{z_t}$ and $\partial_{ij}\norm{z_t}=(\delta_{ij}-z_t^{(i)}z_t^{(j)}/\norm{z_t}^2)/\norm{z_t}$:
    \begin{equation}
        \diff{r_t} = \inner{e_t}{\diff{z}_t} + \frac{1}{2}\sum_{ij}\frac{\delta_{ij}-e^{(i)}_te^{(j)}_t}{r_t}\cdot 8 e^{(i)}_te^{(j)}_t\diff{t}.
    \end{equation}
    The second term on the right-hand side vanishes, since
    \begin{equation}
        \sum_{ij}(\delta_{ij}-e^{(i)}_te^{(j)}_t)e^{(i)}_te^{(j)}_t = \norm{e_t}^2-\norm{e_t}^4 = 0,
    \end{equation}
    as $\norm{e_t}=1$. For the drift term $\inner{e_t}{\diff{z}_t}$, note that
    \begin{equation}
        \abs{\inner{e_t}{\mathbf{b}(x_t,t)-\mathbf{b}(y_t,t)}}\leq\norm{\mathbf{b}(x_t,t)-\mathbf{b}(y_t,t)}\leq c\norm{z_t}=cr_t,
    \end{equation}
    and $\inner{e_t}{2\sqrt{2}e_te_t^\top\diff{W}_t}=2\sqrt{2}e_t^\top\diff{W}_t = 2\sqrt{2}\diff{B}_t$, where $B_t$ is a 1D standard Brownian motion. Hence, 
    \begin{equation}\label{eq:SDE_rs}
        \diff{r}_t = \beta_t\diff{t} + 2\sqrt{2}\diff{B}_t, \quad \abs{\beta_t}\leq cr_t.
    \end{equation}
    Next, we construct $f\in C^2([0,D])$ that is increasing, concave, and satisfies $\mathcal{L}_rf\leq-C_1f$, where
    \begin{equation}
        \mathcal{L}_rf\coloneqq crf'(r) + 4f''(r)
    \end{equation}
    is the worst-case generator of $r_t$.

    Define 
    \begin{equation}
        \varphi(r)=e^{-cr^2/8}, \quad\Phi(r) = \int_0^r \varphi(u)\diff{u}, \quad r\in[0,D].
    \end{equation}
    Since $\varphi$ is decreasing with $\varphi(0)=1$ and $\varphi(D)= e^{-cD^2/8}$, we have
    \begin{equation}
        \varphi(D)r\leq\Phi(r)\leq r.
    \end{equation}
    Set 
    \begin{equation}
        C_1 = \frac{2}{D^2}e^{-cD^2/8}=\frac{2\varphi(D)}{D^2},
    \end{equation}
    and define
    \begin{equation}
        g(r) = 1-\frac{C_1}{4}\int_0^r\frac{\Phi(u)}{\varphi(u)}\diff{u}.
    \end{equation}
    We verify that $g(D)\geq 3/4$ as follows. Using $\Phi(u)\leq u$ and $1/\varphi(u)\leq 1/\varphi(D)$ on $[0,D]$:
    \begin{equation}
        \int_0^D\frac{\Phi(u)}{\varphi(u)}\diff{u}\leq\frac{1}{\varphi(D)}\int_0^D u\diff{u}=\frac{D^2}{2\varphi(D)}.
    \end{equation}
    Therefore, 
    \begin{equation}
        g(D) \geq 1-\frac{C_1}{4}\frac{D^2}{2\varphi(D)} = 1- \frac{1}{4}\cdot\frac{2\varphi(D)}{D^2}\cdot\frac{D^2}{2\varphi(D)}=\frac{3}{4}.
    \end{equation}
    Hence, $g$ is decreasing on $[0,D]$ with $g(0)=1$ and $g(D)\geq\frac{3}{4}$.

    We define $f$ as
    \begin{equation}
        f(r) = \int_0^r\varphi(u)g(u)\diff{u}, \quad r\in[0,D].
    \end{equation}
    Then $f(0)=0$, $f$ is increasing and concave, since $\varphi g$ is decreasing and positive. We then show that $\mathcal{L}_rf\leq-C_1f$ for such $f$.

    Note that
    \begin{equation}
        f'(r) = \varphi(r)g(r),\quad f''(r) = \varphi'(r)g(r) + \varphi(r)g'(r).
    \end{equation}
    Since $\varphi'(r) = -\frac{cr}{4}\varphi(r)$, we have
    \begin{equation}
        f''(r) = -\frac{cr}{4}\varphi(r)g(r) + \varphi(r)g'(r).
    \end{equation}
    Thus,
    \begin{equation}
        \mathcal{L}_rf = cr\varphi(r)g(r) + 4\left[-\frac{cr}{4}\varphi(r)g(r) + \varphi(r)g'(r)\right] = 4\varphi(r)g'(r).
    \end{equation}
    From the definition of $g$, $g'(r) = -\frac{C_1}{4}\Phi(r)/\varphi(r)$, so
    \begin{equation}
        \mathcal{L}_rf(r) = -C_1\Phi(r).
    \end{equation}
    Since $g(u)\leq 1$,
    \begin{equation}
        f(r) = \int_0^r\varphi(u)g(u)\diff{u}\leq \int_0^r\varphi(u)\diff{u} = \Phi(r).
    \end{equation}
    Combining, we have $\mathcal{L}_rf\leq-C_1f$.

    Next, we prove a useful inequality for $f$. Since $f'(r) = \varphi(r)g(r)$ and both factors are decreasing on $[0,D]$,
    \begin{equation}
        \varphi(D)g(D)\leq f'(r)\leq \varphi(0)g(0) = 1.
    \end{equation}
    Integrating from $0$ to $r$ and note that $f(0) = 0$, we have
    \begin{equation}\label{eq:inequality_for_f}
        \varphi(D)g(D)r\leq f(r) \leq r.
    \end{equation}
    Using $g(D)\geq 3/4$ and $\varphi(D)= e^{-cD^2/8}$, we have
    \begin{equation}
        \frac{3}{4}e^{-cD^2/8}r\leq f(r)\leq r,
    \end{equation}
    hence,
    \begin{equation}\label{eq:biLipschitz}
        r\leq \frac{4}{3}e^{cD^2/8}f(r)\leq 2e^{cD^2/8}f(r).
    \end{equation}

    Apply It\^o's formula to $f(r_t)$ using the SDE for $r_t$ in \eqref{eq:SDE_rs}:
    \begin{equation}
        \diff{f(r_t)} = [\beta_tf'(r_t) + 4f''(r_t)]\diff{r_t} + 2\sqrt{2}f'(r_t)\diff{B_t}.
    \end{equation}
    Since $f'\geq 0$, $f''\geq 0$ (concavity), and $\abs{\beta_t}\leq cr_t$,
    \begin{equation}
        \beta_tf'(r_t) + 4f''(r_t) \leq cr_tf'(r_t) + 4f''(r_t) = \mathcal{L}_rf(r_t)\leq -C_1f(r_t).
    \end{equation}
    Therefore,
    \begin{equation}
        \diff{f(r_t)}\leq -C_1f(r_t)\diff{t} + 2\sqrt{2}f'(r_t)\diff{B_t}.
    \end{equation}
    Multiplying both sides by $e^{C_1t}$, we have
    \begin{equation}
        \diff{\left(e^{C_1t}f(r_t)\right)} \leq 2\sqrt{2}e^{C_1t}f'(r_t)\diff{B_t}.
    \end{equation}
    The right side is a true martingale ($f'\leq 1$ is bounded). Taking expectations (conditioned on $x_t=x$ and $y_t =y$),
    \begin{equation}
        \E[e^{C_1T}f(r_T) - e^{C_1t}f(r_t)]\leq 0,
    \end{equation}
    so that $\E[f(r_T)]\leq \E[e^{-C_1(T-t)}f(r_t)] = e^{-C_1(T-t)}f(\norm{x-y})$.

    Combining this bound with \eqref{eq:biLipschitz}, we have
    \begin{equation}
        \E r_{T}\leq 2e^{cD^2/8}\E f(r_T)\leq 2e^{cD^2/8}e^{-C_1(T-t)}f(\norm{x-y})\leq 2e^{cD^2/8}e^{-C_1(T-t)}\norm{x-y},
    \end{equation}
    where the last inequality is due to \eqref{eq:inequality_for_f}. Plugging it into \eqref{eq:F-K_torus_0}, we have
    \begin{equation}
        \abs{\phi(x,t)-\phi(y,t)}\leq L\E r_T\leq 2e^{cD^2/8}e^{-C_1(T-t)}\norm{x-y}.
    \end{equation}
\end{proof}
\end{document}